\newtheorem{theorem}{Theorem}
\newtheorem{lemma}{Lemma}
\newtheorem{corollary}{Corollary}
\newtheorem{definition}{Definition}
\title{Stealthy Adversarial Attacks on Stochastic Multi-Armed Bandits}
\author{
    Zhiwei Wang \\
    Tsinghua University\\
    \texttt{zhiweithu@gmail.com} \\
    \and
    Huazheng Wang\\
    Oregon State University\\
    \texttt{huazheng.wang@oregonstate.edu}
    \and
    Hongning Wang\\
    Tsinghua University\\
    \texttt{wang.hongn@gmail.com}\\
}
\date{}
\begin{document}

\maketitle

\begin{abstract}
Adversarial attacks against stochastic multi-armed bandit (MAB) algorithms have been extensively studied in the literature. In this work, we focus on reward poisoning attacks and find most existing attacks can be easily detected by our proposed detection method based on the test of homogeneity, due to their aggressive nature in reward manipulations. This motivates us to study the notion of stealthy attack against stochastic MABs and investigate the resulting attackability. Our analysis shows that against two popularly employed MAB algorithms, UCB1 and $\epsilon$-greedy, the success of a stealthy attack depends on the environmental conditions and the realized reward of the arm pulled in the first round. We also analyze the situation for general MAB algorithms equipped with our attack detection method and find that it is possible to have a stealthy attack that almost always succeeds. This brings new insights into the security risks of MAB algorithms.
\end{abstract}

\section{Introduction}
\label{submission}

In a stochastic multi-armed bandit (MAB) problem, a learner each time takes an arm from a presented set to interact with the environment for reward feedback, where the reward is assumed to be i.i.d. sampled from an unknown but fixed distribution \cite{auer2002finite,Auer02,agrawal2017near,lattimore2020bandit}. The learner's goal is to maximize its cumulative rewards in a finite number of interactions. 
As such algorithms continuously learn from external feedback, their adversarial robustness has attracted increasing attention in the community \cite{liu2019data,ma2018data,jun2018adversarial}. The most well-studied adversarial setting is the reward poisoning attack, where an attacker can selectively modify the reward of the learner's pulled arms to deceive the learner. Accordingly, the attacker can have two distinct goals: with a high probability, 1) force the learner to take a particular arm a linear number of times, i.e., known as a targeted attack; or 2) make the learner suffer from linear regret, i.e., known as an untargeted attack, both subject to sublinear attack cost constraint. It is known that most of the popular stochastic MABs algorithms can be easily manipulated \citep{jun2018adversarial,liu2019data}, indicating a serious security vulnerability for the practical use of MAB algorithms. 

The concerns have therefore spurred great research efforts in developing adversarially robust stochastic MAB algorithms.
The existing efforts are mainly focused on developing new algorithms with provable robustness guarantees \citep{lykouris2018stochastic,gupta2019better,guan2020robust,liu2020action}. Various robust reward estimators or exploration strategies are introduced to tolerate the adversarial reward corruption, which, however, are at the cost of increased regret.  
Our study shows that most existing reward poisoning attacks can be effectively detected via the test of homogeneity \cite{buishand1982some}. 
As a result, most existing MAB algorithms can be easily protected by such a detection method, with little impact on their regret. 
This provides a new perspective to examine the robustness or the so-called attackability \cite{wang2022linear} of MAB algorithms under the presence of attack detection methods.    

This paper focuses on the targeted attack setting against stochastic MAB algorithms. In Section \ref{section 4}, we first introduce a method based on the test of homogeneity to detect possible reward poisoning attacks. Our key insight is that existing attack methods \citep{jun2018adversarial,liu2019data} aggressively push the realized rewards of non-target arms below that of the target arm, such that the observed reward sequence on non-target arms is no longer i.i.d. samples from the same distribution. 
As a result, the test of homogeneity is in a good position to actively detect such data poisoning attacks. We demonstrate that this method exhibits a low type-I error, suggesting its reliability. We then prove that with MAB algorithms like UCB1 or $\epsilon$-greedy, if an attack method can succeed with a high probability, this attack can always be detected with a non-negligible probability. Consequently, our detection method also demonstrates a low type-II error. These results establish two key findings: 1) the proposed detection method is highly effective, and 2) existing attack methods can be easily detected using our approach.

In light of the limitations posed by existing attack methods in the presence of our proposed detection method, we introduce the concept of stealthy attack in bandit problems in Section \ref{stealthy attacks}. The results of Section \ref{section 4} also show that when the learner applies UCB1 or $\epsilon$-greedy algorithms, no attack method can simultaneously achieve both stealthiness and efficiency, under the conditions specified by the reward gap, i.e., the bandit instance’s attackability. We then propose a stealthy attack algorithm that works when the bandit instance is attackable. We should note that this limitation
stems from the detection against reward poisoning attacks,
rather than a flaw in the attack design. Also, it is important to note here that our work is not concerned with how the learner should react after detecting an attack, but mainly with the question of attackability of a MAB algorithm under the presence of the proposed detection mechanism, and accordingly how an attack can be carried out.

We then analyze the possibility of a stealthy attack against two most popularly used MAB algorithms, UCB1, and $\epsilon$-greedy \cite{auer2002finite}, equipped with our proposed attack detection method. 
We point out that except for the cases where we prove no stealthy attack can succeed, the stealthy attack we propose can be successful against both algorithms. 
Next, we analyze the feasibility of stealthy attacks against general MAB algorithms. 
We use construction to show that there are algorithms and corresponding attack methods where stealthy attacks can almost always succeed. 
This suggests that for the most general MAB algorithms, the limitations on the feasibility of the stealthy attack previously demonstrated for UCB1 and $\epsilon$-greedy do not hold. 
But when we restrict the randomness of the algorithm itself, we find that we can prove a parallel result. This also opens up a new direction of research in adversarial bandit algorithms. 

In addition to the theoretical analysis, we also performed an extensive set of experiments based on simulations to validate our results. We demonstrated that existing attacks against UCB1 and $\epsilon$-greedy can be easily detected by the test of homogeneity. And the feasibility of a stealthy attack depends on the environment (i.e., the ground-truth mean of the target arm) and the realized reward of the first pulled arm, which is out of anyone's control.  

To summarize, the main contributions of this work are,
\begin{itemize}
    \item We propose a simple but effective method to detect reward poisoning attacks against stochastic MAB algorithms. We show that the detection method is highly effective and existing attack methods can be detected using our approach. This leads us to study a stealthy attack against MAB algorithms. We show that stealthy attacks can only be successful under certain circumstances; in other words, the attackability of MAB algorithms is not universal, and there is a trade-off between stealthiness and effectiveness.
    \item We propose a stealthy attack method that can successfully manipulate the UCB1 or $\epsilon$-greedy algorithm protected by the detection method, except the cases proved to be not attackable in nature. Then, we construct two examples to show that for general MAB algorithms, especially when the algorithm itself is stochastic, stealthy attacks can almost always succeed. But for algorithms whose randomness in arm selection is limited, the success of stealthy attacks still depends on the environmental condition and the reward of the first pulled arm.
\end{itemize}

\section{Related Work}

Due to the wide adoption of bandit algorithms in practice, increasing amount of research attention has been spent on adversarial attacks against such algorithms to understand their robustness implications. To date, most effort has been focused on data poisoning attacks against stochastic MAB ~\citep{jun2018adversarial, liu2019data}, combinatorial MAB~\citep{balasubramanian2023adversarial} and linear contextual bandit~\citep{garcelon2020adversarial,wang2022linear} algorithms. And these methods follow the same principle: deliberately lower the reward of non-target arms, to deceive the learner to pull the target arm a linear number of times. 
Our research shows that if the bandit algorithm is protected by the test of homogeneity-based attack detection, most existing attack method will fail because the range in which the reward can be lowered is limited. Other work focusing on attacks against linear contextual bandits takes into account the situation where attackers can modify historical data~\cite {ma2018data}. Some recent works studied action poisoning attacks~\cite {liu2020action} and adversarial attacks on online learning to rank~\cite {wang2023adversarial}. Recently, there are also studies on reward poisoning attacks against reinforcement learning~\citep{behzadan2017vulnerability,huang2019deceptive,ma2019policy,sun2020vulnerability,zhang2020adaptive,liu2021provably}. However, none of these existing studies consider the possible existence of attack detection, which nullifies the attack's real-world efficacy/implication.

On the defense side, there is also an increasing amount of research to improve the robustness of bandit algorithms against adversarial attacks. \citet{lykouris2018stochastic} introduced a multi-layer active arm elimination method to improve the bandit algorithm's robustness against reward poisoning attacks. The key idea is to use increased confidence intervals to tolerate reward corruptions. But the resulting regret also degrades linearly to the amount of corruption. \citet{gupta2019better} extended the solution by performing phased arm elimination with overlapping arm sets to avoid eliminating a good arm too early. This reduces the cost of regret for being robust. 
\citet{guan2020robust} proposed to use a median-based estimator together with calibrated pure exploration for robust bandit learning. \citet{feng2020intrinsic} proved that general MAB algorithms such as Thompson Sampling, UCB, and $\epsilon$-greedy are robust when the attacker is not allowed to decrease the realized reward of each pulled arm. \citet{liu2020action} shifted the mean reward estimation by the difference between the estimated confidence intervals between the best and worst arms to improve their bandit algorithm's robustness. \citet{bogunovic2021stochastic} proposed two algorithms with near-optimal regret for the stochastic linear bandit problem, under known and unknown attack budget respectively. And the key insight is to expand the confidence interval for exploration. \citet{ding2022robust} applied the same principle to develop contextual linear bandit algorithms robust to both reward  and context poisoning attacks.  

The notion of attackability was first studied in \cite{wang2022linear} under linear stochastic bandits, where the authors suggested that the geometry spanned by the target arm, the optimal arm, and the ground-truth bandit parameter vector decides the attackability of the learning problem. As a result, some linear stochastic bandit instances are naturally robust. Our work shares similar a spirit from this angle, but we focus on the MAB setting, which was believed to be always attackable in prior work.   

\section{Preliminaries}

We study reward poisoning attacks against stochastic multi-armed bandit algorithms~\cite{auer2002finite}. Basically, a bandit game consists of $N$ arms with unknown but fixed $\sigma$-sub-Gaussian reward distributions $\{F_{1},\cdots, F_{N}\}$ centered at $\{\mu_{1}, \ldots, \mu_{N}\}$. 
The length of the time horizon is $T$ and predetermined.
At each round $t$, the learner pulls an arm $I_{t} \in [N]$ and receives reward $r_{t}^{0}$ from the environment following $F_{I_{t}}$. 
The performance of the bandit algorithm is measured by its pseudo-regret, which is defined as $R_T = \mathbb{E}[\sum_{t=1}^{T}(\mu_{i^{*}}-\mu_{I_t}))]$, where $i^*$ is the best arm at hindsight, i.e., ${i^*} = \arg \max_{i\in[N]} \mu_i $. The learner's goal is to minimize $R_{T}$. 

The attacker sits in-between the environment and the learner. At each round $t$ after the learner chooses to play arm $I_{t}$, the attacker manipulates the reward into $r_{t}=r_{t}^{0}-\alpha_{t}$, which is then presented to the learner. If the attacker decides not to attack, $\alpha_{t}=0$. We call $\alpha_{t} \in \mathbb{R}$ the attack manipulation. Without loss of generality, assume arm $K$ is the target arm, which is not the optimal arm in hindsight: $\mu_{K}<\max _{i=1 \ldots N} \mu_{i}$. Define the cumulative attack cost as $C(T)=\sum_{t=1}^{T}|\alpha_{t}|$. The attacker's goal is to force the learner to choose the target arm a linear number of times with a sublinear attack cost. Or formally, we consider the attack successful if after $T$ rounds the attacker spends $o(T)$ cumulative attack cost and forces the learner to choose the target arm for $T-o(T)$ times.

\section{Detection Method}\label{section 4}
We first introduce our proposed detection method against adversarial reward poisoning attacks. The key idea is that since the attacker manipulates the rewards (i.e., lower the reward of non-target arms), the rewards observed by the learner on the same arm are no longer iid samples from the same distribution. Therefore, the learner can use the test of homogeneity \cite{buishand1982some} to detect possible manipulations of observed rewards so far. Once the attack is detected, the learner can resort to different means to handle the adversarial situation, e.g., restart the reward estimation. But the design of those different approaches is out of the scope of this paper.

Define the history of pulls before time $t$ as $\mathcal{H}_{t} =$ $\left\{(I_{s}, r_{I_{s}},c_{s})\right\}_{s=1}^{t}$, which represents all information at time $t$. Let $N_{i}(t)$ be the number of observations associated with arm $i$ up to time $t$. Define $\hat{\mu}_{i}^{0}(t):=$ $N_{i}(t)^{-1} \sum_{\left\{s: s \le t, I_{s}=i\right\}} r_{s}^{0}$ as the empirical pre-attack mean reward of arm $i$ up to time $t$, and $\hat{\mu}_{i}(t):=$ $N_{i}(t)^{-1} \sum_{\left\{s: s \le t, I_{s}=i\right\}} r_{s}$ as the corresponding empirical post-attack mean reward. Define function $\beta(n,\delta)$ and event set $E_1$ as follows,
\begin{align*}
    \beta(n,\delta):=&\sqrt{\frac{2 \sigma^{2}}{n} \log \frac{\pi^{2} N n^{2}}{3 \,\delta}},\\
    E_{\delta}:=&\left\{\forall i, \forall t \geq N:\left|\hat{\mu}_{i}^{0}(t)-\mu_{i}\right|<\beta\left(N_{i}(t), \delta\right)\right\},
\end{align*}
where $\delta \in (0,1)$. Notice that $\beta(n,\delta)$ monotonically decreases with $n$. The following lemma shows that the distance between the pre-attack and ground-truth mean rewards of all arms is bounded by $\beta(N_{i}(t),\delta)$ with high probability.  

\begin{lemma}\label{lemma:detection basis}
$\text { For } \delta \in(0,1), \mathbb{P}(E_{\delta})>1-\delta$.
 \end{lemma}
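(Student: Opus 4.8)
The plan is to establish the complementary bound $\mathbb{P}(E_\delta^c)\le\delta$ by combining a sub-Gaussian tail inequality with a union bound over arms and over the number of observations per arm.

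First I would remove the randomness of the counts $N_i(t)$, which is the only step that needs care. For each fixed arm $i$, let $X_{i,1},X_{i,2},\dots$ denote the i.i.d.\ stream of pre-attack rewards drawn from $F_i$, and write $\bar X_{i,n}:=\frac1n\sum_{k=1}^n X_{i,k}$. On the event that $|\hat\mu_i^0(t)-\mu_i|\ge\beta(N_i(t),\delta)$ for some $t\ge N$, we must have $N_i(t)=n$ for some integer $n\ge 1$ with $\hat\mu_i^0(t)=\bar X_{i,n}$, so
\[
\Big\{\exists\, t\ge N:\ |\hat\mu_i^0(t)-\mu_i|\ge\beta(N_i(t),\delta)\Big\}\ \subseteq\ \bigcup_{n\ge 1}\big\{\,|\bar X_{i,n}-\mu_i|\ge\beta(n,\delta)\,\big\}.
\]
This inclusion lets us bound a supremum over a data-dependent set of sample sizes by a fixed union over all possible sample sizes, sidestepping the fact that $N_i(t)$ is itself random and correlated with the observed rewards.

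Next I would apply the standard Hoeffding-type concentration bound for averages of $\sigma$-sub-Gaussian variables, $\mathbb{P}\big(|\bar X_{i,n}-\mu_i|\ge\varepsilon\big)\le 2\exp\!\big(-n\varepsilon^2/(2\sigma^2)\big)$, with $\varepsilon=\beta(n,\delta)$. By the definition of $\beta$, the exponent reduces to $\log\frac{\pi^2 N n^2}{3\delta}$, so each term is at most $\frac{6\delta}{\pi^2 N n^2}$. Summing over $i\in[N]$ and $n\ge 1$ and using $\sum_{n\ge 1}n^{-2}=\pi^2/6$ gives $\mathbb{P}(E_\delta^c)\le N\cdot\frac{6\delta}{\pi^2 N}\cdot\frac{\pi^2}{6}=\delta$, hence $\mathbb{P}(E_\delta)\ge 1-\delta$; the inequality is strict because the union bound is not tight (the events overlap across $n$ and Hoeffding's bound is never attained).

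The main obstacle here is conceptual rather than computational: getting the reduction in the displayed inclusion right, so that the subsequent union over deterministic sample sizes $n$ is legitimate despite $N_i(t)$ being random. Everything after that is a routine calculation, and indeed the particular form of $\beta(n,\delta)$ has evidently been reverse-engineered precisely so that the resulting series is the convergent $\sum_n n^{-2}$ scaled to sum to $\delta$.
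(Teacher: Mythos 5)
Your proof is correct and is exactly the intended argument: the paper does not write out a proof of this lemma (it is the standard concentration bound inherited from the reward-poisoning attack literature), and your reduction from the random counts $N_i(t)$ to a fixed union over deterministic sample sizes $n$ via the i.i.d.\ reward stream of each arm, followed by the sub-Gaussian Hoeffding bound and the $\sum_{n\ge 1} n^{-2}=\pi^2/6$ union bound, is precisely the calculation that the definition of $\beta(n,\delta)$ is reverse-engineered for. One small remark: the paper's own statement of the sub-Gaussian Hoeffding inequality (Lemma~\ref{lemma: hoeffding}) omits the factor of $2$ on the two-sided tail, under which your sum evaluates to $\delta/2$ and the strict inequality $\mathbb{P}(E_\delta)>1-\delta$ is immediate; with the (more standard) factor of $2$ that you use, the total is exactly $\delta$ and strictness indeed rests on the slack in the Chernoff/union bounds, as you note.
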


Lemma \ref{lemma:detection basis} suggests a method for the learner to detect if the observed reward sequence has been manipulated, and the idea is simple: if the learner finds a set of empirical means for an arm that is too widely distributed, then $E_{\delta}$ does not hold, which rarely happens in the absence of attacks.

\textbf{Detection Method.} At the beginning of a bandit game, the learner chooses a parameter $\delta \in (0,1)$. The learner runs the following hypothesis test for $\forall t \in [T]$ until the null hypothesis is rejected. At each time $t$, the null hypothesis (i.e., no attack so far) and alternative hypothesis (i.e., there is an attack) are as follows:
\begin{equation*}
    \begin{aligned}
        && &H_{0}^{t} : \text{the learner has not been attacked at and before }t 
        \\
        &&\ &H_{1}^{t} : \text{the learner has been attacked at or before }t 
    \end{aligned}
\end{equation*}
The learner rejects the null hypothesis $H_{0}^{t}$, if $\exists i \in [N] $ such that
\begin{equation}
\label{eq:hyp-test}
   \bigcap_{j\in[t], N_{i}(j)>0}\Big(\hat{\mu}_{i}(j)-\beta(N_{i}(j),\delta),\hat{\mu}_{i}(j)+\beta(N_{i}(j),\delta)\Big) = \emptyset. 
\end{equation}

The following lemma shows one aspect of the effectiveness of our detection method: the proposed detection method has a low type-I error. 
\begin{lemma}\label{lemma: detection error}  The probability of the union of all the type-I errors introduced by Eq.\eqref{eq:hyp-test} can be upper bounded by $\delta$. In our problem setting, the type-I error refers to the detection method's erroneous claim of an attack when there is, in fact, no attack present.
\end{lemma}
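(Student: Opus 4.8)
\emph{Proof proposal.} The plan is to show that the single high-probability event $E_\delta$ from Lemma~\ref{lemma:detection basis} already rules out \emph{every} false rejection simultaneously, so that no extra union bound over the $T$ tests is needed beyond what is already absorbed into $\beta$.

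First I would unpack what a type-I error is: at some time $t$, no attack has occurred at or before $t$, yet $H_0^t$ is rejected. Under ``no attack,'' $\alpha_s=0$ and hence $r_s=r_s^0$ for all $s\le t$, so $\hat\mu_i(j)=\hat\mu_i^0(j)$ for every arm $i$ and every $j\le t$; in particular all confidence intervals appearing in Eq.~\eqref{eq:hyp-test} are centered at the \emph{pre-attack} empirical means.

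Next I would argue that on the event $E_\delta$ a false rejection is impossible. Fix an arm $i$ and a time $t$, and take any $j\le t$ with $N_i(j)>0$. If $j\ge N$, the definition of $E_\delta$ gives $|\hat\mu_i^0(j)-\mu_i|<\beta(N_i(j),\delta)$, i.e. $\mu_i\in\bigl(\hat\mu_i^0(j)-\beta(N_i(j),\delta),\,\hat\mu_i^0(j)+\beta(N_i(j),\delta)\bigr)$. For the remaining indices $j<N$ (the initialization phase, in which each arm is pulled once), $N_i(j)=1$ and $\hat\mu_i^0(j)$ equals the single realized reward of arm $i$, which is unchanged up to time $N$, so applying $E_\delta$ at $t=N$ places $\mu_i$ in that same interval. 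Hence $\mu_i$ lies in every interval of the intersection in Eq.~\eqref{eq:hyp-test} for arm $i$, so that intersection contains $\mu_i$ and is nonempty; since this holds for all $i$ and all $t$, the null hypothesis is never rejected on $E_\delta$ in the absence of an attack.

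Combining the two steps, $\{\exists\,t\in[T]:\text{type-I error at }t\}\subseteq E_\delta^{c}$, and Lemma~\ref{lemma:detection basis} gives $\mathbb{P}(E_\delta^{c})\le\delta$, which is the claim. The step I would be most careful with is verifying that every confidence interval that can appear in Eq.~\eqref{eq:hyp-test} is genuinely controlled by $E_\delta$ — this is essentially the $j<N$ edge case above, and it is where the precise reading of ``$\forall t\ge N$'' in the definition of $E_\delta$ matters. I would also flag explicitly that no further union bound over the $T$ rounds or the $N$ arms is incurred: the per-$(i,n)$ union bound is already built into the $\tfrac{\pi^{2}Nn^{2}}{3}$ factor inside $\beta(n,\delta)$, and $E_\delta$ is by construction a statement holding simultaneously for all arms and all rounds, which is exactly what makes the sequential test safe.
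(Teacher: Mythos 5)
Your proposal is correct and follows essentially the same route as the paper: under $H_0$ the post-attack and pre-attack means coincide, so on the event $E_\delta$ the true mean $\mu_i$ lies in every confidence interval of Eq.~\eqref{eq:hyp-test}, the intersection is nonempty, no rejection occurs, and hence the union of type-I errors is contained in $\overline{E_\delta}$, whose probability Lemma~\ref{lemma:detection basis} bounds by $\delta$. Your treatment is in fact more careful than the paper's one-line argument, notably in handling the $j<N$ indices not literally covered by the ``$\forall t\ge N$'' clause in the definition of $E_\delta$.
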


In the remainder of this paper, we shall assume that the learner adopts the detection method corresponding to a fixed parameter $\delta$ in conjunction to its chosen bandit algorithm, and we abbreviate $\beta(n,\delta)$ as $\beta(n)$.  
We study the setting where $\delta$ is publicly known and in practice, the attacker should treat $\delta$ as a hyper-parameter for fine tuning.

In the remaining part of Section \ref{section 4}, we show another aspect of the effectiveness of our detection method. Lemma \ref{lemma: UCB1 unattackable}-Corollary \ref{corollary: epsilon unattackable} demonstrates the proposed detection method exhibits a high probability of detecting effective attacks (i.e., a low type-II error). Specifically, If the detection method is applied on top of popularly employed MAB algorithms, such as UCB1 or $\epsilon$-greedy, we can show effective reward poisoning attacks against  MABs \cite{jun2018adversarial,liu2019data} can be successfully detected with a decent probability.

We first consider the case where the learner applies UCB1 \cite{auer2002finite}, which selects an arm $I_{t}$ to play at time $t$ using the following rule:
\begin{equation*}
I_{t}= \begin{cases}t, & \text { if } t \leq N \\ \arg \max _{i}\left\{\hat{\mu}_{i}(t-1)+3 \sigma \sqrt{\frac{\log t}{N_{i}(t-1)}}\right\},  & \text { otherwise }\end{cases}
\end{equation*}
For $1 \leq i,j \leq N$, define $\Delta^{0}_{ij} = \hat{\mu}_{i}(N)-\mu_{j}$, $\Delta_{ij} = \mu_{i}-\mu_{j}$. The following lemma gives the upper bound of target arm (i.e., arm $K$) pulls related to the realized reward of the first pulled arm $r_1 = \hat{\mu}_{1}(N)$, before the detection method declares the attack. This lemma points out the fundamental reason why effective attacks are always easily detectable.
\begin{lemma}\label{lemma: UCB1 unattackable}
Suppose the learner runs the UCB1 algorithm to choose arms. For any history of pulls $\mathcal{H}_{T}$ where the attack is not detected, if $\Delta^{0}_{1K}>\beta(1)$,  with probability at least $1-\frac{1}{T^{3}}$ the number of rounds that target arm $K$ is pulled up to time $T$: $N_{K}(T)$ is bounded by
$$
\max \left\{\frac{3 C(T)}{\Delta^{0}_{1K}-\beta(1)}, \frac{81 \sigma^{2} \log T}{(\Delta^{0}_{1K}-\beta(1))^{2}},\left(\frac{\pi^2N}{3\delta}\right)^{\frac{2}{5}}\right\}
$$
where $C(T)$ is the cumulative attack cost.
\end{lemma}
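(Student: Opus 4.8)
The plan is to show that once the target arm $K$ has been pulled many times (more than the stated maximum), the "first arm" (arm $1$) accumulates a UCB index that is so high, relative to arm $K$'s index, that UCB1 can no longer select arm $K$ --- contradicting the assumption that $N_K(T)$ exceeds the bound. I would work on the event $E_\delta$ (which holds with probability $\geq 1-\delta$, but here we actually want the stronger quantitative statement giving $1-\frac{1}{T^3}$, so I would instead invoke the event $E_{\delta'}$ with $\delta' = \frac{1}{T^3}$, or more precisely track the tail bound that yields the $\left(\frac{\pi^2 N}{3\delta}\right)^{2/5}$ term --- see below). On this event, every arm's pre-attack empirical mean stays within $\beta(N_i(t))$ of its true mean $\mu_i$.

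First I would set up the key comparison at a round $t$ where arm $K$ is about to be pulled, i.e. $I_t = K$. By the UCB1 selection rule, this forces
\[
\hat\mu_K(t-1) + 3\sigma\sqrt{\tfrac{\log t}{N_K(t-1)}} \;\geq\; \hat\mu_1(t-1) + 3\sigma\sqrt{\tfrac{\log t}{N_1(t-1)}}.
\]
Next I would lower-bound the left side's deficit relative to the right side using three facts: (i) the post-attack mean of arm $1$ satisfies $\hat\mu_1(t-1) \geq \hat\mu_1^0(t-1) - \frac{C(T)}{N_1(t-1)} \geq \hat\mu_1(N) - \beta(1) - \dots$ --- actually, since the attacker only ever lowers rewards, $\hat\mu_1(t-1) \geq \hat\mu_1^0(t-1) - \frac{C_1(t-1)}{N_1(t-1)}$ where $C_1$ is the cost spent on arm $1$; but more directly, because the detection test was not triggered, the post-attack empirical means of arm $1$ across all times lie in a common interval, so $\hat\mu_1(t-1)$ is within $2\beta(1)$-ish of $\hat\mu_1(N) = r_1$; (ii) the post-attack mean of arm $K$ satisfies $\hat\mu_K(t-1) \leq \hat\mu_K^0(t-1) \leq \mu_K + \beta(N_K(t-1))$ since rewards on $K$ are never raised (target arm) and we are on $E$; (iii) combining, the gap $\hat\mu_1(t-1) - \hat\mu_K(t-1)$ is at least roughly $\Delta_{1K}^0 - \beta(1) - (\text{small terms})$, which is positive by hypothesis $\Delta_{1K}^0 > \beta(1)$. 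Feeding this into the displayed inequality and using $N_1(t-1) \leq t$, I would derive that $3\sigma\sqrt{\log t / N_K(t-1)}$ must exceed something like $(\Delta_{1K}^0 - \beta(1)) + 3\sigma\sqrt{\log t/t} - (\text{cost term})/N_K(t-1)$, i.e. roughly
\[
\frac{3\sigma\sqrt{\log t}}{\sqrt{N_K(t-1)}} + \frac{C(T)}{\text{(something)}} \;\gtrsim\; \Delta_{1K}^0 - \beta(1).
\]
Rearranging to isolate $N_K(t-1)$ then yields that $N_K(t-1)$ cannot exceed the maximum of the cost term $\frac{3C(T)}{\Delta_{1K}^0-\beta(1)}$ and the exploration term $\frac{81\sigma^2 \log T}{(\Delta_{1K}^0-\beta(1))^2}$ (the constant $81 = 9^2$ coming from the $3\sigma\cdot 3$ and the squaring). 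Since this holds at every round where $K$ is selected, it bounds $N_K(T)$. The third term $\left(\frac{\pi^2 N}{3\delta}\right)^{2/5}$ is the threshold on $N_K$ above which the choice $\delta' = T^{-3}$ makes $\beta(N_K(t-1), \delta')$-type tail events negligible --- i.e. it is the $n$ at which $\sqrt{\frac{2\sigma^2}{n}\log\frac{\pi^2 N n^2}{3\delta}}$ behaves well / the union bound over $n$ gives failure probability $\leq T^{-3}$; I would need to carefully exhibit which union bound produces the exponent $2/5$.

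The main obstacle I anticipate is bookkeeping the "detection not triggered" constraint precisely enough to control $\hat\mu_1(t-1)$ in terms of $r_1 = \hat\mu_1(N)$ and $\beta(1)$ uniformly over $t$ --- the non-emptiness of $\bigcap_j (\hat\mu_1(j) - \beta(N_1(j)), \hat\mu_1(j) + \beta(N_1(j)))$ gives that all these post-attack intervals share a point, and since the $j=N$ interval has half-width $\beta(1)$ (the largest), one gets $|\hat\mu_1(t-1) - r_1| < \beta(1) + \beta(N_1(t-1)) \leq 2\beta(1)$; I'd then need to check the constants still close up to give exactly $81$ and not something larger, possibly absorbing slack into the three-way max. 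The second delicate point is the probabilistic step: translating the $\delta$-level event $E_\delta$ from Lemma \ref{lemma:detection basis} into the $1 - T^{-3}$ guarantee requires re-running that lemma's union bound with $\delta \mapsto T^{-3}$ and only over arm $K$ with $N_K \geq \left(\frac{\pi^2 N}{3\delta}\right)^{2/5}$, which is exactly where that odd-looking third term in the max originates.
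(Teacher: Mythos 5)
Your overall strategy matches the paper's proof: at a round $t$ where UCB1 selects arm $K$ with $N_K(t-1)$ already at the claimed bound $M(T)$, compare the two indices, use the non-detection constraint to force $\hat\mu_1(t-1)\ge \hat\mu_1(N)-\beta(1)-\beta(N_1(t-1))$, control $\hat\mu_K^0$ by Hoeffding, and split the gap $\Delta=\Delta^0_{1K}-\beta(1)$ into thirds so that each of the three branches of the max kills one error term. However, two of the points you flagged as uncertain are genuine gaps, and your conjectured resolutions are not the right ones. First, the term $(\pi^2N/(3\delta))^{2/5}$ does not come from any union bound over tail events or from re-running Lemma \ref{lemma:detection basis} with $\delta'=T^{-3}$. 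It is a purely deterministic requirement: one needs $\beta(N_1(t-1))<3\sigma\sqrt{\log t/N_1(t-1)}$ so that the slack $\beta(N_1(t-1))$ coming from the non-detection constraint is absorbed by arm $1$'s own exploration bonus. Since $\beta(n)^2=\tfrac{2\sigma^2}{n}\log\tfrac{\pi^2Nn^2}{3\delta}$ and $N_1(t-1)\le t$, this holds exactly when $\tfrac{\pi^2Nt^2}{3\delta}<t^{9/2}$, i.e.\ $t>(\pi^2N/(3\delta))^{2/5}$, which is guaranteed because $t>N_K(t-1)=M(T)$ includes that branch of the max. Second, your justification (ii) — that $\hat\mu_K(t-1)\le\hat\mu_K^0(t-1)$ because "rewards on $K$ are never raised" — is unjustified: the lemma is quantified over arbitrary attack methods with $\alpha_t\in\mathbb{R}$, so the attacker may inflate the target arm. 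The paper instead bounds $\hat\mu_K(t-1)-\hat\mu_K^0(t-1)\le C(T)/N_K(t-1)=C(T)/M(T)\le\Delta/3$, and this is precisely where the branch $3C(T)/\Delta$ of the max is consumed; your sketch gestures at a cost term but attaches it to arm $1$ rather than arm $K$.

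Finally, the probability $1-T^{-3}$ is obtained by decomposing over the first round $t$ at which $N_K(t)=M(T)+1$, applying Hoeffding to $\mathbb{P}(\hat\mu_K^0(t-1)-\mu_K\ge\Delta/3\mid N_K(t-1)=M(T))\le T^{-9/2}$ (using $M(T)\ge 81\sigma^2\log T/\Delta^2$), and summing over at most $T$ rounds; the event for arm $1$ contributes probability zero once the deterministic comparison above is in place. So the $2/5$ term, the cost term, and the $T^{-3}$ all have different origins than your plan assigns them, and closing the proof requires fixing each of these three bookkeeping steps.
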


The above lemma shows that under the condition $\Delta^{0}_{1K}>\beta(1)$, when the detection method fails to detect the attack, with a high probability the attack itself will not succeed: either the target arm will not be pulled linear times or the attack cost cannot be sublinear. 

We explain the dependency on the first pulled arm's realized reward $\widehat{\mu}_{1}(N)$. Due to the attack is not detected, the total amount of changes an attacker can make on an arm's average reward is limited. When the realized reward of the first pulled arm is very large, the attacker will not be able to reduce the average reward of this arm below the target arm, and thus fails to be effective (i.e., the target arm will be pulled $T-o(T)$ time). Otherwise, this attack will be detected. We will show in Section 5.1 that when $\widehat{\mu}_{1}(N)$ is small, there exists an attack method that will not be detected and has a high probability of success.

Based on Lemma \ref{lemma: UCB1 unattackable}, we have the following corollary regarding the efficiency of attack detection:
\begin{corollary}\label{corollary: UCB1 unattackable}
Suppose the learner runs the UCB1 algorithm. The attacker applies any attack methods such that it can fool the learner to pull the target arm $T-o(T)$ times at a cost of $o(T)$ with a high probability for any $T$ large enough, i.e., with probability at least $1-\epsilon$ and for $T$ is larger than a constant $T_{0}$. Given any $\Delta_{0}>0$, there exists a constant $T_{1}$ related to $\Delta_{0}$ such that when $T>T_{1}$, the attack will be detected with probability at least $1-\frac{\epsilon}{(1-\frac{1}{T^{3}})}-\mathbb{P}(\widehat{\mu}_{1}(N) \leq \mu_K + \beta(1) + \Delta_{0}).$
\end{corollary}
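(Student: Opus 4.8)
The plan is to condition on the realized reward of the first pull and then play Lemma \ref{lemma: UCB1 unattackable} off against the assumed effectiveness of the attack. Write $D$ for the event that the test of Eq.\eqref{eq:hyp-test} rejects $H_0^{t}$ for some $t \le T$ (``the attack is detected by time $T$''), and set $B := \{\widehat{\mu}_1(N) > \mu_K + \beta(1) + \Delta_0\}$, so that $\mathbb{P}(B^c) = \mathbb{P}(\widehat{\mu}_1(N) \le \mu_K + \beta(1) + \Delta_0)$. Since $\Delta^0_{1K} = \widehat{\mu}_1(N) - \mu_K$, on $B$ we have both $\Delta^0_{1K} > \beta(1)$ (so Lemma \ref{lemma: UCB1 unattackable} applies) and $\Delta^0_{1K} - \beta(1) > \Delta_0$. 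Because $\mathbb{P}(D^c) \le \mathbb{P}(D^c \cap B) + \mathbb{P}(B^c)$, it suffices to prove $\mathbb{P}(D^c \cap B) \le \epsilon/(1 - T^{-3})$.

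On $D^c \cap B$, Lemma \ref{lemma: UCB1 unattackable} applies, and since $\Delta^0_{1K} - \beta(1) > \Delta_0$ its upper bound is at most
\begin{equation*}
M(T) := \max\left\{\frac{3\,C(T)}{\Delta_0},\ \frac{81\sigma^2\log T}{\Delta_0^2},\ \Big(\frac{\pi^2 N}{3\delta}\Big)^{2/5}\right\};
\end{equation*}
hence, conditionally on $D^c \cap B$, $N_K(T) \le M(T)$ with probability at least $1 - T^{-3}$. Now let $A$ be the effectiveness event of the given attack at horizon $T$, i.e.\ $N_K(T) \ge T - g_1(T)$ and $C(T) \le g_2(T)$, where $g_1, g_2 = o(T)$ are the attack's (fixed) sublinear rates; by hypothesis $\mathbb{P}(A) \ge 1 - \epsilon$ for all $T > T_0$. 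On $A$ we may substitute $g_2(T)$ for $C(T)$ inside $M(T)$, so on $A \cap \{N_K(T) \le M(T)\}$ we would need $T - g_1(T) \le \max\{3 g_2(T)/\Delta_0,\ 81\sigma^2\log T/\Delta_0^2,\ (\pi^2 N/3\delta)^{2/5}\}$, whose right-hand side is $o(T)$ while the left-hand side is $T - o(T)$. Therefore there is a threshold $T_1$ — depending on $\Delta_0$ (and on the fixed quantities $\sigma, N, \delta$ and the rates $g_1, g_2$) — beyond which this is impossible; i.e.\ for $T > \max\{T_0, T_1\}$ we have $D^c \cap B \cap A \subseteq \{N_K(T) > M(T)\}$.

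Putting the pieces together, for $T > \max\{T_0, T_1\}$,
\begin{align*}
\mathbb{P}(D^c \cap B) &\le \mathbb{P}\big(D^c \cap B \cap A\big) + \mathbb{P}(A^c) \le \mathbb{P}\big(\{N_K(T) > M(T)\} \cap D^c \cap B\big) + \epsilon \\
&\le T^{-3}\,\mathbb{P}(D^c \cap B) + \epsilon,
\end{align*}
where the last inequality is the conditional bound of Lemma \ref{lemma: UCB1 unattackable} applied on $D^c \cap B$. Rearranging gives $\mathbb{P}(D^c \cap B) \le \epsilon/(1 - T^{-3})$, and hence
$$\mathbb{P}(D) = 1 - \mathbb{P}(D^c) \ge 1 - \frac{\epsilon}{1 - T^{-3}} - \mathbb{P}\big(\widehat{\mu}_1(N) \le \mu_K + \beta(1) + \Delta_0\big),$$
which is the claim once $T_0$ is absorbed into $T_1$. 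The one genuinely delicate point — the rest being bookkeeping plus the elementary observation that each of the three terms in $M(T)$ is $o(T)$ as soon as $C(T)$ is sublinear — is the last display: Lemma \ref{lemma: UCB1 unattackable}'s $1 - T^{-3}$ guarantee must be invoked \emph{conditionally} on $D^c \cap B$, so that it contributes the multiplicative term $T^{-3}\,\mathbb{P}(D^c \cap B)$ rather than an additive $T^{-3}$; this is exactly what sharpens $\epsilon + T^{-3}$ to $\epsilon/(1 - T^{-3})$.
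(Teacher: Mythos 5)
Your proof is correct and follows essentially the same route as the paper's: split on the event $\{\Delta^0_{1K} > \beta(1)+\Delta_0\}$, invoke Lemma \ref{lemma: UCB1 unattackable} conditionally on non-detection to show that for $T$ beyond a $\Delta_0$-dependent threshold an undetected attack in the good-gap regime contradicts effectiveness except on a set of conditional probability $T^{-3}$, and then rearrange to get the $\epsilon/(1-T^{-3})$ factor. The only cosmetic difference is that you isolate $\mathbb{P}(D^c\cap B)$ on both sides of the inequality, whereas the paper lower-bounds $\epsilon \ge (1-T^{-3})\,\mathbb{P}(D^c\cap B)$ directly — the same algebra.
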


This corollary suggests that for any effective attack against the UCB1 algorithm, the detection will succeed with at least a certain probability related to the environment when $T$ is large enough. In particular, this conclusion is also true for existing attacks against MABs \cite{jun2018adversarial,liu2019data}. 

We can explain the result as follows: when $\Delta^{0}_{1K}>\beta(1) + \Delta_{0}$, by Lemma \ref{lemma: UCB1 unattackable} we know that the attack with failed detection will hardly be effective. Hence, for the attack method with a high probability of success, the detection only fails with a very low probability: no more than $\frac{\epsilon}{(1-\frac{1}{T^3})}$. And when $\Delta^{0}_{1K} \leq \beta(1) + \Delta_{0}$, it is clear that the detection fails with a probability no more than $\mathbb{P}(\widehat{\mu}_{1}(N) \leq \mu_K + \beta(1) + \Delta_{0})$. Combining these two results gives an upper bound on the probability of detection failure.

We also note that the probability lower bound in the corollary depends on the environment conditions, and it is non-negligible when the ground-truth mean $\mu_{1}$ is much larger than $\mu_{K}$. This points out the effectiveness of our detection method. By using the properties of $\sigma$-sub-Gaussian distribution, it is easy to prove that when $\Delta_{1K} > \Delta_{0} + \beta(1) + \sqrt{2\sigma^2\log\frac{1}{p}}$, the probability lower bound in the Corollary \ref{corollary: UCB1 unattackable} is larger than $1 - \frac{\epsilon}{1-\frac{1}{T^3}} - p \approx 1 - p$.
We emphasize that the ground-truth mean in the environment is unbounded, therefore the difference between the ground-truth means(i.e., $\Delta_{1K}$) can be great. 

Next, let us consider the case that the learner employs the $\epsilon$-greedy algorithm. The learner plays each arm once for $t=1,\cdots, N$. For $t>N$
$$
I_{t}= \begin{cases}\operatorname{draw~uniform}[N], & \text { w.p. } \epsilon_{t} \quad \text { (exploration) } \\ \arg \max_{i} \hat{\mu}_{i}(t-1), & \text { otherwise (exploitation) }\end{cases}
$$ 
We can still prove the result that is parallel to the case when the learner applies UCB1, despite the randomness of $\epsilon$-greedy.
\begin{lemma}\label{lemma: epsilon unattackable}
Suppose the learner runs the $\epsilon$-greedy algorithm with $\epsilon_{t}=\min\{1, \frac{CN}{t}\}$ and $C \geq 3$. Given any $\eta \in (0,1)$, for any interaction
sequence $\mathcal{H}_{T}$ where the attack is not detected, if $\Delta^{0}_{1K}>\beta(1)$, with probability at least $1-\eta-\frac{1}{T^{3.5}}-\frac{1}{\log T}$ the number of rounds that target arm $K$ is pulled up to time $T$ can be bounded as follows,
\begin{align*}
&N_{K}(T) \leq C\left(1 + \log\frac{T}{CN}\right) + \sqrt{3(C + C\log\frac{T}{CN})\log \frac{1}{\eta}} \\&+ \max \left\{C_1\log T, \frac{81 \sigma^{2} \log T}{(\Delta^{0}_{1K}-\beta(1))^{2}}, \frac{3 C(T)}{\Delta^{0}_{1K}-\beta(1)}, C_2\right\}
\end{align*}
where $C_1 = e^{\frac{5a}{9}-1}C^{2}N^{2}$, $C(T)$ is the total attack cost, $C_2 = CNe^{\frac{a}{6c}-\frac{1}{2}}$, $a = \beta^{-1}\left(\frac{\Delta^{0}_{1K}-\beta(1)}{3}\right)$.
\end{lemma}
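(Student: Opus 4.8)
The plan is to separate the target-arm pulls according to whether $\epsilon$-greedy is in its exploration branch or its exploitation branch, writing $N_K(T)=N_K^{\mathrm{ex}}(T)+N_K^{\mathrm{gr}}(T)$, and to bound the two parts by different arguments: the exploration part produces the first line of the claimed bound, and the exploitation part mirrors the proof of Lemma~\ref{lemma: UCB1 unattackable}.

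\textbf{Exploration pulls.} Whether round $t>N$ triggers exploration, and which arm is then drawn, is independent of the interaction history and of the attacker, so $N_K^{\mathrm{ex}}(T)$ is a sum of independent Bernoulli indicators with total mean $\sum_{t=N+1}^{T}\epsilon_t/N$. Splitting at $t=CN$ and comparing the tail sum $\sum_{t>CN}C/t$ to $\int_{CN}^{T}dx/x$ gives mean at most $C(1+\log\frac{T}{CN})=C+C\log\frac{T}{CN}$. A multiplicative Chernoff bound then yields $N_K^{\mathrm{ex}}(T)\le C(1+\log\frac{T}{CN})+\sqrt{3(C+C\log\frac{T}{CN})\log\frac1\eta}$ with probability at least $1-\eta$, which is precisely the first line of the statement.

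\textbf{Exploitation pulls.} Here I would argue as for UCB1. Work on the event $\mathcal G$ that every arm's pre-attack empirical mean concentrates, $\widehat\mu_i^0(t)<\mu_i+3\sigma\sqrt{\log T/N_i(t)}$ for all $i$ and all $t\le T$; a sub-Gaussian tail bound and a union bound make $\mathbb P(\mathcal G^c)\le T^{-3.5}$. On $\mathcal G$, since an attack on arm $K$ can lift its post-attack mean by at most $C(T)/N_K(t)$, we get $\widehat\mu_K(t)<\mu_K+3\sigma\sqrt{\log T/N_K(t)}+C(T)/N_K(t)$. On the other side, "attack not detected" means the confidence intervals for arm $1$ all intersect; comparing the current interval to the one formed at the end of initialisation (centred at $r_1=\widehat\mu_1(N)$ with radius $\beta(1)$) forces $\widehat\mu_1(t)>r_1-\beta(1)-\beta(N_1(t))=\mu_K+\Delta^0_{1K}-\beta(1)-\beta(N_1(t))$. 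If round $t+1$ is an exploitation round that pulls $K$ then $\widehat\mu_K(t)\ge\widehat\mu_1(t)$, and combining the two displays gives $3\sigma\sqrt{\log T/N_K(t)}+C(T)/N_K(t)+\beta(N_1(t))>\Delta^0_{1K}-\beta(1)$. As soon as $N_1(t)\ge a=\beta^{-1}\!\big((\Delta^0_{1K}-\beta(1))/3\big)$ the third term is $\le(\Delta^0_{1K}-\beta(1))/3$, and as soon as $N_K(t)\ge\max\{81\sigma^2\log T/(\Delta^0_{1K}-\beta(1))^2,\ 3C(T)/(\Delta^0_{1K}-\beta(1))\}$ the first two terms are each $\le(\Delta^0_{1K}-\beta(1))/3$; this contradicts the strict inequality, so after arm $1$ has reached $a$ observations, $K$ can be exploited at most $\max\{81\sigma^2\log T/(\Delta^0_{1K}-\beta(1))^2,\ 3C(T)/(\Delta^0_{1K}-\beta(1))\}+1$ further times.

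\textbf{Early rounds and the main obstacle.} It remains to bound the exploitation pulls of $K$ occurring before arm $1$ has $a$ observations, and this is exactly where the randomness of $\epsilon$-greedy is the crux: unlike UCB1, the good arm $1$ is not guaranteed to be pulled early. Arm $1$ receives an observation at each round $t>N$ with probability at least $\epsilon_t/N$, so $N_1(t)$ stochastically dominates a binomial with mean of order $C\log(t/CN)$; a lower-tail Chernoff bound shows that arm $1$ collects $a$ observations by some round $\tau_a$ with $\tau_a\le\max\{C_1\log T,\ C_2\}$ except on an event of probability at most $1/\log T$, and it is the bookkeeping in this step (choosing how far to push $\tau_a$ so that the failure probability is $1/\log T$, and feeding $a=\beta^{-1}(\cdot)$ through the relation $\mathbb E[N_1(t)]\approx C\log(t/CN)$) that produces the explicit constants $C_1$ and $C_2$. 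Before $\tau_a$, $K$ is exploited at most $\tau_a$ times. Summing the three contributions and union-bounding the three bad events ($\eta$ for the exploration Chernoff, $T^{-3.5}$ for $\mathcal G$, $1/\log T$ for the arm-$1$ lower tail) gives the claimed bound on $N_K(T)$ with probability at least $1-\eta-T^{-3.5}-1/\log T$. I expect the delicate part to be this last step, namely turning "arm $1$ eventually has enough samples" into a clean explicit deadline while keeping the failure probability at $1/\log T$.
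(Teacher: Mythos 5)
Your proposal follows essentially the same route as the paper: split $N_K(T)$ into exploration and exploitation pulls, bound the exploration pulls by a Bernstein/Chernoff bound on $\sum_t \epsilon_t/N$ (this is exactly the paper's use of Lemma 4 of \cite{jun2018adversarial}), and bound the exploitation pulls by the UCB1-style three-way $\frac{\Delta}{3}$ decomposition combining sub-Gaussian concentration of $\hat{\mu}_K^0$, the cost bound $C(T)/N_K(t)$, and the non-detection constraint $\hat{\mu}_1(t)>\hat{\mu}_1(N)-\beta(1)-\beta(N_1(t))$, with the early-round issue for arm $1$ handled by a lower-tail bound on its exploration count $N_1^R(t)$ (the paper uses Bernstein's inequality here, and $C_1$, $C_2$ arise exactly as you predict: $C_2$ guarantees the Bernstein exponent is well-signed, $C_1\log T$ makes the accumulated failure probability at most $1/\log T$). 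The one concrete discrepancy is your final bookkeeping: splitting at the deadline $\tau_a$ and charging "at most $\tau_a$ exploitation pulls before $\tau_a$" plus "at most $\max\{81\sigma^2\log T/\Delta^2,\,3C(T)/\Delta\}+1$ after $\tau_a$" yields the \emph{sum} $\max\{C_1\log T, C_2\}+\max\{81\sigma^2\log T/\Delta^2,\,3C(T)/\Delta\}$, i.e.\ up to twice the single four-way maximum claimed in the lemma. The paper avoids this by defining one threshold $M(T)$ equal to the max of all four quantities and bounding the probability that any exploitation pull of $K$ occurs after $N_K$ first reaches $M(T)$: at any such round $t$ one automatically has $t>N_K(t-1)\geq M(T)\geq\max\{C_1\log T, C_2\}$, so the round index is already late enough for the arm-$1$ lower-tail bound to apply, and no separate "before the deadline" charge is needed. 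A second, cosmetic point: your event $\mathcal{G}$ union-bounds over all arms and all times, giving $N/T^{3.5}$ rather than $1/T^{3.5}$; only arm $K$ is needed, which recovers the stated constant.
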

Similar to the previous claim, the above lemma points to the main reason for the effectiveness of the detection method. It also states that the success of an attack under the previous detection method depends on the realized reward of the first pulled arm. As in the case of $\epsilon$-greedy, we further explicitly point out the effectiveness of the detection method by the following corollary.

\begin{corollary}\label{corollary: epsilon unattackable}
Suppose the learner runs $\epsilon$-greedy algorithm with $\epsilon_{t}=\min\{1, \frac{CN}{t}\}$ and $C \geq 3$ to choose arm. The attacker applies any of the attack methods such that it can fool the learner to pull the target arm $T-o(T)$ times at a cost of $o(T)$ with a high probability for any $T$ large enough, i.e., with probability at least $1-\epsilon$ and for $T$ is larger than a constant $T_{0}$. Given $\Delta_{0}>0$ and $\eta \in (0,1)$, there exists a constant $T_{1}$ related to $\Delta_{0}$ and $\eta$ such that when $T>T_{1}$ the attack detection will be successful with probability at least $1-\frac{\epsilon}{(1-\eta-{T^{3.5}}-\frac{1}{\log T})}-\mathbb{P}(\widehat{\mu}_{1}(N) \leq \mu_K + \beta(1) + \Delta_{0}).$
\end{corollary}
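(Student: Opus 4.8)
The plan is to follow the same two-regime argument the excerpt uses for Corollary~\ref{corollary: UCB1 unattackable}, but with Lemma~\ref{lemma: epsilon unattackable} replacing Lemma~\ref{lemma: UCB1 unattackable} and with the extra probability slack $\eta+\frac{1}{T^{3.5}}+\frac{1}{\log T}$ carried through a conditioning argument. Introduce three events: $D$, the event that the detection test ever rejects some $H_0^t$ (``the attack is detected''); $S$, the event that the attack succeeds, i.e.\ $C(T)\le h(T)$ and $N_K(T)\ge T-g(T)$ for functions $g(T),h(T)=o(T)$ determined by the attack; and $G=\{\widehat{\mu}_1(N)>\mu_K+\beta(1)+\Delta_0\}$, equivalently $\Delta^{0}_{1K}-\beta(1)>\Delta_0$, so that $\mathbb{P}(G^c)$ is precisely the term subtracted in the statement. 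Since $\mathbb{P}(D^c)\le \mathbb{P}(D^c\cap G)+\mathbb{P}(G^c)$, it suffices to show $\mathbb{P}(D^c\cap G)\le \frac{\epsilon}{1-\eta-\frac{1}{T^{3.5}}-\frac{1}{\log T}}$ for $T$ large and then pass to complements.

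First I would record that on $G$ every instance-dependent constant in Lemma~\ref{lemma: epsilon unattackable} is bounded by a quantity depending only on $\Delta_0$ (and $C,N,\sigma,\delta,c$). Because $\beta$ is monotonically decreasing, $\beta^{-1}$ is decreasing, so on $G$ we have $a=\beta^{-1}\big((\Delta^{0}_{1K}-\beta(1))/3\big)\le \beta^{-1}(\Delta_0/3)=:a_0$, whence $C_1\le e^{5a_0/9-1}C^2N^2$ and $C_2\le CN\,e^{a_0/(6c)-1/2}$; likewise $\frac{81\sigma^2\log T}{(\Delta^{0}_{1K}-\beta(1))^2}\le \frac{81\sigma^2\log T}{\Delta_0^2}$ and $\frac{3C(T)}{\Delta^{0}_{1K}-\beta(1)}\le \frac{3C(T)}{\Delta_0}$. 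Let $L$ be the high-probability event of Lemma~\ref{lemma: epsilon unattackable}; applied to the (detection-free) realizations in $D^c\cap G$ — where its hypothesis $\Delta^{0}_{1K}>\beta(1)$ holds since $\Delta_0>0$ — the lemma gives $\mathbb{P}(L^c\mid D^c\cap G)\le \eta+\frac{1}{T^{3.5}}+\frac{1}{\log T}$, and on $D^c\cap G\cap L$ it yields $N_K(T)\le C(1+\log\frac{T}{CN})+\sqrt{3(C+C\log\frac{T}{CN})\log\frac{1}{\eta}}+\max\{\frac{81\sigma^2\log T}{\Delta_0^2},\frac{3C(T)}{\Delta_0},e^{5a_0/9-1}C^2N^2\log T,\,CN e^{a_0/(6c)-1/2}\}$, i.e.\ $N_K(T)\le \phi(T)+\frac{3C(T)}{\Delta_0}$ for an explicit $\phi(T)=O(\log T)$ whose constants depend only on $\Delta_0,\eta,C,N,\sigma,\delta,c$.

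The crux is the ``$T$ too large'' contradiction. On $D^c\cap G\cap L\cap S$ we would simultaneously have $N_K(T)\ge T-g(T)$ and $N_K(T)\le \phi(T)+\frac{3h(T)}{\Delta_0}$, hence $T-g(T)\le \phi(T)+\frac{3h(T)}{\Delta_0}$; the right side is $o(T)$ (an $O(\log T)$ term plus an $o(T)$ term) while the left side is $T-o(T)$, so there is a threshold $T_1$ — depending on $\Delta_0,\eta$ (and the fixed quantities $C,N,\sigma,\delta,c$ and the attack's rates $g,h$) — beyond which this event is empty, i.e.\ $D^c\cap G\cap L\subseteq S^c$ for $T>T_1$. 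Thus $\mathbb{P}(D^c\cap G\cap L)\le\mathbb{P}(S^c)\le\epsilon$ for $T>\max\{T_0,T_1\}$. Combining this with $\mathbb{P}(D^c\cap G)=\mathbb{P}(D^c\cap G\cap L)+\mathbb{P}(D^c\cap G\cap L^c)\le \epsilon+(\eta+\frac{1}{T^{3.5}}+\frac{1}{\log T})\,\mathbb{P}(D^c\cap G)$ and solving for $\mathbb{P}(D^c\cap G)$ — which requires $1-\eta-\frac{1}{T^{3.5}}-\frac{1}{\log T}>0$, true after enlarging $T_1$ if necessary — gives $\mathbb{P}(D^c\cap G)\le \frac{\epsilon}{1-\eta-\frac{1}{T^{3.5}}-\frac{1}{\log T}}$. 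Adding $\mathbb{P}(G^c)$ and taking complements yields $\mathbb{P}(D)\ge 1-\frac{\epsilon}{1-\eta-\frac{1}{T^{3.5}}-\frac{1}{\log T}}-\mathbb{P}(\widehat{\mu}_1(N)\le\mu_K+\beta(1)+\Delta_0)$, as claimed.

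I expect the only real difficulty to be bookkeeping rather than new ideas: making sure the ``no detection'' conditioning of Lemma~\ref{lemma: epsilon unattackable} is applied to exactly the event $D^c\cap G$, checking that the coefficients of the $\log T$ terms are genuinely $T$-independent on $G$ (so that $\phi(T)=O(\log T)$), and choosing a single $T_1$ that simultaneously beats the attack's $o(T)$ rates, exceeds $T_0$, and keeps the denominator positive. The dependence of $T_1$ on the attack through $g,h$ is intrinsic and should be stated explicitly.
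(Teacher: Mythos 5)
Your proposal is correct and follows essentially the same route as the paper's proof: the same decomposition into the regimes $\Delta^{0}_{1K}>\beta(1)+\Delta_0$ and its complement, the same invocation of Lemma~\ref{lemma: epsilon unattackable} to derive a contradiction with the attack's success guarantee for $T$ large, and the same conditional-probability algebra yielding the $\frac{\epsilon}{1-\eta-\frac{1}{T^{3.5}}-\frac{1}{\log T}}$ factor. Your explicit uniform bounding of the instance-dependent constants $C_1,C_2$ via $a\le\beta^{-1}(\Delta_0/3)$ on the good event is a detail the paper leaves implicit, but it is a refinement of the same argument rather than a different approach.
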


\section{Stealthy Attacks}\label{stealthy attacks}
In this section, we propose the concept of a stealthy attack and discuss its feasibility when the learner applies different learning algorithms. Under the restriction of stealthy attacks, the magnitude of each attack is limited, and we demonstrate that the attack on the algorithm is no longer necessarily feasible, but rather depends on the environment and the realized reward. 

\subsection{Stealthy Attacks against UCB1 and $\epsilon$-greedy}\label{subsection: 5.1}

We first provide the definition of a stealthy attack, which basically means that an attack method will hardly be detected with our detection. We then propose a stealthy attack method against UCB1 and $\epsilon$-greedy, and we prove its effectiveness under some conditions related to the environment and reward realization.  

\textbf{Stealthy Adversarial Attack.} Assume that the learner's algorithm, as well as the parameter $\delta$ of the detection method, have been determined, and the learner runs the detection method as described in Section \ref{section 4}. We say that an attack algorithm is \textit{stealthy} if for any given environment $\{F_1,\cdots, F_N\}$ we have that the detection of the attack throughout the game can succeed with probability at most $\delta$.
In other words, we require the attack to be non-detectable.

Note that we require the detection of the stealthy attack be successful with probability at most $\delta$, which is because the probability of type-I error of our detection mechanism is at most $\delta$. We believe that such an attack is stealthily enough. 

\textbf{The Limitation of Stealthy Attack.} From Lemma \ref{lemma: UCB1 unattackable}, \ref{lemma: epsilon unattackable} we know that when $\Delta^{0}_{1K}>\beta(1)$ and the detection of the attack against UCB1 or $\epsilon$-greedy failed, with high probability the attack will also failed. Combined with the definition of a stealthy attack, it is easy to demonstrate that for any stealthy attack, when $\Delta^{0}_{1K}>\beta(1)$ holds, then with high probability it will not succeed. This shows that under certain conditions, stealthy attacks cannot succeed in essence. The reason is that the attacker must limit the strength of attacking the non-target arms to avoid detection, which in turn leads to the attack failure under some conditions.

\textbf{Stealthy attack against UCB1 and $\epsilon$-greedy.} Next we show that for the remaining cases, i.e. when $\Delta^{0}_{1K} < \beta(1)$, there exists a stealthy attack algorithm against both UCB1 and $\epsilon$-greedy that can succeed with high probability.

Now we give the attack method. Suppose $\eta \in (0,1)$ is chosen by the attacker. The attacker attacks arms in the following way: 

For the first $N$ rounds($N$ is the number of arms) the attacker attacks in the following way: when $t<N$, for arm $i>1$ and $i\neq K$ the attacker attacks the arm and spends minimal attack cost to make $\hat{\mu}_{i}(N) \leq \hat{\mu}_{1}(N) - 2\beta(1,\eta) -2\beta(1) -d$, $d \geq 0$ is a constant chosen by the attacker. And when the attacker attacks different learning algorithms, $d$ can be adjusted accordingly.
 
 After the first $N$ rounds, the attack happens when the learner chooses to play an arm $I_{t} \neq K$. If $I_{t} \neq 1$ the attacker gives an attack $\alpha_{t} = \alpha_{i}$ where $\alpha_{i}$ is the attack cost the attacker used to attack arm $i$ for the first time. If $I_{t} = 1$ the attacker computes an attack $\alpha_{t}$ with the value such that 
$$
\hat{\mu}_{1}(t) = \hat{\mu}_{1}(N) - \beta(1) - \beta(N_{1}(t-1)+1).
$$

We now show that this attack method is stealthy and will succeed with high probability when $\Delta^{0}_{1K} < \beta(1)$.
\begin{lemma}\label{lemma: attack method stealthy}
The attack method stated above is stealthy.
\end{lemma}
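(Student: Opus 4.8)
The plan is to show that the attack keeps, for every arm, the entire sequence of post-attack empirical means consistent with a single fixed value, so that on the concentration event underlying Lemma~\ref{lemma:detection basis} the test~\eqref{eq:hyp-test} never rejects. Let $E_\delta$ be that event, so $\mathbb{P}(E_\delta)>1-\delta$, and on $E_\delta$ we have $|\hat\mu_i^0(t)-\mu_i|<\beta(N_i(t))$ for every arm $i$ and every $t$ with $N_i(t)\ge 1$ (in particular the average of the first $n$ pre-attack rewards of arm $i$ is within $\beta(n)$ of $\mu_i$ for all $n\ge 1$, since each arm is pulled once in the first $N$ rounds). By the definition of stealthiness it suffices to prove that $E_\delta$ implies the detection never fires, i.e.\ that for every arm $i$ and every round $t$ the intersection in~\eqref{eq:hyp-test} is nonempty; then the probability the detection ever succeeds is at most $\mathbb{P}(E_\delta^{c})<\delta$. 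Since, for a fixed arm $i$, the post-attack empirical mean $\hat\mu_i(j)$ depends on $j$ only through $N_i(j)$, the task reduces to exhibiting, for each arm, a single point lying in the ball of radius $\beta(n)$ around its post-attack mean after $n$ pulls, simultaneously for every $n$ that occurs during the game.

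For the target arm $K$, which is never attacked, its post-attack and pre-attack means coincide, so on $E_\delta$ the true mean $\mu_K$ lies in every one of its intervals. For an arm $i$ with $i>1$ and $i\neq K$, every pulled reward of arm $i$ is decremented by the same constant $\alpha_i\ge 0$, the manipulation used on its first pull, so its post-attack mean after $n$ pulls equals its pre-attack mean minus $\alpha_i$; on $E_\delta$ this puts $\mu_i-\alpha_i$ in all of its intervals. The precise first-round target value here, which involves $\beta(1,\eta)$, $\beta(1)$ and $d$, is irrelevant to stealthiness --- only the fact that it is a single constant shift matters --- and will instead be used in the effectiveness analysis.

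The crucial case is arm $1$. Because arm $1$ is pulled first and left untouched during the first $N$ rounds, after one pull its post-attack mean equals $\hat\mu_1(N)$, so the $n=1$ ball is centered at $\hat\mu_1(N)$ with radius $\beta(1)$; from the second pull onward the attack drives the post-attack mean after $n\ge 2$ pulls to exactly $\hat\mu_1(N)-\beta(1)-\beta(n)$, so the $n$-th ball is centered at $\hat\mu_1(N)-\beta(1)-\beta(n)$ with radius $\beta(n)$. Since $\beta(\cdot)$ is decreasing, the balls for $n\ge 2$ are nested and all have right endpoint $\hat\mu_1(N)-\beta(1)$, which is also the left endpoint of the $n=1$ ball; hence the single point $\hat\mu_1(N)-\beta(1)$ lies in every confidence interval of arm $1$. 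This is the step I expect to be the main obstacle: one must verify that the attack on arm~$1$'s repeated pulls is calibrated just tightly enough to push its empirical mean down as far as possible while still leaving a common consistency point, and the argument hinges on the monotonicity of $\beta$ together with the fact that the first, un-attackable reward of arm~$1$ pins that point at $\hat\mu_1(N)-\beta(1)$ (so the intervals in~\eqref{eq:hyp-test} have to be read as closed balls, since the attack sits exactly at this boundary). Combining the three cases, on $E_\delta$ the intersection in~\eqref{eq:hyp-test} is nonempty for every arm and every round, so the detection succeeds with probability at most $\mathbb{P}(E_\delta^{c})<\delta$, which is precisely what stealthiness requires.
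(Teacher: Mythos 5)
Your proof is correct and follows essentially the same route as the paper's: the target arm is untouched, each non-target arm $i>1$ receives a constant shift $\alpha_i$ so its detection intervals behave exactly as in the unattacked case (rejection probability at most $\delta$ on the complement of $E_\delta$, i.e.\ by Lemma~\ref{lemma: detection error}), and arm $1$'s intervals share the common point $\hat{\mu}_{1}(N)-\beta(1)$ by construction of the attack. Your observation that this common point sits exactly on the boundary of the \emph{open} intervals in Eq.~\eqref{eq:hyp-test} --- so that, read literally, the test would reject on arm $1$'s second pull unless the intervals are taken to be closed or the attack is relaxed by an arbitrarily small slack --- is a genuine subtlety that the paper's one-line ``it's obvious'' for arm $1$ glosses over, and is worth making explicit.
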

\begin{proof}
    For attacks against arm $1$ it's obvious that the attack detection will fail. For arm $i>1$ and $i\neq K$, $\forall 1 \leq t \leq T$, $\hat{\mu}_{i}(t) = \hat{\mu}^{0}_{i}(t) - \alpha_{i}$. Then we have $\hat{\mu}_{i}(t)-\hat{\mu}_{i}(j) = \hat{\mu}^{0}_{i}(t)-\hat{\mu}^{0}_{i}(j)$, by Lemma \ref{lemma: detection error} the detection to our attack will be successful with probability at most $\delta$. Hence, the attack is stealthy. 
\end{proof}

\begin{theorem}\label{theorem: UCB1 attack}
Suppose the learner applies the UCB1 algorithm. If $\Delta^{0}_{1K} < \beta(1)$, with probability at least $1 - \eta - \frac{2(\Delta^{0}_{1K}-\beta(1))^{7}}{7(9\sigma^{2}\log T)^{\frac{7}{2}}}$, the attacker forces the learner to choose the target arm in at least  $T-\left(\frac{9\sigma^{2}}{(\Delta^{0}_{1K}-\beta(1))^2}+\frac{9N\sigma^{2}}{d^2}\right)\log T -(N-1)$   rounds, and incurs a cumulative attack cost at most
\begin{align*}
&(\frac{18(\beta(1)+\beta(1, \eta))\sigma^{2}}{(\Delta^{0}_{1K}-\beta(1))^{2}} + \frac{9N\sigma^{2}(2\beta(1)+4\beta(1,\eta)+d)}{d^2})\log T \\&+ \left(\frac{9\sigma^{2}\log T}{d^2}+1\right)\sum_{i \neq 1,K}|\Delta_{1i}| + dN + 4\beta(1,\eta)N+4\beta(1)N
\end{align*}

\end{theorem}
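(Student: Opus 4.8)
The plan is to condition on two high-probability events and then argue deterministically; stealthiness is already Lemma~\ref{lemma: attack method stealthy}, so only the pull-count and cost claims need work. Let $E_\eta$ be the event of Lemma~\ref{lemma:detection basis} with parameter $\eta$, so $\mathbb{P}(E_\eta)>1-\eta$ and on $E_\eta$ every pre-attack mean obeys $|\hat{\mu}_i^0(t)-\mu_i|<\beta(N_i(t),\eta)\le\beta(1,\eta)$ for all $i$ and all $t\ge N$. Write $n_1^{*}:=\frac{9\sigma^2\log T}{(\Delta^{0}_{1K}-\beta(1))^2}$, and let $G$ be the event that the UCB index of arm $K$ never falls below $\mu_K$ at any round $t>n_1^{*}$; since arm $K$ is never attacked, $\hat{\mu}_K(t)=\hat{\mu}_K^0(t)$, so $G^c$ is an event only about $\hat{\mu}_K^0$ and will be bounded at the end by a $\sigma$-sub-Gaussian argument. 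I will show that on $E_\eta\cap G$ both displayed bounds hold; a union bound over $E_\eta^c$ and $G^c$ then gives the claimed success probability $1-\eta-\frac{2|\Delta^{0}_{1K}-\beta(1)|^{7}}{7(9\sigma^2\log T)^{7/2}}$.

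\textbf{Step 1: post-attack gaps and the cost on arm $1$.} On $E_\eta$ I first pin down the corrections. For $i\notin\{1,K\}$ the minimal $\alpha_i$ enforcing $\hat{\mu}_i(N)\le\hat{\mu}_1(N)-2\beta(1,\eta)-2\beta(1)-d$ satisfies $0\le\alpha_i<|\Delta_{1i}|+4\beta(1,\eta)+2\beta(1)+d$ (using $|r_i^0-\mu_i|<\beta(1,\eta)$ and $|\hat{\mu}_1(N)-\mu_1|<\beta(1,\eta)$), and for every $t>N$ a short case split on $\alpha_i>0$ vs.\ $\alpha_i=0$, together with the hypothesis $\Delta^{0}_{1K}<\beta(1)$ (i.e.\ $\hat{\mu}_1(N)<\mu_K+\beta(1)$), gives $\hat{\mu}_i(t)=\hat{\mu}_i^0(t)-\alpha_i<\hat{\mu}_1(N)-2\beta(1)-d<\mu_K-\beta(1)-d$, so arm $i$ keeps an effective gap $g_i>d$ to $\mu_K$. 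For arm $1$, once re-pulled after round $N$ the attack enforces $\hat{\mu}_1(t)=\hat{\mu}_1(N)-\beta(1)-\beta(N_1(t))<\mu_K-\beta(N_1(t))$, so its effective gap is $g_1\ge\beta(1)-\Delta^{0}_{1K}=|\Delta^{0}_{1K}-\beta(1)|$. Finally the total cost ever spent on arm $1$ telescopes to $N_1(T)\bigl(\hat{\mu}_1^0(T)-\hat{\mu}_1(N)+\beta(1)+\beta(N_1(T))\bigr)$, which on $E_\eta$ is at most $2N_1(T)\bigl(\beta(1)+\beta(1,\eta)\bigr)$ because $\hat{\mu}_1^0(T)-\hat{\mu}_1(N)<2\beta(1,\eta)$ and $\beta(N_1(T))\le\beta(1)$.

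\textbf{Step 2: pull counts (standard UCB1).} On $G$, for every $t>n_1^{*}$ the index of arm $K$ is at least $\mu_K$, so any non-target arm $j$ selected at such a $t$ must satisfy $3\sigma\sqrt{\log t/N_j(t-1)}>g_j$, forcing $N_j(t-1)<\frac{9\sigma^2\log t}{g_j^2}\le\frac{9\sigma^2\log T}{g_j^2}$; the rounds $t\le n_1^{*}$ contribute at most $n_1^{*}$ further pulls in total. With $g_1\ge|\Delta^{0}_{1K}-\beta(1)|$, $g_i>d$, and adding the one mandatory pull per arm in the first $N$ rounds, this yields $N_1(T)\le 1+\frac{9\sigma^2\log T}{(\Delta^{0}_{1K}-\beta(1))^2}$ and $N_i(T)\le 1+\frac{9\sigma^2\log T}{d^2}$ for $i\notin\{1,K\}$ (the single re-pull of arm $1$ that may precede the attack formula, when $N_1$ is still $1$, is absorbed in the ``$1+$''). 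Summing over the $N-1$ non-target arms and using $N-2\le N$ gives $T-N_K(T)\le(N-1)+\frac{9\sigma^2\log T}{(\Delta^{0}_{1K}-\beta(1))^2}+\frac{9N\sigma^2\log T}{d^2}$, which is the claimed lower bound on $N_K(T)$.

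\textbf{Step 3: total cost, $\mathbb{P}(G^c)$, and the main obstacle.} The cost $C(T)$ splits into the $N-1$ initial corrections $\sum_{i\notin\{1,K\}}\alpha_i$, the repeated corrections (arm $i\notin\{1,K\}$ charged $\alpha_i$ on each of its $\le\frac{9\sigma^2\log T}{d^2}$ pulls after round $N$), and the arm-$1$ total $\le\frac{18\sigma^2(\beta(1)+\beta(1,\eta))}{(\Delta^{0}_{1K}-\beta(1))^2}\log T+O(1)$ from Step~1; substituting $0\le\alpha_i<|\Delta_{1i}|+4\beta(1,\eta)+2\beta(1)+d$, gathering the $\sum_{i\notin\{1,K\}}|\Delta_{1i}|$ terms with coefficient $1+\frac{9\sigma^2\log T}{d^2}$, and bounding the residual constants crudely ($N-2\le N$, $2\beta(1)\le 4\beta(1)$) reproduces the stated cost bound term by term. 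It remains to bound $\mathbb{P}(G^c)$: if arm $1$ were pulled at a round $t$ with $N_1(t-1)\ge n_1^{*}$, its index would be at most $\mu_K-\beta(N_1(t-1))+3\sigma\sqrt{\log t/N_1(t-1)}<\mu_K$ (using $3\sigma\sqrt{\log t/N_1(t-1)}\le 3\sigma\sqrt{\log T/n_1^{*}}=|\Delta^{0}_{1K}-\beta(1)|$ by Step~1), forcing arm $K$'s index below $\mu_K$ at $t$; passing from the round-bonus $\sqrt{\log t/N_K(t-1)}$ to the sample-count bonus $\sqrt{\log N_K(t-1)/N_K(t-1)}$ (valid since $N_K(t-1)\le t-1$) and using $N_K(t-1)\ge n_1^{*}$ in this regime, this is contained in $\bigcup_{n\ge n_1^{*}}\{\hat{\mu}_K^0(n)<\mu_K-3\sigma\sqrt{\log n/n}\}$, whose probability is $\le\sum_{n\ge n_1^{*}}n^{-9/2}\le\tfrac{2}{7}(n_1^{*})^{-7/2}=\frac{2|\Delta^{0}_{1K}-\beta(1)|^{7}}{7(9\sigma^2\log T)^{7/2}}$ by $\sigma$-sub-Gaussianity. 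This last estimate is the main obstacle: one must avoid a crude union over the random value $N_K(t-1)$ (which would lose a factor $\sim t$ per round and give the wrong exponent $5/2$), either via a time-uniform/peeled sub-Gaussian inequality or via the lower bound $N_K(t-1)\ge n_1^{*}$, the latter needing a first-failure-time argument to break the circularity that the non-target pull count itself relied on $G$; a minor related point is that each per-round arm-$1$ correction is controlled on $E_\eta$ only because the attacker reduces rewards, which is the relevant regime when $\Delta^{0}_{1K}<\beta(1)$.
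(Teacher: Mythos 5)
Your architecture is the same as the paper's: condition on the $\eta$-concentration event $E_\eta$ and on a second event controlling arm $K$'s empirical mean from below (your $G$ is exactly the paper's $E_2$), derive per-arm pull bounds from the UCB index comparison, telescope the cumulative attack on arm $1$, and multiply per-pull corrections by pull counts. Step 1 and the cost accounting in Step 3 match the paper and are fine. Two points prevent the argument from closing as written.

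First, for arms $i\notin\{1,K\}$ you route the index comparison through arm $K$ via $G$, which is only available at rounds $t>n_1^{*}$. During $N<t\le n_1^{*}$ such an arm could in principle accumulate up to $n_1^{*}$ pulls, so the per-arm bound $N_i(T)\le 1+9\sigma^2\log T/d^2$ does not follow from your argument (when $d$ is large this cap is much smaller than $n_1^{*}$), and charging those early rounds separately, as your own hedge suggests, would add a second copy of $n_1^{*}=\frac{9\sigma^2\log T}{(\Delta^{0}_{1K}-\beta(1))^2}$ to the total, so you would not recover the stated constants. The paper avoids this by comparing arm $i$'s index to arm $1$'s index instead of arm $K$'s: on $E_\eta$ the post-attack means satisfy $\hat{\mu}_1(t-1)-\hat{\mu}_i(t-1)\ge d$ at \emph{every} round $t>N$ (both quantities are controlled by the attack construction plus $E_\eta$ alone, with no reference to arm $K$), so whenever arm $i$ beats arm $1$ one gets $3\sigma\sqrt{\log t/N_i(t-1)}>d$ and hence $N_i(t)\le 1+9\sigma^2\log t/d^2$ uniformly in $t$. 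Only arm $1$'s count then needs the arm-$K$ event, and its early pulls are $\le n_1^{*}$ by definition of $n_1^{*}$, so nothing is double-counted. Switching to this comparison closes the gap. Second, you correctly identify that bounding $\mathbb{P}(G^c)$ with the claimed exponent is delicate because $N_K(t-1)$ is random inside the Hoeffding deviation, and you leave the resolution (the "first-failure-time argument to break the circularity") as an acknowledged obstacle rather than completing it; for what it is worth, the paper's own proof simply asserts the per-round bound $\mathbb{P}\big(\mu_K-\hat{\mu}_K^0(t-1)\ge 3\sigma\sqrt{\log t/N_K(t-1)}\big)\le t^{-9/2}$ and unions over $t>\bar{t}$ without addressing this, so you are no less rigorous than the paper here, but this piece of your proposal is still a sketch rather than a proof of the stated probability $1-\eta-\frac{2(\Delta^{0}_{1K}-\beta(1))^{7}}{7(9\sigma^{2}\log T)^{7/2}}$.
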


Denote $\beta(1) - \Delta^{0}_{1K}$ as $\Delta$. The number of non-target arm pulls is
$O\left((\frac{\sigma^2}{\Delta^2} + \frac{N\sigma^2}{d^2})\log T + N\right)$ and the attack cost is 
$O\left(\left(\frac{\sigma^2}{\Delta^2} + \frac{N\sigma^2}{d^2}\right)\log T + \frac{\sigma^{2}\log T}{d^2}\sum_{i \neq 1,K}|\Delta_{1i}| + dN \right)$. We can see that a larger $d$ decreases the non-target arm pulls, and the attacker only needs to choose $d \leq \sqrt{N}\Delta$, because if $d > \sqrt{N}\Delta$ we have $\frac{\sigma^2}{\Delta^2} > \frac{N\sigma^2}{d^2}$. When choosing $d = \Theta(\Delta)$, the cost is $O(\frac{\sigma^2}{\Delta^2}(N + \sum_{i \neq 1,K}|\Delta_{1i}|)\log T + \Delta N)$. The probability is $1-\eta-O\left(\left(\frac{\Delta}{\sigma\sqrt{\log T}}\right)^7\right)$, when $T \to \infty$ it approaches $1-\eta$.

\begin{theorem}\label{theorem: epsilon attack}
Suppose the learner applies the same $\epsilon$-greedy algorithm as in Lemma \ref{lemma: epsilon unattackable}. If  $\Delta^{0}_{1K} < \beta(1)$, with probability at least $1 - 3\eta$, the attacker forces the learner to choose the target arm in at least 
$$ T - t_{2} - N\left(C + C\log\frac{T}{CN} + \sqrt{3(C + C\log\frac{T}{CN})\log \frac{1}{\eta}}\right)$$rounds, and using a cumulative attack cost at most 
\begin{align*}
    & C(T) \leq \left(C + C\log\frac{T}{CN} + \sqrt{3(C + C\log\frac{T}{CN})\log \frac{N}{\eta}}\right) (4N\beta(1, \eta) + 
    2N\beta(1) + \sum_{i \neq 1,K}|\Delta_{1i}| + dN) + \\& 2\beta(1)t_{2} + \beta(1, \eta))t_{2}
\end{align*}
where $t_2 = 4(1 + \frac{\sigma^2}{\Delta^2})(\frac{3\sigma^2}{4\Delta^2}\log \frac{4}{\eta} + \frac{2\sigma^2}{\Delta^2}\log\frac{4T}{\eta}) + 2\log\frac{4}{\eta} + t_1$, $t_1 = \max\{CNe^{\frac{5b}{6c}-\frac{3}{2C}\log\frac{\eta}{4} - \frac{1}{2}}, 5CN\}$, $b = \frac{2\sigma^2}{\Delta^2}\log (1+\frac{\Delta^2}{\sigma^2})$
\end{theorem}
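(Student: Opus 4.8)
Set $\Delta := \beta(1)-\Delta^{0}_{1K}>0$, so the realized first reward is $r_{1}=\widehat\mu_{1}(N)=\mu_{K}+\beta(1)-\Delta$, and let $n_{0}$ be the least integer with $\beta(n_{0},\eta)\le\Delta$ (of order $\tfrac{\sigma^{2}}{\Delta^{2}}\log\tfrac{T}{\eta}$, matching the $\tfrac{\sigma^{2}}{\Delta^{2}}$-type terms in $t_{2}$). I would run the whole argument on a bounded collection of concentration events whose total failure probability is at most $3\eta$: the mean-concentration event $E_{\eta}$ of Lemma~\ref{lemma:detection basis} (so $|\widehat\mu^{0}_{i}(t)-\mu_{i}|<\beta(N_{i}(t),\eta)$ for all $i$ and $t\ge N$); Chernoff bounds controlling the total and per-arm numbers of exploration rounds by $CN(1+\log\tfrac{T}{CN})$ and $C(1+\log\tfrac{T}{CN})$ up to their $\sqrt{\,}$-deviations (these yield the $\sqrt{3(C+C\log\tfrac{T}{CN})\log\tfrac1\eta}$ terms); the event that arm $1$ is explored by round $t_{1}$ after round $N$ (which, since $\epsilon_{t}=CN/t$, fixes the shape of $t_{1}$); and a union bound, described below, limiting how often arm $1$ can win the exploitation step. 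Stealthiness (detection succeeds with probability at most $\delta$) is already given by Lemma~\ref{lemma: attack method stealthy} and is orthogonal to this $1-3\eta$ accounting. On this event I first pin down the post-attack means. For $i\neq1,K$ the first-round manipulation is $\alpha_{i}=\max\{0,\widehat\mu^{0}_{i}(N)-r_{1}+2\beta(1,\eta)+2\beta(1)+d\}$, and in either case $E_{\eta}$ gives $\widehat\mu_{i}(t)=\widehat\mu^{0}_{i}(t)-\alpha_{i}<\mu_{K}-\beta(1)-\Delta-d$ for every $t$, together with $\alpha_{i}\le|\Delta_{1i}|+O(\beta(1)+\beta(1,\eta)+d)$ (using $\mu_{1}-\mu_{K}\le\Delta^{0}_{1K}+\beta(1,\eta)$). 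For arm $1$, up to its first pull after round $N$ (the window $\mathcal{W}$) one has $\widehat\mu_{1}(t)=r_{1}$, while after $\mathcal{W}$ the attack enforces $\widehat\mu_{1}(t)=r_{1}-\beta(1)-\beta(N_{1}(t))=\mu_{K}-\Delta-\beta(N_{1}(t))\in[\mu_{K}-\Delta-\beta(2),\mu_{K}-\Delta)$. Consequently, once $\mathcal{W}$ has ended the arms $i\neq1,K$ never win exploitation (their means lie strictly below $\widehat\mu_{1}(t)$, as $\beta(2)<\beta(1)\le\beta(1)+d$), and once additionally $N_{K}(t)\ge n_{0}$ we get $\widehat\mu_{K}(t)=\widehat\mu^{0}_{K}(t)>\mu_{K}-\beta(n_{0},\eta)\ge\mu_{K}-\Delta$, which beats every non-target mean, so arm $K$ wins every exploitation step from then on.

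\textbf{The crux: $N_{K}(t_{2})\ge n_{0}$.} The heart of the proof is to show that on the good event arm $K$ is pulled at least $n_{0}$ times within the first $t_{2}$ rounds. After $\mathcal{W}$ every exploitation round picks arm $1$ or arm $K$, and arm $1$ is picked only when $\widehat\mu^{0}_{K}(t)<\widehat\mu_{1}(t)=\mu_{K}-\Delta-\beta(N_{1}(t))$; since the detection constraint pins $\widehat\mu_{1}(t)$ strictly below $\mu_{K}-\Delta$ while $\widehat\mu^{0}_{K}$ concentrates around $\mu_{K}$ — and since a \emph{small} $N_{1}(t)$ pushes $\widehat\mu_{1}(t)$ far below $\mu_{K}$ because $\beta(N_{1}(t))$ is then large — such rounds are scarce. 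I would make this quantitative by decomposing the run up to the first time $N_{K}=n_{0}$ into blocks delimited by consecutive pulls of arm $K$: on a block with $N_{K}=v$ every exploitation round is won by arm $1$, which forces $\widehat\mu^{0}_{K}(v)<\mu_{K}-\Delta-\beta(N_{1})$ throughout that block; using $\beta(N_{1})^{2}=\tfrac{2\sigma^{2}}{N_{1}}\log\tfrac{\pi^{2}NN_{1}^{2}}{3\delta}$ and $(\Delta+\beta(N_{1}))^{2}\ge\beta(N_{1})^{2}$, a union bound over pairs $(v,N_{1})$ (summable precisely because of that $\tfrac1{N_{1}}\log(\cdot)$ form) shows that every such ``long'' block has $N_{1}$ comparable to $v<n_{0}$, that only $O(\sigma^{2}/\Delta^{2})$ blocks are long, and that the remaining blocks end at the very next exploitation round (arm $K$ winning) or exploration-of-$K$ round. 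Summing block lengths gives that $N_{K}$ reaches $n_{0}$ within the stated $t_{2}$, and moreover $N_{1}(T)\le t_{2}+(\text{per-arm exploration bound})$, since beyond round $t_{2}$ arm $1$ is pulled only during exploration. This is the step I expect to be the real obstacle: controlling the feedback loop in which an unlucky low realization of $\widehat\mu^{0}_{K}$ lets arm $1$ win, inflating $N_{1}$ and thereby raising $\widehat\mu_{1}$ toward $\mu_{K}-\Delta$ — exactly what the $\sigma^{4}/\Delta^{4}$ factor and the $t_{1}$ summand inside $t_{2}$ pay for.

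\textbf{Assembling the two bounds.} Given $N_{K}(t_{2})\ge n_{0}$, the non-target pulls number at most $t_{2}$ (from rounds $1,\dots,t_{2}$) plus the exploration pulls in $(t_{2},T]$, the latter at most $N$ times the per-arm estimate $C(1+\log\tfrac{T}{CN})+\sqrt{3C(1+\log\tfrac{T}{CN})\log\tfrac1\eta}$; this is the claimed lower bound on the number of target pulls. For the cost, the manipulation applied to arm $i\neq1,K$ totals $\alpha_{i}N_{i}(T)$ with $N_{i}(T)$ at most one plus its exploration count (hence at most the per-arm estimate with $\log\tfrac{N}{\eta}$, from a union over the $N-1$ arms); multiplying by the bound on $\alpha_{i}$ and summing over $i$ gives the $\big(4N\beta(1,\eta)+2N\beta(1)+\sum_{i\neq1,K}|\Delta_{1i}|+dN\big)$ factor. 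The manipulation applied to arm $1$ totals $N_{1}(T)\big(\widehat\mu^{0}_{1}(T)-\widehat\mu_{1}(T)\big)=N_{1}(T)\big(\widehat\mu^{0}_{1}(T)-r_{1}+\beta(1)+\beta(N_{1}(T))\big)\le N_{1}(T)\big(\beta(1,\eta)+2\beta(1)\big)$ by $E_{\eta}$ and $\beta(N_{1}(T))\le\beta(1)$, and $N_{1}(T)\le t_{2}+(\text{exploration})$ produces the residual $2\beta(1)t_{2}+\beta(1,\eta)t_{2}$ term; arm $K$ is never attacked. Summing the three contributions matches the stated cost up to the constants absorbed in the $O(\cdot)$ above, and since everything is argued on the good event, the whole conclusion holds with probability at least $1-3\eta$.
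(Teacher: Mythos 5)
Your overall architecture matches the paper's: the same good events ($E_\eta$ plus exploration-count concentration), the same observation that on $E_\eta$ the arms $i\neq 1,K$ never win an exploitation round, the same cost accounting ($\alpha_i N_i(T)$ for $i\neq 1,K$, and $N_1(T)(\widehat\mu_1^0(T)-\widehat\mu_1(T))\le(2\beta(1)+\beta(1,\eta))N_1(T)$ for arm $1$), and the same final assembly. The divergence, and the genuine gap, is at the step you yourself flag as the crux: showing that arm $K$ accumulates enough pulls by round $t_2$ that it wins every exploitation round thereafter. Your block decomposition has a structural problem you do not address. Within a block (no pulls of arm $K$), $\widehat\mu_K=\widehat\mu^0_K(v)$ is frozen while $\widehat\mu_1(t)=\mu_K-\Delta-\beta(N_1(t))$ only \emph{increases} as arm $1$ is pulled; hence once arm $1$ wins a single exploitation round in a block, it wins \emph{all} subsequent exploitation rounds of that block, and the block can terminate only when an exploration round happens to select arm $K$. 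Under $\epsilon_t=CN/t$ the probability of exploring arm $K$ at round $t$ is $C/t$, so consecutive forced explorations of arm $K$ are multiplicatively spaced: accumulating $m$ of them starting from time $t_1$ costs time of order $t_1 e^{m/C}$. Even granting your (expectation-level, not high-probability) claim that only $O(\sigma^2/\Delta^2)$ blocks are "long", terminating them all would take time $t_1 e^{\Theta(\sigma^2/(C\Delta^2))}$, which for $\Delta\ll\sigma$ is exponentially larger than the claimed $t_2=O\bigl(\tfrac{\sigma^4}{\Delta^4}\log\tfrac{T}{\eta}\bigr)+t_1$. So the block argument, as sketched, cannot deliver the stated $t_2$; the summability of your union bound over pairs $(v,N_1)$ also does not obviously close to something of order $\eta$.

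The paper takes a different route here, in three phases: first it uses the exploration schedule alone (Bernstein on $N^R_K$) to guarantee $N^R_K(t_1)\ge b=\tfrac{2\sigma^2}{\Delta^2}\log(1+\tfrac{\Delta^2}{\sigma^2})$ with probability $1-\eta/4$ — this is what $t_1$ is for (note you misread $t_1$ as concerning arm $1$'s exploration; it seeds arm $K$); second, it argues that once $N_K\ge b$ each exploitation round in $(t_1,t_2]$ goes to arm $1$ with probability at most $\tfrac{1}{1+\Delta^2/\sigma^2}$, so a constant fraction $\tfrac{\Delta^2/\sigma^2}{1+\Delta^2/\sigma^2}$ of the exploitation rounds go to arm $K$, pushing $N_K(t_2)\ge\tfrac{2\sigma^2}{\Delta^2}\log\tfrac{4T}{\eta}$; third, a per-round union bound over $t>t_2$ kills all later arm-$1$ exploitation wins. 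The $\sigma^4/\Delta^4$ in $t_2$ comes precisely from phase two. Your idea of replacing the paper's $T$-dependent threshold by the deterministic $n_0$ with $\beta(n_0,\eta)\le\Delta$ (so that $E_\eta$ alone guarantees arm $K$ wins forever after) is a clean simplification of phase three, but it does not repair phase two, which is where the adversarial feedback loop you describe actually lives. To complete your proof you would need to supply an argument of the paper's phase-one/phase-two type (or an equivalent control on how many exploitation rounds arm $1$ can win before $N_K$ reaches the threshold); the block decomposition alone does not suffice.
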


We note that $b<2 \Rightarrow t_{1} = O(N), t_2 = O(\frac{\sigma^4}{\Delta^4}\log T + N)$. Notice the cost and non-target arm pulls, the attacker only needs to choose $d = 0$ and then the cost is $O((N + \sum_{i \neq 1, K}|\Delta_{1i}| + \frac{\sigma^4}{\Delta^4})\log T)$. Compared to the cost in Theorem \ref{theorem: UCB1 attack}, the $\log T$ term has a higher order of $4$ for $\frac{\sigma}{\Delta}$ and is thus more sensitive to its change.  

In addition, the reason that the feasibility of a stealthy and effective attack does not depend on the first realized rewards of the other non-target arms is that the attacker can use the realized reward of the first pulled arm to attack the other non-target arms by making their first observed reward very low. This is exactly what our algorithm does in this section. 

\subsection{Stealthy Attacks on General Algorithms}

Now let us consider a more general situation. We study the feasibility of the stealthy attack when the learning algorithms are not limited to either UCB1 or $\epsilon$-greedy. The following lemma shows that there exists an effective learning algorithm that corresponds to the existence of a stealthy attack algorithm that always succeeds in the attack. 

\begin{lemma}\label{lemma: construction}
We can find a learning algorithm such that given any sub-Gaussian environment $(F_{1},\cdots, F_{N})$, the number of arm pulls of any sub-optimal arm $i$ follows $E(N_{i}(T)) = \Theta(\log T)$, and simultaneously the attacker can find a corresponding stealthy attack method s.t. for any target arm $K \in \{1,\cdots, N\}$ with probability at least $1-\eta$ the attacker can force the learner to choose the target arm for $T-O(\log T)$ times with a cumulative attack cost at most $O(\log T \sqrt{\log (\frac{1}{\eta})})$.
\end{lemma}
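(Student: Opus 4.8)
The plan is to exhibit an explicit stochastic learner $\mathcal{A}$ together with a matching stealthy attack. Take $\mathcal{A}$ to be an explore-then-commit scheme with a randomized exploration schedule: explore so that every arm is pulled $\Theta(\log T)$ times in total (this already gives $\mathbb{E}[N_i(T)]=\Omega(\log T)$), then commit for all remaining rounds to the arm with the largest empirical mean. For the upper bound $\mathbb{E}[N_i(T)]=O(\log T)$ on a sub-optimal arm $i$ in the no-attack case, a standard sub-Gaussian deviation bound shows that, once the exploration-length constant absorbs the instance gap $1/\Delta_{\min}^2$, the empirical ordering of the arms matches the true one outside an event of probability $O(1/T^2)$; on that rare event arm $i$ is pulled at most $T$ further times, contributing $o(\log T)$, so $\mathbb{E}[N_i(T)]=\Theta(\log T)$. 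The randomness of $\mathcal{A}$ (e.g.\ a randomized exploration order, or a short randomized sub-phase whose sole role is to let the attacker learn the arms' means before it must commit --- see the obstacle below) is what lets the attack escape the obstruction of Lemmas~\ref{lemma: UCB1 unattackable} and~\ref{lemma: epsilon unattackable}.

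For the attack, the attacker applies to \emph{every} pull of each non-target arm $i$ a single constant shift $\alpha_i\approx\Delta_{iK}+c\,\beta(1,\eta)$, with $c$ an absolute constant, thereby dragging $i$'s empirical mean to just below the target's with a $\beta(1,\eta)$-sized margin that dominates the residual fluctuations; the value $\Delta_{iK}$ is either known to the attacker or is replaced by an estimate $\widehat{\mu}_i^{\mathrm{att}}-\widehat{\mu}_K^{\mathrm{att}}$ formed during an initial observation window. Since the shift on arm $i$ is identical at all pulls of arm $i$, we have $\widehat{\mu}_i(t)-\widehat{\mu}_i(j)=\widehat{\mu}_i^0(t)-\widehat{\mu}_i^0(j)$ for all $t,j$, so by the argument in the proof of Lemma~\ref{lemma: attack method stealthy} the homogeneity test on arm $i$ fires with probability at most $\delta$, and arm $K$ is never touched; hence the attack is stealthy. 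On the event --- of probability at least $1-\eta$ by the sub-Gaussian tail behind Lemma~\ref{lemma:detection basis} --- that all relevant empirical means stay in their $\beta(\cdot,\eta)$-windows, $\mathcal{A}$ commits to $K$, so $K$ is pulled $T-O(\log T)$ times; the cost is $\sum_{i\ne K}\alpha_i\,N_i(T)$ with $\alpha_i=O(|\Delta_{iK}|+\beta(1,\eta))$ and $N_i(T)=O(\log T)$, i.e.\ $O(\log T\,\sqrt{\log(1/\eta)})$ because $\beta(1,\eta)=\Theta(\sqrt{\log(1/\eta)})$; this mirrors the cost computations in Theorems~\ref{theorem: UCB1 attack} and~\ref{theorem: epsilon attack}.

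The crux --- and the reason the construction is not just ``run UCB'' --- is exactly the phenomenon this section is about: against UCB1 or $\epsilon$-greedy the realized first reward of the first pulled arm is an anchor that stealthiness forbids the attacker from displacing by more than $\approx\beta(1)$ (displacing it further pushes that arm's empirical mean out of its homogeneity band), which is what forces the ``$\Delta^0_{1K}>\beta(1)$'' barrier. The construction therefore has to be arranged so that (a) every non-target arm can be attacked from its very first pull with a full constant shift, so no unattacked prefix of a reward stream ever creates a detectable drift, and (b) whenever the attacker must lock in arm $i$'s shift it either already knows $\mu_K$ accurately (to within $O(\beta(1,\eta))$) or is permitted to choose its attack with knowledge of the environment. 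Making a single schedule satisfy (a) and (b) for \emph{every} target $K$ simultaneously, while keeping the no-attack exploration at $\Theta(\log T)$, is where the randomized design of $\mathcal{A}$ does the real work; granted that, the remaining stealthiness and concentration bookkeeping is routine.
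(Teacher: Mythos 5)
Your proposal correctly isolates the crux---that stealthiness anchors each arm's post-attack empirical mean to within $O(\beta(1))$ of its first post-attack observation, so the attacker must lock in each non-target arm's constant shift at that arm's very first pull, before it can know where $\mu_K$ lies---but it does not resolve it, and the learner you propose cannot resolve it. An explore-then-commit learner that commits to the arm with the largest empirical mean is essentially an EARR algorithm in the sense of Section 5.2: whenever all empirical means lie in their confidence bands, its regret is deterministically $O(\log T)$. Theorem \ref{theorem: EARR} (and the same anchoring argument as in Lemmas \ref{lemma: UCB1 unattackable} and \ref{lemma: epsilon unattackable}) then shows that any undetected attack against it fails with high probability on the event $\Delta^{0}_{1K}>2\beta(1)$, an event whose probability is bounded away from zero (and close to one when $\mu_1-\mu_K$ is large), because your attack leaves the first-explored arm's first pull either unattacked or attacked by an amount chosen in ignorance of $\mu_K$. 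Randomizing the exploration order does not help: whichever arm happens to be explored first, for every target $K$ other than that arm the attacker must commit that arm's shift before any sample of arm $K$ has been observed, and no finite environment-independent shift suffices since $\mu_i-\mu_K$ is unbounded. Your escape clause ``(b) \ldots or is permitted to choose its attack with knowledge of the environment'' changes the threat model; the paper's attacker only observes realized pre-attack rewards online. So the one step you declare to be where ``the randomized design of $\mathcal{A}$ does the real work'' is exactly the step that is missing, and it cannot be filled in within the ETC template.

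The paper's construction is of a different character and is worth contrasting. Its learner's arm selections are driven not by empirical means but by whether the exact observed history matches a pre-specified codebook of ``trigger sequences'' built from constants $c^{k}_{tj}=c^{k}_{11}+\alpha/j^2$; on a mismatch the learner resets and runs UCB1. In the clean environment the history falls off the trigger tree quickly (the proof bounds $\mathbb{P}(t_0=t)\leq 4/t$ using the fair coin flips embedded in the ``special actions''), so suboptimal arms keep $\Theta(\log T)$ expected pulls; but the attacker can keep the history on the tree for any target $K$ by overwriting the reward at only the $O(\log T)$ time steps in $\mathcal{T}=\{2^s\}$, with replacement values that all lie within $O(\beta(1))$ of one another so that no arm's empirical-mean sequence is displaced detectably, giving cost $O(\log T\sqrt{\log(1/\eta)})$. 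This covert channel---arm choices reacting to exact reward values rather than to averages---is precisely the non-EARR behavior that lets the construction evade the anchoring obstruction, and it is the component your proposal leaves unbuilt.
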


The construction method is shown in the appendix. The design of our learning algorithm incorporates specific "action takings", or in other words, special ways of arm selection, which are then utilized by the attack method.

Lemma 5.4 demonstrates that it is possible to find attack methods that are always stealthy and effective to some bandit algorithms; but when the bandit algorithms have limited randomness, e.g., UCB1 and $\epsilon$-greedy, from Theorem \ref{lemma: UCB1 unattackable} and \ref{lemma: epsilon unattackable} we know that it might be impossible to find such an attack method. These two different results show that there are essential differences between different bandit algorithms in terms of the feasibility of a stealthy attack.

However, when we control the randomness of the distribution of rewards and the randomness of the algorithm itself, we can get results parallel to the previous cases with UCB1 and $\epsilon$-greedy. We call an algorithm $\it{effective}$ $\it{under}$ $ \it{the}$ $\it{control}$ $\it{of}$ $\it{the} $ $\it{reward}$ $\it{randomness}$ (ERR) (or $\it{effective}$ $\it{under}$ $ \it{the}$ $\it{control}$ $\it{of}$ $\it{the} $ $\it{algorithm}$ $\it{and}$ $\it{reward}$ $\it{randomness}$ (EARR)) if and only if it has the following property:
given any $\eta \in (0,1)$ and environment $(F_{1}, \cdots, F_{N})$. When a learner applies this algorithm, there exist two functions $g(t,\eta): \frac{g(t,\eta)}{t} \to 0  \text{ as } t \to  \infty$ and $l(n,\eta):l(n,\eta) \geq \beta(n)$ such that if $\forall i, \forall N \leq t \leq T$ the empirical mean of a history of pulls $\mathcal{H}_{T}$: $\widehat{\mu}_{i}(N_{i}(T))$ was bounded in the interval $ (\mu_{i}-l\left(N_{i}(t),\eta\right),\mu_{i}+l\left(N_{i}(t),\eta\right))
$, then for $T$ is large enough with probability at least $1-\eta$, the regret $R_{T}$ is bounded by $g(T,\eta)$: $R_{T} \leq g(T,\eta)$ (or then as long as $T$ is large enough the regret $R_{T}$ is bounded by $g(T)$: $R_{T} \leq g(T,\eta)$ under EARR). It is not hard to see that the common algorithms like UCB1, Thompson Sampling, and $\epsilon$-greedy are ERR, and UCB1 is an EARR algorithm.

For the EARR algorithm, We can prove the result parallel to the result in the previous subsection.
\begin{theorem}\label{theorem: EARR}
Suppose the learner runs an EARR algorithm to choose arms. Given any $\eta \in (0,1)$ and environment $(F_{1}, \cdots, F_{N})$, we can find a function $g(t, \eta)$: $\frac{g(t,\eta)}{t} \to 0  \text{ as } t \to  \infty$, such that for any history of pulls $\mathcal{H}_{t}$ where the attack is not detected, if $\Delta_{1K}^{0}>2\beta(1)$, with probability at least $1-\eta-\frac{1}{T^{8(\frac{1}{2}-\nu)^2}}$ that in $\mathcal{H}_{t}$ the number of rounds that target arm $K$ is pulled up to time $T$ is no more than
\begin{equation*}
\max \{ \dfrac{g(T,\eta)}{\nu (\Delta_{1K}^{0}-2\beta(1))},\frac{16 \sigma^{2} \ln T}{(\Delta_{1K}^{0}-2\beta(1))^{2}},\frac{2 C(T)}{\Delta_{1K}^{0}-2\beta(1)}\}
\end{equation*}
where $C(T)$ is the cumulative attack cost.
\end{theorem}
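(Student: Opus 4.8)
The plan is to reuse the skeleton of the proof of Lemma~\ref{lemma: UCB1 unattackable}, replacing the UCB1-specific bound on non-target pulls by the regret guarantee built into the EARR definition. Write $G:=\Delta_{1K}^{0}-2\beta(1)>0$ and $m_{0}:=\lceil 16\sigma^{2}\ln T/G^{2}\rceil$. First I would manufacture a \emph{virtual environment} from the non-detection hypothesis. Since the attack on $\mathcal{H}_{T}$ is not detected, for each arm $i$ the confidence intervals $\big(\hat\mu_{i}(j)-\beta(N_{i}(j)),\hat\mu_{i}(j)+\beta(N_{i}(j))\big)$, $j\le T$, have a common point $x_{i}$; applying this at the round where arm $i$ is first pulled (so $N_{i}=1$) gives $x_{1}\in(\hat\mu_{1}(N)-\beta(1),\hat\mu_{1}(N)+\beta(1))$, hence $x_{1}>\hat\mu_{1}(N)-\beta(1)=\mu_{K}+G+\beta(1)$. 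Let $\mathcal{E}'$ be any environment with means $\mu_{i}'=x_{i}$. Because $x_{i}$ lies in every confidence interval of arm $i$ and $l(\cdot,\eta)\ge\beta(\cdot)$, the trajectory of empirical means actually fed to the learner's algorithm satisfies $\hat\mu_{i}(t)\in(\mu_{i}'-l(N_{i}(t),\eta),\mu_{i}'+l(N_{i}(t),\eta))$ for all $i$ and all $N\le t\le T$, which is exactly the precondition of the EARR property for $\mathcal{E}'$. Choosing $\mu_{K}'=x_{K}$ rather than $\mu_{K}$ is what makes this precondition survive the possibility that the attacker boosts the target arm.

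Next I would invoke the EARR guarantee for $\mathcal{E}'$: for $T$ large enough the realized pull sequence has regret at most $g(T,\eta)$ against $\mathcal{E}'$, so $\sum_{t\le T}\big(\max_{j}\mu_{j}'-\mu_{I_{t}}'\big)\le g(T,\eta)$, and since $\max_{j}\mu_{j}'\ge x_{1}$ and $\mu_{K}'=x_{K}$ each pull of arm $K$ contributes at least $x_{1}-x_{K}$, whence $N_{K}(T)(x_{1}-x_{K})\le g(T,\eta)$. To finish I would control $x_{K}$: if $N_{K}(T)\le m_{0}$ the middle branch of the maximum holds outright; otherwise a Hoeffding estimate together with a union bound over the possible values of $N_{K}(T)$ shows that, with probability at least $1-T^{-8(1/2-\nu)^{2}}$, one has $\hat\mu_{K}^{0}(N_{K}(T))\le\mu_{K}+(\tfrac12-\nu)G$; here the free parameter $\nu\in(0,\tfrac12)$ and the exponent $8(\tfrac12-\nu)^{2}$ enter through the choice $m_{0}=\Theta(\sigma^{2}\ln T/G^{2})$. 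Combining this with $\hat\mu_{K}(T)\le\hat\mu_{K}^{0}(T)+C(T)/N_{K}(T)$ (the attacker can raise arm $K$'s empirical mean by at most the cost it spends on arm $K$) and with $x_{K}<\hat\mu_{K}(T)+\beta(N_{K}(T))\le\hat\mu_{K}(T)+\beta(1)$ gives $x_{1}-x_{K}>(\tfrac12+\nu)G-C(T)/N_{K}(T)$, hence $(\tfrac12+\nu)G\,N_{K}(T)-C(T)\le g(T,\eta)$; a routine case split on whether $C(T)/N_{K}(T)$ exceeds a fixed fraction of $G$ then produces the three-way maximum, with total failure probability $T^{-8(1/2-\nu)^{2}}$ from the Hoeffding step plus the $\eta$ carried by the EARR guarantee.

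The step I expect to be the real obstacle is securing a \emph{single} pair $g(\cdot,\eta),l(\cdot,\eta)$ that works throughout: the EARR definition only furnishes them for a fixed environment, whereas $\mathcal{E}'$ depends on the realization through the $x_{i}$, so strictly speaking the regret bound above is $g_{\mathcal{E}'}(T,\eta)$ for a realization-dependent $g_{\mathcal{E}'}$. I would handle this by observing that the argument only ever uses the lower bound $\max_{j}\mu_{j}'\ge x_{1}$ together with $\mu_{K}'=x_{K}$, that $x_{1}$ is pinned to within $\beta(1)$ of the observable $\hat\mu_{1}(N)$ and $x_{K}$ to within $(\tfrac12-\nu)G+\beta(1)+C(T)/N_{K}(T)$ above $\mu_{K}$ (on the Hoeffding event), and that enlarging or shrinking the remaining $\mu_{i}'$ only strengthens the conclusion; hence one can fix, once the true environment and $\eta$ are given, a function $g(t,\eta)$ with $g(t,\eta)/t\to 0$ dominating $g_{\mathcal{E}'}$ over the bounded family of virtual environments that can occur (this is true for the standard EARR algorithms). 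This is also the single place where the proof genuinely uses that the algorithm is EARR rather than merely ERR, so that the regret bound invoked above holds deterministically given the empirical-mean condition rather than only with probability $1-\eta$. Everything else, namely tracking the constants through the maximum, is routine bookkeeping.
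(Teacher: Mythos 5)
Your proposal follows essentially the same route as the paper's proof: extract a common point $\mu_i'$ from the undetected confidence intervals to define a virtual environment, verify the EARR precondition for it, apply the regret bound $g(T,\eta)$, and lower-bound the virtual gap $\mu_1'-\mu_K'$ by $\nu(\Delta_{1K}^{0}-2\beta(1))$ via Hoeffding on $\hat\mu_K^0$ together with the $C(T)/N_K(T)$ cost correction. The only substantive difference is that you explicitly flag and patch the realization-dependence of the virtual environment when invoking the EARR definition (by taking a dominating $g$), a point the paper's proof passes over silently.
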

This theorem shows that the stealthy attack on an EARR algorithm will fail in some cases. But for a general ERR(which may not be EARR) algorithm we do not have the same result. We can prove a similar result to Lemma \ref{lemma: construction}:
\begin{lemma}\label{lemma: ERR}
We can find an ERR algorithm such that the attacker can find a corresponding stealthy attack method s.t. given any environment $(F_{1},\cdots, F_{N})$ for any target arm $K \in \{1,\cdots, N\}$ with probability at least $1-\eta$ the attacker can force the learner to choose the target arm for $T-O(\log T)$ times with a cumulative attack cost at most $O\left(\log T \sqrt{\log (\frac{1}{\eta})}\right)$.
\end{lemma}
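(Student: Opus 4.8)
The plan is an explicit construction in the spirit of the one behind Lemma~\ref{lemma: construction}, but engineered so that the learning algorithm also satisfies the ERR property. The key device is a one-shot ``backdoor'' located in the first $N$ rounds. The algorithm pulls arm $t$ at round $t$ for $t=1,\dots,N$; it then applies a decoder $D(r_1,\dots,r_N)\in[N]\cup\{\bot\}$ to the realized (post-attack) rewards, commits to pulling arm $k$ in every remaining round if $D$ returns $k$, and otherwise runs UCB1. Committing has the effect that each arm $i\neq k$ is sampled exactly once in the whole game, so the interval $\big(\widehat{\mu}_i(j)-\beta(N_i(j)),\widehat{\mu}_i(j)+\beta(N_i(j))\big)$ in Eq.~\eqref{eq:hyp-test} never moves and the homogeneity test can never reject on account of arm $i$, whatever value $r_i$ takes. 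This is precisely what will let the attacker write a signal into $r_1,\dots,r_N$ without paying any stealth cost.

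Next I would choose $D$ to make the algorithm ERR. Let $D$ return the unique $k$, if one exists, with $r_k>\max_{j\neq k}r_j+\Gamma$ and $\mathrm{range}\{r_j:j\neq k\}\le\Gamma$, where $\Gamma:=3\beta(1)$, and $\bot$ otherwise; well-definedness is immediate since at most one such $k$ can exist. Put $l(n,\eta):=\beta(n)$ and let $g(t,\eta)$ be any $o(t)$ function that dominates UCB1's regret on the event that its confidence intervals are valid, e.g.\ $g(t,\eta):=t^{3/4}$. On the event $\{|\widehat{\mu}_i(t)-\mu_i|<\beta(N_i(t))\ \forall i,\forall t\ge N\}$ we have $r_i\in(\mu_i-\beta(1),\mu_i+\beta(1))$ for every $i$ (as $N_i(N)=1$), hence if $D$ returns $k$ then $\mu_k>\mu_j+\Gamma-2\beta(1)=\mu_j+\beta(1)$ for all $j\neq k$, forcing $k=i^{*}$. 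So on this event the learner either commits to the optimal arm, or runs UCB1 on post-attack rewards that, for $T$ large, stay within the width of UCB1's own confidence intervals; in both cases $R_T\le g(T,\eta)$. Since the event holds with probability $>1-\delta$ by Lemma~\ref{lemma:detection basis}, this establishes the ERR guarantee. The same reasoning applies under an attack, so ERR is not broken by the presence of an attacker.

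Then I would specify the attack. The attacker knows the means, so it fixes a level $v\le\mu_K-\Gamma-\sigma\sqrt{2\log(1/\eta)}-1$ and, during rounds $1,\dots,N$, sets $\alpha_i=r^{0}_i-v$ for every $i\neq K$ while leaving arm $K$'s reward untouched; from round $N+1$ on it never attacks. Stealthiness follows exactly as in the proof of Lemma~\ref{lemma: attack method stealthy}: every arm $i\neq K$ contributes a single unchanging observation, and arm $K$'s reward stream is honest, so by Lemma~\ref{lemma:detection basis} the test rejects with probability at most $\delta$. For effectiveness, a sub-Gaussian tail bound gives, with probability at least $1-\eta$, both $r^{0}_K>v+\Gamma=\max_{j\neq K}r_j+\Gamma$ and $\mathrm{range}\{r_j:j\neq K\}=0\le\Gamma$; on that event $D=K$, so the learner pulls arm $K$ in all of rounds $N+1,\dots,T$, i.e.\ $N_K(T)\ge T-(N-1)$, and the cumulative cost is $\sum_{i\neq K}|r^{0}_i-v|$, which the same tail bound bounds by a constant depending on the environment, $\sigma$ and $\Gamma$ times $\sqrt{\log(1/\eta)}$ — in particular $O\!\big(\log T\sqrt{\log(1/\eta)}\big)$, so the claim follows.

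The step I expect to be the main obstacle is the design of $D$, which must be simultaneously: (i) dormant under honest rewards for \emph{every} environment, including those whose means are arbitrarily large or arbitrarily spread — this rules out any decoder keyed to absolute reward magnitudes and is why I use the shift- and scale-invariant ``outlier-gap $\Gamma$'' test together with the forced identity $k=i^{*}$ to neutralize the backdoor on the ERR event; (ii) invisible to the homogeneity test, handled by committing so that non-target arms are sampled only once; and (iii) reliably triggerable by a stealthy attacker, handled by leaving the target arm honest and writing the signal into the discarded arms. Reconciling (i) with (iii) is the one quantitatively delicate point — $\Gamma$ must exceed $2\beta(1)$ yet the attacker's level $v$ must remain finite — and the residual $\eta$ in the success probability is forced solely by the tail event that arm $K$'s first honest reward $r^{0}_K$ is atypically small, which is outside anyone's control.
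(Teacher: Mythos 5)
Your construction is a genuinely different route from the paper's. The paper recycles the trigger-sequence/tree machinery of Lemma~\ref{lemma: construction} (changing ``reset the history'' to ``keep the history'' when a trigger breaks) and proves the ERR property by bounding the time $t_{0}$ at which the trigger sequence breaks; you instead use a one-shot decoder on the first $N$ rewards followed by a hard commit. The idea is attractive, the ERR verification is essentially right, and on the success branch the effectiveness and cost calculations go through (with an even better, constant, cost). But there are two genuine gaps. The main one is that stealthiness fails on the fallback branch. Your claim that ``every arm $i\neq K$ contributes a single unchanging observation'' is only true when $D$ returns $K$ and the learner commits. On the complementary event --- which has positive probability (e.g.\ $r_{K}^{0}$ atypically small) and is bounded only by $\eta$, not by $0$ --- the learner runs UCB1 and re-pulls every arm $i\neq K$. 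Since you stop attacking after round $N$, arm $i$'s running empirical mean drifts from $v$ back toward $\mu_{i}$, and because $\mu_{i}-v\geq \mu_{i}-\mu_{K}+3\beta(1)+\sigma\sqrt{2\log(1/\eta)}+1$ the interval in Eq.~\eqref{eq:hyp-test} at $N_{i}=1$ and the interval at larger $N_{i}$ become disjoint, so the test rejects with high probability on that branch. The total detection probability is then up to $\delta+\eta$, which violates the definition of a stealthy attack (detection probability at most $\delta$ in every environment). The fix is easy --- in the fallback branch keep shifting each arm $i\neq K$ by the fixed constant $\alpha_{i}=r_{i}^{0}-v$ on every subsequent pull, exactly as in Lemma~\ref{lemma: attack method stealthy}, at an extra $O(\log T)$ cost incurred only on a probability-$\eta$ event --- but it is missing as written.

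The second gap is the attacker's information. You need $v\leq\mu_{K}-\Gamma-\sigma\sqrt{2\log(1/\eta)}-1$, and (for $K\neq 1$) you must commit to $v$ at round $1$, before arm $K$ has been pulled. The lemma's quantifier order is ``an attack method such that, given any environment, \dots'', and the paper's attacker only observes realized rewards of pulled arms; no fixed $v$, nor $v=r_{1}^{0}-L$ for a universal constant $L$, works across all environments because $\mu_{1}-\mu_{K}$ is unbounded. The paper's construction sidesteps this by encoding the target into algorithm-specified constants $c^{k}_{tj}$ rather than a mean-dependent threshold; you would need to either make the trigger values environment-independent or explicitly grant the attacker knowledge of $\mu_{K}$, which is a stronger assumption than the paper's other attacks use. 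Finally, note that your algorithm is actually EARR (the conditional regret bound is deterministic), so it formally falls under Theorem~\ref{theorem: EARR}; there is no contradiction only because your attacker corrupts arm $1$'s first reward and forces $\Delta^{0}_{1K}=v-\mu_{K}<2\beta(1)$, but this means the example does not exhibit the ERR-versus-EARR separation this lemma is meant to support in the paper's narrative.
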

The construction is similar to that in Lemma \ref{lemma: construction}. Combined with Theorem \ref{theorem: EARR}, this lemma suggests that there is a restriction on the possibility of a successful stealthy attack for EARR, but this does not directly hold for ERR.

\begin{figure*}[ht]
    \centering
    \begin{tabular}{c c}
     \includegraphics[width=6cm]{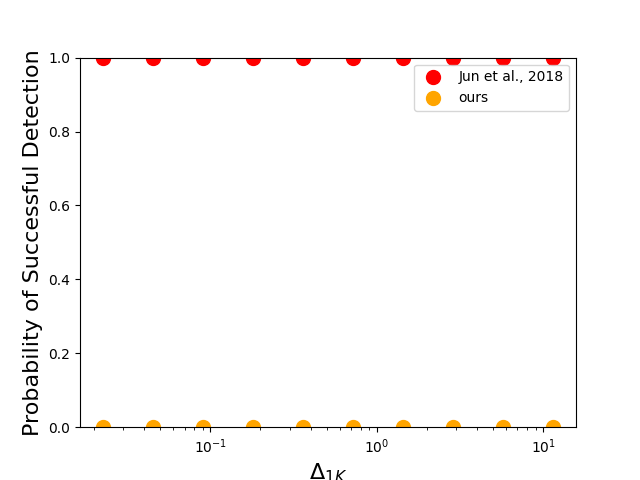} & \includegraphics[width=6cm]{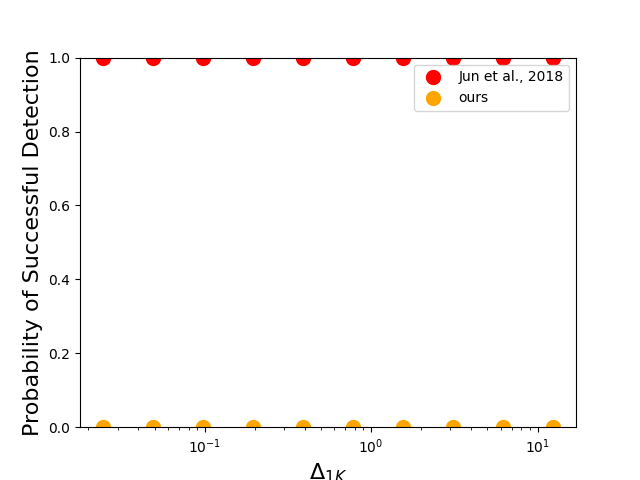}
    \end{tabular}
    \caption{Probability of successful detection under \cite{jun2018adversarial}'s attack method when UCB1 is the victim algorithm. Left: $(N,T) = (10, 10000)$. Right: $(N,T) = (30, 20000)$}
    \label{fig:p-detect-UCB1}
\end{figure*}

\begin{figure*}[ht]
    \centering
    \begin{tabular}{c c}
     \includegraphics[width=6cm]{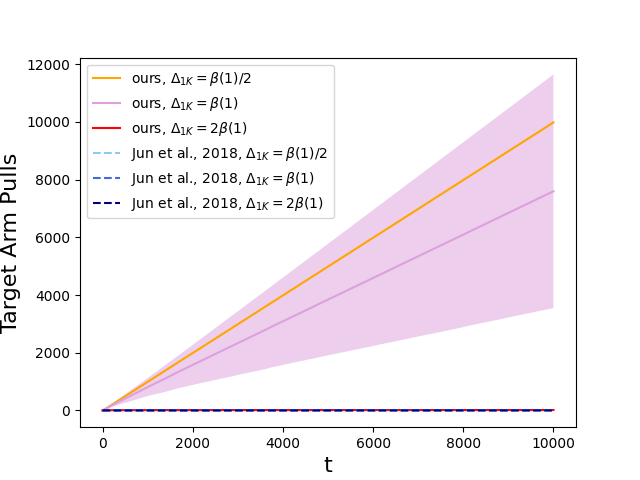} &
     \includegraphics[width=6cm]{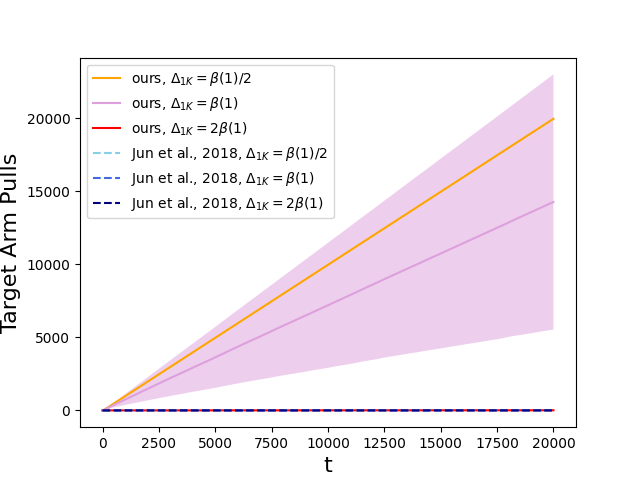} 
    \end{tabular}
    \caption{Target arm pulls under different attack methods when UCB1 is the victim algorithm. Left: $(N,T) = (10, 10000)$. Right: $(N,T) = (30, 20000)$}
    \label{fig:armpull-UCB1}
\end{figure*} 

\section{Experiments}\label{sec:exp}

We performed extensive empirical evaluations using simulation to verify our theoretical results against different MAB algorithms, attack methods, and environment configurations. We mainly present results for UCB1 here, specific results for $\epsilon$-greedy will be provided in the appendix.

\subsection{Experiment Setup} 

In our simulations, we execute the reward poisoning attack method proposed in \cite{jun2018adversarial} as our baseline and our attack algorithms against UCB1 and $\epsilon$-greedy algorithms in the presence of attack detection proposed in Section \ref{section 4}. We varied the number of arms $N$ in $\{10, 30\}$ and set each arm's reward distribution to an independent Gaussian distribution. The ground-truth mean reward $\mu_i$ of each arm $i$ is sampled from $N(0, 1)$. For the $\epsilon$-greedy algorithm, we set its exploration probability $\epsilon_{t} = \min\{1,\frac{CN}{t}\}$. We set $C = 500 > 3$ is chosen only for the convenience of presenting the results. In all our experiments, we set the detection method's parameter $\delta$ to $0.05$, the high probability coefficient $\eta$ to $0.05$, and the reward's noise scale $\sigma$ in the environment to $0.1$. We run each experiment for $T= 10000~(\text{for } N =10)$ or $20000~(\text{for } N =30)$ iterations and repeat each experiment $20$ times to report the mean and variance of performance.

\subsection{Experiment Results} 
We first consider the effectiveness of our attack detection method when the attacker applies the commonly used attack algorithm \cite{jun2018adversarial} against UCB1 and $\epsilon$-greedy algorithms. We randomly created $10$ bandit instances to repeat the experiment, where we only vary the ground-truth mean reward gap $\Delta_{1K}$. We set $\Delta_{1K}=\beta(1)/2^{4-i}$ for the $i$-th bandit instance. As shown in Figure \ref{fig:p-detect-UCB1}, with a high probability this attack algorithm's reward manipulations can be detected successfully, which is predicted by Corollary \ref{corollary: UCB1 unattackable} and \ref{corollary: epsilon unattackable}. We find that this result is almost unaffected under different settings of $N$ and $T$. That's because when $N$ is large enough, there may be more non-target arms whose ground-truth mean is significantly larger than the target arm's ground-truth mean.

Next, we compare the performance of the baseline algorithm and our proposed algorithm under the detection. We created different environments to run the experiment, by varying $\Delta_{1K}$ in $\{\beta(1)/2, \beta(1), 2\beta(1)\}$. We stop accumulating the number of target arm pulls once the attack is detected. Note that this does not mean that the learner will necessarily stop learning after this point. From Figure \ref{fig:armpull-UCB1}, we can find that because the attack will be detected quickly, our baseline attack  algorithm cannot trap the victim algorithm to pull the target arm linear times. For our algorithm, because it is stealthy, the victim algorithm failed to notice the reward manipulation and was executed till the end.

\section{Conclusion}
In this paper, we studied the problem of reward poisoning attacks on stochastic multi-armed bandits. We introduced a mechanism to detect such attacks, and we find that previous attack methods against UCB1 and $\epsilon$-greedy algorithms can be easily detected. Focusing on such a detection method, we proposed a stealthy attack method that will succeed under specific conditions concerning the stochastic bandit environments and the reward of the first pulled arm. 

\bibliographystyle{apalike}
\bibliography{main.bib}
\newpage

\appendix
\onecolumn

\section{Definitions and Prerequisites}
\begin{definition}A real-valued random variable $X$ follows a $\sigma$-sub-Gaussian distribution if
$$
\mathbb{E}[e^{\lambda (X - \mu)}] \leq  e^{\frac{\sigma^2 \lambda^2}{2}}, \quad \forall \lambda \in \mathbb{R}
$$
where $\mu = \mathbb{E}[X]$. 
\end{definition}

\begin{lemma}\label{lemma: hoeffding}  Let $X_{1}, X_{2}, \cdots, X_{n}$ be independent and $\sigma$-sub-Gaussian random variables, $\bar{X}=\frac{1}{n} \sum_{i=1}^{n} X_{i}$, and $\mathbb{E}[X]$ be the mean. Then, for every $\epsilon \geq 0$, we have 
\begin{equation*}
    \mathbb{P}(|\bar{X}-\mathbb{E}[X]| \geq \epsilon) \leq e^{-n \epsilon^{2} / 2 \sigma^{2}} 
\end{equation*}
\end{lemma}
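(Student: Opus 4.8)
The plan is to prove this by the standard Chernoff bounding (exponential Markov) argument, using only the sub-Gaussian moment generating function (MGF) bound from the definition stated just above. First I would exploit independence to factorize the MGF of the centered sum: writing $\mu_i = \mathbb{E}[X_i]$, for any $\lambda \in \mathbb{R}$,
\[
\mathbb{E}\Big[e^{\lambda \sum_{i=1}^n (X_i - \mu_i)}\Big] = \prod_{i=1}^n \mathbb{E}\big[e^{\lambda(X_i-\mu_i)}\big] \le \prod_{i=1}^n e^{\sigma^2\lambda^2/2} = e^{n\sigma^2\lambda^2/2}.
\]
Rescaling $\lambda \mapsto \lambda/n$ in the exponent, this shows $\bar X - \mathbb{E}[X]$ has MGF bounded by $e^{\sigma^2\lambda^2/(2n)}$, i.e. it is $\sigma/\sqrt{n}$-sub-Gaussian.

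Next I would apply the exponential Markov inequality to the upper tail: for any $\lambda > 0$,
\[
\mathbb{P}\big(\bar X - \mathbb{E}[X] \ge \epsilon\big) \le e^{-\lambda\epsilon}\,\mathbb{E}\big[e^{\lambda(\bar X - \mathbb{E}[X])}\big] \le e^{-\lambda\epsilon + \sigma^2\lambda^2/(2n)}.
\]
Optimizing the right-hand side over $\lambda > 0$ with the choice $\lambda = n\epsilon/\sigma^2$ gives the bound $e^{-n\epsilon^2/(2\sigma^2)}$. The lower tail $\mathbb{P}(\bar X - \mathbb{E}[X] \le -\epsilon)$ is handled identically after replacing each $X_i$ by $-X_i$, which is again $\sigma$-sub-Gaussian, yielding the same bound.

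There is no genuine obstacle here — the statement is a textbook consequence of the definition — so the only thing worth flagging is the constant. A union bound over the two one-sided estimates gives $\mathbb{P}(|\bar X - \mathbb{E}[X]| \ge \epsilon) \le 2e^{-n\epsilon^2/(2\sigma^2)}$, so strictly speaking the two-sided form as written should carry a factor $2$; since this affects only constants downstream (e.g. the precise form of $\beta(n,\delta)$ and the failure probabilities), I would either note that the factor can be absorbed or state and use the one-sided bound wherever only one direction is needed.
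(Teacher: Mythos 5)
Your proof is correct and is the standard Chernoff/MGF argument; the paper states this lemma as a prerequisite without proof, so there is nothing to diverge from. Your observation about the constant is a genuine catch: the two-sided bound as written in the paper should carry a factor of $2$ (each one-sided tail contributes $e^{-n\epsilon^{2}/2\sigma^{2}}$, and the union bound doubles it), and indeed the paper's downstream uses --- e.g.\ Eq.~\eqref{eq:lemma UCB1 unattackable 1}, which bounds only the one-sided event $\hat{\mu}_{i}^{0}(t-1)-\mu_{i}\geq\Delta/3$ --- only ever need the one-sided form, so the missing factor is harmless but the statement as printed is slightly too strong for the absolute-value version.
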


\begin{lemma}\label{lemma: Bernstein} (Bernstein inequality). Let $X_1, \ldots, X_n$ be random variables within range $[0,1]$ and
$$
\sum_{t=1}^n \operatorname{Var}\left[X_t \mid X_{t-1}, \ldots, X_1\right]=\sigma^2
$$
Let $S_n=\sum_{i=1}^{n}X_i$. Then for all $\epsilon \geq 0$
$$
\mathbb{P}\left\{S_n \geq \mathbb{E}\left[S_n\right]+\epsilon\right\} \leq \exp \left\{-\frac{\epsilon^2 / 2}{\sigma^2+\epsilon / 2}\right\}
$$
\end{lemma}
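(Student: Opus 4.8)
The plan is to prove this via the classical Chernoff (exponential Markov) method applied to the centered sum, controlling the moment generating function one increment at a time through the tower property over the natural filtration $\mathcal{F}_t = \sigma(X_1,\dots,X_t)$. First I would fix $\lambda > 0$ and apply Markov's inequality to $e^{\lambda(S_n - \mathbb{E}[S_n])}$, giving
$$\mathbb{P}\{S_n \ge \mathbb{E}[S_n] + \epsilon\} \le e^{-\lambda\epsilon}\,\mathbb{E}\!\left[e^{\lambda(S_n - \mathbb{E}[S_n])}\right],$$
so that everything reduces to an upper bound on the MGF of the centered sum, after which I optimize over $\lambda$.

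To handle the MGF I would set $D_t = X_t - \mathbb{E}[X_t \mid \mathcal{F}_{t-1}]$, the martingale differences associated with the $X_t$ (in the independent case these coincide with $X_t - \mathbb{E}[X_t]$, matching the centering by $\mathbb{E}[S_n]$). Each $D_t$ satisfies $|D_t| \le 1$ since $X_t \in [0,1]$, with $\mathbb{E}[D_t \mid \mathcal{F}_{t-1}] = 0$ and $\mathbb{E}[D_t^2 \mid \mathcal{F}_{t-1}] = \operatorname{Var}[X_t \mid \mathcal{F}_{t-1}] =: v_t$. The core estimate is the single-increment conditional MGF bound: expanding $e^{\lambda D_t}$ as a power series, taking the conditional expectation (the linear term vanishes), and using $|\mathbb{E}[D_t^k \mid \mathcal{F}_{t-1}]| \le \mathbb{E}[D_t^2 \mid \mathcal{F}_{t-1}]$ for $k \ge 2$ (valid because $|D_t| \le 1$ forces $|D_t|^k \le D_t^2$), I obtain
$$\mathbb{E}\!\left[e^{\lambda D_t} \mid \mathcal{F}_{t-1}\right] \le \exp\!\big((e^\lambda - 1 - \lambda)\,v_t\big).$$

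I would then telescope: conditioning $e^{\lambda \sum_{t\le n} D_t}$ on $\mathcal{F}_{n-1}$, pulling out the $\mathcal{F}_{n-1}$-measurable factor, applying the single-increment bound to the last term, and iterating, this yields
$$\mathbb{E}\!\left[e^{\lambda \sum_{t=1}^n D_t}\right] \le \exp\!\Big((e^\lambda - 1 - \lambda)\sum_{t=1}^n v_t\Big) = \exp\!\big((e^\lambda - 1 - \lambda)\sigma^2\big),$$
where the last equality is exactly the hypothesis $\sum_t v_t = \sigma^2$; it is crucial that this sum is a deterministic constant, so no residual random factor survives the telescoping. Combining with the Chernoff step gives $\mathbb{P}\{S_n \ge \mathbb{E}[S_n]+\epsilon\} \le \exp\!\big(-\lambda\epsilon + (e^\lambda-1-\lambda)\sigma^2\big)$. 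Finally I would invoke the elementary bound $e^\lambda - 1 - \lambda \le \tfrac{\lambda^2/2}{1-\lambda/3}$ for $0 \le \lambda < 3$ and choose $\lambda = \epsilon/(\sigma^2 + \epsilon/3)$, producing the sharp exponent $\tfrac{\epsilon^2/2}{\sigma^2 + \epsilon/3}$; since $\sigma^2 + \epsilon/3 \le \sigma^2 + \epsilon/2$, the stated (weaker) bound $\exp(-\tfrac{\epsilon^2/2}{\sigma^2+\epsilon/2})$ follows a fortiori.

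The main obstacle is the conditional single-increment MGF bound and its correct propagation through the filtration: one must justify the term-by-term conditional expectation of the exponential series, verify $|\mathbb{E}[D_t^k\mid\mathcal{F}_{t-1}]|\le \mathbb{E}[D_t^2\mid\mathcal{F}_{t-1}]$, and keep the telescoping exact by using $\sum_t v_t = \sigma^2$ as a fixed quantity. The accompanying subtlety is the centering by $\mathbb{E}[S_n]$ versus the predictable compensator $\sum_t \mathbb{E}[X_t\mid\mathcal{F}_{t-1}]$: these agree under independence, or when the statement is read for a martingale-difference sequence, which is the setting in which the result is applied. Once the MGF bound is established, the $\lambda$-optimization is routine.
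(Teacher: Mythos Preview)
The paper does not prove this lemma at all: it appears in Appendix~A (``Definitions and Prerequisites'') as a stated, standard result alongside the sub-Gaussian definition and Hoeffding's inequality, with no accompanying argument. Your Chernoff--martingale proof is the standard derivation and is correct as written; the single-increment bound $\mathbb{E}[e^{\lambda D_t}\mid\mathcal{F}_{t-1}]\le\exp((e^\lambda-1-\lambda)v_t)$ via $|D_t|^k\le D_t^2$ for $k\ge 2$, the telescoping under the deterministic constraint $\sum_t v_t=\sigma^2$, and the final $\lambda$-optimization are all sound.

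The one point worth flagging is exactly the one you already identify: the statement centers by $\mathbb{E}[S_n]$, whereas your martingale argument naturally centers by the predictable compensator $\sum_t\mathbb{E}[X_t\mid\mathcal{F}_{t-1}]$. In the paper's single use of this lemma (the proof of Lemma~\ref{lemma: epsilon unattackable}, bounding the count $N_1^R(t)$ of random exploration pulls), the summands are Bernoulli indicators whose conditional means are the deterministic $\epsilon_s/N$, so the two centerings coincide and your argument applies verbatim. For a fully general dependent sequence the statement as literally written would need the compensator in place of $\mathbb{E}[S_n]$, but this does not affect anything in the paper.
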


\section{Details of the proofs}

\subsection{Proof of Lemma \ref{lemma: detection error}}

\begin{proof}
Define $E_{\delta}^{t} = \left\{\forall i :\left|\hat{\mu}_{i}^{0}(t)-\mu_{i}\right|<\beta\left(N_{i}(t), \delta\right)\right\}$. By Lemma \ref{lemma:detection basis} the probability of the union of all the type-I errors can be upper bounded by
$
\mathbb{P}\left(\bigcup_{t = 1}^{T}\{\overline{E_{\delta}^{t}} \mid H_{0}^{t}\}\right) = \mathbb{P}\left(\{\bigcup_{t = 1}^{T}\overline{E_{\delta}^{t}}\} \mid H_{0}^{T}\right) = \mathbb{P}\left(\overline{E_{\delta}} \mid H_{0}^{T}\right) \leq \delta
$
\end{proof}

\subsection{Proof of Lemma \ref{lemma: UCB1 unattackable}}
\begin{proof}
We denote $|\Delta_{1K}^{0}-\beta(1)|$ as $\Delta$. We follow this notation throughout the remainder of the appendix. Let $M(T)=\max \left\{\frac{81 \sigma^{2} \log T}{\Delta^{2}}, \frac{3 C(T)}{\Delta}, (\frac{\pi^2N}{3\delta})^{\frac{2}{5}}\right\}$.  Notice that
$$
\begin{aligned}
&\mathbb{P}\left(N_{K}(T) > M(T)\right) \\ &=
\mathbb{P}\left(\bigcup_{t=N+1}^{T}\left\{\text{t is the first time that } N_{K}(t) = M(T)+1\right\}\right)\\&\leq 
\sum_{t=N+1}^{T}\mathbb{P}(\text{t is the first time that } N_{K}(t) = M(T)+1)\\& =
\sum_{t=N+1}^{T}\mathbb{P}(N_{K}(t-1) = M(T), I_{t} = K)\\&\leq
\sum_{t=N+1}^{T} \mathbb{P} \Big (\hat{\mu}_{K}(t-1) + 3\sigma\sqrt{\frac{\log t}{N_{K}(t-1)}} > \hat{\mu}_{1}(t-1) + 3\sigma\sqrt{\frac{\log t}{N_{1}(t-1)}}, {N_{K}(t-1) = M(T)}\Big )
\end{aligned}
$$
where the last step is due to the UCB arm selection strategy.

Notice that we have
$$
\begin{aligned}
&\mathbb{P}\left(\hat{\mu}_{K}(t-1) + 3\sigma\sqrt{\frac{\log t}{N_{K}(t-1)}} > \hat{\mu}_{1}(t-1) + 3\sigma\sqrt{\frac{\log t}{N_{1}(t-1)}}, N_{K}(t-1) = M(T)\right) \\
&\leq \mathbb{P}\left(\hat{\mu}_{K}(t-1) + 3\sigma\sqrt{\frac{\log t}{N_{K}(t-1)}} > \hat{\mu}_{1}(t-1) + 3\sigma\sqrt{\frac{\log t}{N_{1}(t-1)}}\bigg|  N_{K}(t-1) = M(T)\right)
\end{aligned}
$$
Now we bound the terms above. By the Lemma \ref{lemma: hoeffding}, we have for any $w \geq M(T)$, 
\begin{equation}\label{eq:lemma UCB1 unattackable 1}
    \forall i, \mathbb{P}\left(\hat{\mu}_{i}^{0}(t-1)-\mu_{i} \geq \frac{\Delta}{3} \bigg| N_{i}(t-1)=w\right) \leq \frac{1}{T^{9 / 2}}
\end{equation}
Then we have,
$$
\begin{aligned}
&\mathbb{P}\left(\hat{\mu}_{K}(t-1) + 3\sigma\sqrt{\frac{\log t}{N_{K}(t-1)}} > \hat{\mu}_{1}(t-1) + 3\sigma\sqrt{\frac{\log t}{N_{1}(t-1)}}\bigg|  N_{K}(t-1) = M(T)\right) \\
&\quad \leq \mathbb{P}\left(\hat{\mu}_{K}^{0}(t-1)+3 \sigma \sqrt{\frac{\log t}{N_{K}(t-1)}}+\frac{\Delta}{3} \geq \hat{\mu}_{1}(t-1)+3 \sigma \sqrt{\frac{\log t}{N_{1}(t-1)}}  \bigg| N_{K}(t-1) = M(T)\right) \\
&\quad \leq \mathbb{P}\left(\hat{\mu}_{K}^{0}(t-1)+\frac{2\Delta}{3} \geq \hat{\mu}_{1}(t-1)+3 \sigma \sqrt{\frac{\log t}{N_{1}(t-1)}} \bigg| N_{K}(t-1) = M(T)\right)
\end{aligned}
$$
The first inequality holds because $\hat{\mu}_{K}(t-1)-\hat{\mu}_{K}^{0}(t-1) \leq \frac{C(T)}{N_{K}(t-1)} = \frac{C(T)}{M(T)} \leq \frac{\Delta}{3} $ and the second inequality holds because $N_{K}(t-1)\geq$ $\frac{81 \sigma^{2} \log T}{\Delta^{2}}$. We can further upper bound the last term in the above inequality, notice that
$$
\begin{aligned}
&\mathbb{P}\left(\hat{\mu}_{K}^{0}(t-1)+\frac{2\Delta}{3} \geq \hat{\mu}_{1}(t-1)+3 \sigma \sqrt{\frac{\log t}{N_{1}(t-1)}} \bigg|  N_{K}(t-1) = M(T)\right)\\ 
\leq &\mathbb{P}\left(\hat{\mu}_{K}^{0}(t-1)-\mu_{K} \geq \frac{\Delta}{3} \bigg| N_{K}(t-1)=M(T)\right)\\
&+\mathbb{P}\left(\hat{\mu}_{1}(N)-\beta(1)-\hat{\mu}_{1}(t-1) \geq 3 \sigma \sqrt{\frac{\log t}{N_{1}(t-1)}}\right) \\
\leq &\mathbb{P}\left(\hat{\mu}^{0}_{K}(t-1)-\mu_{K} \geq \frac{\Delta}{3} \bigg| N_{K}(t-1)=M(T)\right)\\
&+\mathbb{P}\left( \beta(N_{1}(t-1))\geq 3 \sigma \sqrt{\frac{\log t}{N_{1}(t-1)}}\right)\\
\leq &\frac{1}{T^{9 / 2}}
\end{aligned}
$$
The second inequality holds because the detection in the history of pulls $\mathcal{H}_{t}$ fails to identify the attack, then we have $\hat{\mu}_{1}(N)-\beta(1)- \hat{\mu}_{1}(t-1) \leq \beta(N_{1}(t-1))$. For the third inequality, note that $(\frac{\pi^2N}{3\delta})^{\frac{2}{5}} \leq M(T) = N_{K}(t-1) < t $, we have
$t^{\frac{9}{2}} > \frac{\pi^2Nt^{2}}{3\delta} \geq \frac{\pi^2NN_{1}^{2}(t-1)}{3\delta} $  $\Rightarrow$  $\beta(N_{1}(t-1)) < 3 \sigma \sqrt{\frac{\log t}{N_{1}(t-1)}}$ and Eq \eqref{eq:lemma UCB1 unattackable 1}.
Combining the results above we have
$$
\mathbb{P}\left(N_{K}(T) > M(T)\right) \leq \sum_{t=N+1}^{T} \frac{1}{T^{9 / 2}} \leq \frac{1}{T^{3}} 
$$
This completes the proof.
\end{proof}

\subsection{Proof of Corollary \ref{corollary: UCB1 unattackable}}
\begin{proof}
Consider the attack method described in Corollary \ref{corollary: UCB1 unattackable}, given the environment and $\epsilon$, we can find functions $g(t)$ and $c(t)$ such that the attack method causes the learner to pull the target arm at least $g(t)$ times at a cost no more than $c(t)$ with probability $1-\epsilon$, where $\lim_{t \rightarrow \infty}\frac{g(t)}{t} = 1$ and $\lim_{t \rightarrow \infty}\frac{c(t)}{t} = 0$. Under this environment and attack method, we study the properties of the history of pulls $\mathcal{H}_{T}$.
We refer to $\mathcal{H}_{T}$ as \emph{proper}, if during this game the total cost the attacker spent is no more than $c(T)$ and the learner played target arm at least $g(T)$ times. Then we have $\forall T \geq T_{0}, \mathbb{P}(\text{$\mathcal{H}_{T}$ is proper}) \geq 1-\epsilon  \Rightarrow \mathbb{P}(\text{$\mathcal{H}_{T}$ is not proper}) \leq \epsilon$. Now consider $\mathcal{H}_{T}$ where the attack is not detected and $\Delta^{0}_{1K} > \beta(1) + \Delta_{0}$ holds. By Lemma \ref{lemma: UCB1 unattackable}, with probability at least $1-\frac{1}{T^{3}}$ we have 
$$
N_{K}(T) \leq \max \left\{\frac{3 C(T)}{\Delta_{0}}, \frac{81 \sigma^{2} \log T}{\Delta_{0}^{2}},(\frac{\pi^2N}{3\delta})^{\frac{2}{5}}\right\}
$$
If  $\mathcal{H}_{T}$ is also proper, we have
$$
g(T) \leq \max \left\{\frac{3 C(T)}{\Delta_{0}}, \frac{81 \sigma^{2} \log T}{\Delta_{0}^{2}},(\frac{\pi^2N}{3\delta})^{\frac{2}{5}}\right\}
$$
Notice that let $T \rightarrow \infty$ we have $\frac{g(T)}{T} \rightarrow 1$, $\frac{3 C(T)}{T\Delta_{0}} \rightarrow 0$ and $\frac{81 \sigma^{2} \log T}{T\Delta_{0}^{2}} \rightarrow 0$, so we can find $T_{0}^{'}$ related to $\Delta_{0}$ instead of $T$ such that $\forall T> T_{0}^{'}, g(T) > \max \left\{\frac{3 C(T)}{\Delta_{0}}, \frac{81 \sigma^{2} \ln T}{\Delta_{0}^{2}},(\frac{\pi^2N}{3\delta})^{\frac{2}{5}}\right\}$. 
Let $T>T_{0}^{'}$, we find a contradiction. Hence we have $\forall  T>\max\{T_{0},T_{0}^{'}\} = T_{1}$,
$$
\mathbb{P}( \text{$\mathcal{H}_{T}$ is not proper} \mid \text{attack in } \text{$\mathcal{H}_{T}$ is not detected}, \Delta^{0}_{1K} > \beta(1) + \Delta_{0}) \geq 1-\frac{1}{T^{3}}    \quad \quad 
$$
Combining with $P(\text{$\mathcal{H}_{T}$ is not proper}) \leq \epsilon$, we get
$$
\begin{aligned}
\epsilon \geq \mathbb{P}(\text{$\mathcal{H}_{T}$ is not proper}) &\geq \mathbb{P}(\text{$\mathcal{H}_{T}$ is not proper} ,\text{attack in $\mathcal{H}_{T}$ is not detected}) \\ 
&\geq \mathbb{P}(\text{$\mathcal{H}_{T}$ is not proper} , \text{attack in $\mathcal{H}_{T}$ is not detected}, \Delta^{0}_{1K} > \beta(1) + \Delta_{0}) \\
&\geq (1-\frac{1}{T^{3}})\mathbb{P}(\text{attack in $\mathcal{H}_{T}$ is not detected}, \Delta^{0}_{1K} > \beta(1) + \Delta_{0})
\end{aligned}
$$
Therefore, we have 
$\mathbb{P}(\text{attack in $\mathcal{H}_{T}$ is not detected}, \Delta^{0}_{1K} > \beta(1) + \Delta_{0}) \leq \frac{\epsilon}{(1-\frac{1}{T^{3}})}$.\\
We also note that $\mathbb{P}(\text{attack in $\mathcal{H}_{T}$ is not detected}, \Delta^{0}_{1K} \leq \beta(1) + \Delta_{0}) \leq \mathbb{P}(\Delta^{0}_{1K} \leq \beta(1) + \Delta_{0})$ . Combine these inequalities we have 
 $$\mathbb{P}(\text{attack in $\mathcal{H}_{T}$ is not detected}) \leq \frac{\epsilon}{(1-\frac{1}{T^{3}})} +\mathbb{P}(\hat{\mu}_{1}(N) \leq \mu_K + \beta(1) + \Delta_{0})$$
As a result, $\mathbb{P}(\text{attack in $\mathcal{H}_{T}$ is detected}) \geq 1-\frac{\epsilon}{(1-\frac{1}{T^{3}})}-\mathbb{P}(\hat{\mu}_{1}(N) \leq \mu_K + \beta(1) + \Delta_{0})$. This completes the proof.
\end{proof}

\subsection{Proof of Lemma \ref{lemma: epsilon unattackable}}
\begin{proof}
Let $M(T)=\max \left\{e^{\frac{5a}{9}-1}C^{2}N^{2}\log T, \frac{81 \sigma^{2} \log T}{\Delta^{2}}, \frac{3 C(T)}{\Delta}, CNe^{\frac{a}{6C}-\frac{1}{2}}\right\} $. Define $E^{'}=\{\exists t_{0} \text{  s.t. } N_{K}(t_{0}) \geq M(T) \cap \exists t_{1} \geq t_{0} \text{ s.t.} \text{ the} \text{ learner} \text{ chooses} \text{ to} \text{ exploit} \text{ at time } t_{1} \text{ and } I_{t_{1}}=K\} $.
Notice that
$$
\begin{aligned}
&\mathbb{P}\left(E^{'}\right) \\& =
\mathbb{P}\left(\bigcup_{t=1}^{T}\{t \text{ is the first time the learner chooses to exploit  and } N_{K}(t) = M(T)+1\}\right)\\&\leq 
\sum_{t > M(T)}^{T}\mathbb{P}\left(t \text{ is the first time the learner chooses to exploit  and } N_{K}(t) = M(T)+1\right)\\& \leq
\sum_{t > M(T)}^{T} \mathbb{P}\left(\hat{\mu}_{K}(t-1)  > \hat{\mu}_{1}(t-1), N_{K}(t-1) = M(T)\right) \\
& \leq
\sum_{t > M(T)}^{T} \mathbb{P}\left(\hat{\mu}_{K}(t-1)  > \hat{\mu}_{1}(t-1) | N_{K}(t-1) = M(T)\right) 
\end{aligned}
$$
By Lemma \ref{lemma: hoeffding}, we have for any $u \geq M(T)$ 
\begin{equation}\label{eq: lemma epsilon unattackable 1}
    \forall i, \mathbb{P}\left(\hat{\mu}_{i}^{0}(t-1)-\mu_{i} \geq \frac{\Delta}{3} \bigg| N_{i}(t-1)=u\right) \leq \frac{1}{T^{\frac{9}{2}}}
\end{equation}
Then 
$$
\begin{aligned}
&\mathbb{P}\left(\hat{\mu}_{K}(t-1) > \hat{\mu}_{1}(t-1) \mid  N_{K}(t-1) = M(T)\right) \\
\leq &\mathbb{P}\left(\hat{\mu}_{K}^{0}(t-1)+\frac{\Delta}{3} > \hat{\mu}_{1}(t-1) \bigg| N_{K}(t-1) = M(T)\right) \\
\leq &\mathbb{P}\left(\hat{\mu}_{K}^{0}(t-1)-\mu_{K} > \frac{\Delta}{3} \bigg| N_{K}(t-1)=M(T)\right)+\\
&\mathbb{P}\left(\hat{\mu_{1}}(N)-\beta(1) \geq \hat{\mu}_{1}(t-1) + \frac{\Delta}{3} \bigg| t > M(T) \right) \\
\leq &\frac{1}{T^{\frac{9}{2}}} + \mathbb{P}\left(\beta(N_{1}(t-1)) \geq \frac{\Delta}{3} \bigg| t > M(T) \right)
\end{aligned}
$$
The first inequality holds because $\hat{\mu}_{K}(t-1)-\hat{\mu}_{K}^{0}(t-1) \leq \frac{C(T)}{N_{K}(t-1)} = \frac{C(T)}{M(T)} \leq \frac{\Delta}{3} $. The second inequality holds because $\hat{\mu}_{1}(N)-\beta(1)- \hat{\mu}_{1}(t-1)\leq \beta(N_{1}(t-1))$ and Eq \eqref{eq: lemma epsilon unattackable 1}. 

Define $a$ as a constant that does not depend on $T$, such that $\beta(a) = \frac{\Delta}{3}$, or $a = \beta^{-1}(\frac{\Delta}{3})$. Similar to \cite{auer2002finite}, let $N_{i}^{R}(t)$ be the number of plays in which arm $i$ was chosen at random in the first $t$ rounds. Define $x_{0} = \frac{1}{2 N} \sum_{i=1}^t \varepsilon_i$, by Bernstein's inequality we have,
$$
\begin{aligned}
    \mathbb{P}\left(\beta(N_{1}(t-1)) \geq \frac{\Delta}{3} \bigg| t > M(T) \right) 
    &\leq \mathbb{P}\left(N_{1}(t-1) \leq a \bigg| t > M(T) \right) \\ &\leq \mathbb{P}\left(N^{R}_{1}(t-1) \leq a \bigg| t > M(T) \right) 
    \\ &\leq \exp\left(-\frac{(2x_{0}-a)^{2}}{6x_{0}-a}\right), t > M(T)
\end{aligned}
$$
The last term equals to $\exp\left(-\frac{2x_{0}}{3}+\frac{5a}{9}-\frac{\frac{4a^{2}}{9}}{6x_{0}-a}\right)$. Note that $x_{0} \geq C\log \frac{te^{\frac{1}{2}}}{CN}$ and $t > M(T)$, we have $6x_{0}-a > 0$. Then we can bound 
$$
\mathbb{P}\left(\beta(N_{1}(t-1)) \geq \frac{\Delta}{3}\bigg| t > M(T) \right) \leq \exp\left(-\frac{2x_{0}}{3}+\frac{5a}{9}\right) \leq e^{\frac{5a}{9}}\left(\frac{CN}{te^{\frac{1}{2}}}\right)^\frac{2C}{3}
$$ 

Note that $C > 3$, combining the inequalities above we have
\begin{align*}
\label{eq:lemma epsilon unattackable 2}
    \mathbb{P}\left(E^{'}\right) &\leq \sum_{t > M(T)}^{T} \mathbb{P}\left(\hat{\mu}_{K}(t-1)  > \hat{\mu}_{i}(t-1) | N_{K}(t-1) = M(T)\right) \\ 
    &\leq \frac{1}{T^{\frac{7}{2}}} + \sum_{t > M(T)}^{T} e^{\frac{5a}{9}}\left(\frac{CN}{te^{\frac{1}{2}}}\right)^2 \nonumber \\
    &\leq \frac{1}{T^{\frac{7}{2}}} + \frac{1}{\log T}\nonumber
\end{align*}
The second inequality holds because $\sum_{t > M(T)}^{T}\frac{1}{t^{2}} \leq \frac{1}{M(T)}$. When $\overline{E^{'}}$ happens, define $Y_{t}$ as the Bernoulli random variable for the event that at round $t$ arm $K$ is pulled, then we can bound $N_{K}(T)$ in the following way:
\begin{equation}\label{eq:lemma epsilon unattackable 3}
 N_{K}(T) \leq M(T) + \sum_{t \geq 1}Y_{i}   
\end{equation}
Notice that $\sum_{t=1}^{T}\frac{\epsilon_{t}}{N} \leq C + C\log \frac{T}{CN}$, and from the proof of Lemma 4 in \cite{jun2018adversarial}, with probability at least $1-\eta$ we have 
\begin{equation}\label{eq:lemma epsilon unattackable 4}
    \sum_{t \geq 1}Y_{t} \leq \sum_{t=1}^T \frac{\epsilon_t}{N}+\sqrt{3 \sum_{t=1}^T \frac{\epsilon_t}{N} \log \frac{1}{\eta}} \leq C + C\log\frac{T}{CN} + \sqrt{3(C + C\log\frac{T}{CN})\log \frac{1}{\eta}}
\end{equation}
Combining Eq (\ref{eq:lemma epsilon unattackable 3}) and (\ref{eq:lemma epsilon unattackable 4}), with probability at least $1-\eta-\frac{1}{T^{3.5}}-\frac{1}{\log T}$, we have 
$$
N_{K}(T) \leq M(T) + C + C\log\frac{T}{CN} + \sqrt{3(C + C\log\frac{T}{CN})\log \frac{1}{\eta}}
$$
\end{proof}

\subsection{Proof of Corollary \ref{corollary: epsilon unattackable}}
\begin{proof}
Consider the attack method as described in this corollary, and we follow the notation and structure from the proof of Corollary \ref{corollary: UCB1 unattackable} in this proof. Again, we consider the attack in $\mathcal{H}_{T}$ is not detected and $\Delta^{0}_{1K} > \beta(1) + \Delta_{0}$. By Lemma \ref{lemma: epsilon unattackable}, with probability $1-\eta-\frac{1}{T^{3.5}}-\frac{1}{\log T}$ we have 
$$
\begin{aligned}
N_{K}(T) \leq &\max \left\{C_1\log T, \frac{81 \sigma^{2} \log T}{\Delta_{0}^{2}}, \frac{3 C(T)}{\Delta_{0}}, C_2\right\}+ C + C\log\frac{T}{CN}  \\&+\sqrt{3(C + C\log\frac{T}{CN})\log \frac{N}{\eta}}
\end{aligned}
$$
where $C_1 = e^{\frac{5a}{9}-1}C^{2}N^{2}$ and $C_2 = CNe^{\frac{a}{6C}-\frac{1}{2}}$.
If  $\mathcal{H}_{T}$ is also proper, we further have
$$
\begin{aligned}
    g(T) \leq & \max \left\{C_1\log T, \frac{81 \sigma^{2} \log T}{\Delta_{0}^{2}}, \frac{3 C(T)}{\Delta_{0}}, C_2\right\}+ C + C\log\frac{T}{CN}  \\
    &+\sqrt{3(C + C\log\frac{T}{CN})\log \frac{N}{\eta}}
\end{aligned}
$$
Notice that when $T \rightarrow \infty$ we have $\frac{g(T)}{T} \rightarrow 1$, $\frac{C_1\log T}{T} \rightarrow 0$, $\frac{81 \sigma^{2} \log T}{T\Delta_{0}^2} \rightarrow 0$, $\frac{3 C(T)}{T\Delta_{0}} \rightarrow 0$ and $C\dfrac{\log \frac{T}{CN}}{T} \rightarrow 0$. Hence, we can find $T_{0}^{'}$ related to $\Delta_{0}$ instead of $T$ such that $\forall T>T_{0}^{'}$, 
$g(T) > \max \left\{C_1\log T, \frac{81 \sigma^{2} \log T}{\Delta_{0}^{2}}, \frac{3 C(T)}{\Delta_{0}}, C_2\right\}+ C + C\log\frac{T}{CN} + \sqrt{3(C + C\log\frac{T}{CN})\log \frac{N}{\eta}}$. Let $T>T_{0}^{'}$, we find a contradiction. Hence, for $\forall  T>\max\{T_{0},T_{0}^{'}\} = T_{1}$, we have the following,
$$
\mathbb{P}( \text{$\mathcal{H}_{T}$ is not proper} \mid \text{attack in $\mathcal{H}_{T}$ is not detected}, \Delta^{0}_{1K} > \beta(1) + \Delta_{0}) \geq 1-\eta-\frac{1}{T^{3.5}}-\frac{1}{\log T}    \quad \quad 
$$
Then following a similar derivation as in the proof of Corollary \ref{corollary: UCB1 unattackable}, we can prove that: 
 $$\mathbb{P}(\text{attack in $\mathcal{H}_{T}$ is not detected}) \leq \frac{\epsilon}{(1-\eta-\frac{1}{T^{3.5}}-\frac{1}{\log T})} + \mathbb{P}(\Delta^{0}_{1K} \leq \beta(1) + \Delta_{0})$$
In other words, $\mathbb{P}(\text{attack in $\mathcal{H}_{T}$ is detected}) \geq 1-\frac{\epsilon}{(1-\eta-\frac{1}{T^{3.5}}-\frac{1}{\log T})}-\mathbb{P}(\hat{\mu}_{1}(N) \leq \mu_K + \beta(1) + \Delta_{0})$. This completes the proof.
\end{proof}

\subsection{Proof of Theorem \ref{theorem: UCB1 attack}}
\begin{proof}
By Lemma \ref{lemma: hoeffding}, we have $\forall t \geq N$
\begin{equation}\label{eq: UCB1 attack 1} \mathbb{P}\left(\mu_{K} - \hat{\mu}_{K}^{0}(t-1) \geq 3\sigma\sqrt{\frac{\log t}{N_{K}(t-1)}}\right) \leq \frac{1}{t^{9 / 2}}
\end{equation}
Define $$
E_{2}:=\left\{ \forall t > \overline{t}:\mu_{K} - \hat{\mu}_{K}^{0}(t-1) < 3\sigma\sqrt{\frac{\log t}{N_{K}(t-1)}}\right\}
$$ 
Where $\overline{t} = \lfloor \frac{9\sigma^{2}\log T}{(\Delta^{0}_{1K}-\beta(1))^{2}} \rfloor$. Then by Eq \eqref{eq: UCB1 attack 1} we have 
$$\mathbb{P}(\overline{E_{2}}) \leq 
\sum_{t > \overline{t}}\frac{1}{t^{9 / 2}} \leq \frac{2(\Delta^{0}_{1K}-\beta(1))^{7}}{7(9\sigma^{2}\log T)^{\frac{7}{2}}} \Longrightarrow \mathbb{P}({E_{2}}) \geq 1 - \frac{2(\Delta^{0}_{1K}-\beta(1))^{7}}{7(9\sigma^{2}\log T)^{\frac{7}{2}}} $$ 
For $\eta \in (0,1)$, define 
$$
E_{\eta}:=\left\{\forall i, \forall t \geq N:\left|\hat{\mu}_{i}^{0}(t)-\mu_{i}\right|<\beta\left(N_{i}(t),\eta\right)\right\}
$$
We have that 
$$
\mathbb{P}(E_{\eta} \cap E_{2}) \geq 1 - \eta - \frac{2(\Delta^{0}_{1K}-\beta(1))^{7}}{7(9\sigma^{2}\log T)^{\frac{7}{2}}}
$$
Without loss of generality, we assume that in round $t \geq \overline{t}$ arm $i$ is pulled. If $i=1$, assume that the last time arm $1$ was pulled was $t^{'}$ and $N_{1}(t^{'}) \geq 1$. Suppose events $E_{2}$ and $E_{\eta}$ both hold; based on design of the UCB1 algorithm, we have
$$
\hat{\mu}_{1}(t-1)+3 \sigma \sqrt{\frac{\log t}{N_{1}(t-1)}} \geq \hat{\mu}_{K}(t-1)+3 \sigma \sqrt{\frac{\log t}{N_{K}(t-1)}}
$$
Based on the attack method we have $\hat{\mu}_{1}(t-1) = \hat{\mu}_{1}(t^{'}) = \hat{\mu}_{1}(N) - \beta(1) - \beta(N_{1}(t^{'}))$.
Since event $E_{2}$ holds, we have 
$$
\begin{aligned}
\hat{\mu}_{1}(N) - \beta(1) + 3\sigma\sqrt{\frac{\log t}{N_{1}(t^{'})}} 
&>\hat{\mu}_{1}(N) - \beta(1) - \beta(N_{1}(t^{'}))+3\sigma\sqrt{\frac{\log t}{N_{1}(t^{'})}} \\
&=
\hat{\mu}_{1}(t^{'})+3\sigma\sqrt{\frac{\log t}{N_{1}(t^{'})}} \\&=
\hat{\mu}_{1}(t-1)+3\sigma\sqrt{\frac{\log t}{N_{1}(t-1)}}   \\&\geq 
\hat{\mu}_{K}(t-1)+3 \sigma \sqrt{\frac{\log t}{N_{K}(t-1)}} \\&>
\mu_{K}
\end{aligned}
$$
This shows that 
\begin{equation}\label{eq:lemma UCB1 attack 1}
N_{1}(t) = N_{1}(t^{'}) +1 \leq  \frac{9\sigma^{2}\log t}{(\Delta^{0}_{1K}-\beta(1))^{2}} + 1   
\end{equation}
We also have
$$
\begin{aligned}
\alpha_{t} &=\hat{\mu}_{1}(t-1) N_{1}(t-1)+r_{t}^{0}-\left(\hat{\mu}_{1}(N)-\beta(1)- \beta\left(N_{1}(t)\right)\right) N_{1}(t) \\
&=\sum_{\left\{s: s \le t-1, I_{s}=i\right\}}\left(r_{s}^{0}-\alpha_{s}\right)+r_{t}^{0}-\left(\hat{\mu}_{1}(N)-\beta(1)-\beta\left(N_{1}(t)\right)\right) N_{1}(t) \\
&=\sum_{\left\{s: s \le t, I_{s}=i\right\}} r_{s}^{0}-\sum_{\left\{s: s \le t-1, I_{s}=i\right\}} \alpha_{s}-\left(\hat{\mu}_{1}(N)-\beta(1)- \beta\left(N_{1}(t)\right)\right) N_{1}(t) .
\end{aligned}
$$
Therefore, the cumulative attack on arm $1$ is
$$
\begin{aligned}
\sum_{\left\{s: s \le t, I_{s}=i\right\}} \alpha_{s} &=\sum_{\left\{s: s \le t, I_{s}=i\right\}} r_{s}^{0}-\left(\hat{\mu}_{1}(N)-\beta(1)- \beta\left(N_{1}(t)\right)\right) N_{1}(t) \\
&=(\hat{\mu}_{1}^{0}(t)-\hat{\mu}_{1}(N)+\beta(1)+ \beta\left(N_{1}(t)\right)) N_{1}(t)
\end{aligned}
$$
Since event $E_{\eta}$ holds,
$$
\begin{aligned}
\hat{\mu}_{1}^{0}(t) &<\hat{\mu}_{1}(N)+\beta\left(N_{1}(t),\eta \right)+\beta(1, \eta) 
\end{aligned}
$$
Therefore, by Eq (\ref{eq:lemma UCB1 attack 1})
$$
\begin{aligned}
\sum_{\left\{s: s \le t, I_{s}=i\right\}} \alpha_{s} &\leq (\beta(1)+\beta(N_{1}(t))+\beta\left(N_{1}(t),\eta \right)+\beta(1, \eta))(1+\frac{9\sigma^{2}\log T}{(\Delta^{0}_{1K}-\beta(1))^{2}}) \\
&\leq 2(\beta(1)+\beta(1, \eta))(1+\frac{9\sigma^{2}\log T}{(\Delta^{0}_{1K}-\beta(1))^{2}}).
\end{aligned}
$$
If $i \neq 1$, suppose at time $t$ the learner chooses arm $i \neq 1 \text{ or } K$. Then we have
$$
\hat{\mu}_{i}(t-1)+3 \sigma \sqrt{\frac{\log t}{N_{i}(t-1)}} > \hat{\mu}_{1}(t-1)+3 \sigma \sqrt{\frac{\log t}{N_{1}(t-1)}}
$$
Since event $E_{\eta}$ holds, combined with the way that the attack algorithm attacks $\hat{\mu}_{i}^{0}(N)$, we can obtain
$$
\begin{aligned}
3 \sigma \sqrt{\frac{\log t}{N_{i}(t-1)}} &> \hat{\mu}_{1}(t-1) - \hat{\mu}_{i}(t-1) \\
&\geq (\hat{\mu}_{1}(N) -\beta(1)-\beta(N_{1}(t-1)))- (\hat{\mu}_{i}(N)+\beta(1,\eta)+\beta(N_{i}(t-1)),\eta))\\
&>\hat{\mu}_{1}(N) - \hat{\mu}_{i}(N)-2\beta(1,\eta)-2\beta(1)\\
&\geq d
\end{aligned}
$$

As a result, we have
\begin{equation}\label{eq:lemma UCb1 attack 2} 
N_{i}(t) \leq N_{i}(t-1) + 1 \leq 1 + \frac{9\sigma^{2}\log t}{d^2}
\end{equation}
Note that if $\hat{\mu}_{i}^{0}(N) \leq \hat{\mu}_{1}(N) - 2\beta(1,\eta) -2\beta(1) -d$ then $\alpha_{i} = 0$; Otherwise,
$$
\begin{aligned}
\alpha_{i} &=|\hat{\mu}_{i}(N)-\hat{\mu}^{0}_{i}(N)| \\
&\leq 
|\hat{\mu}_{i}(N)-\hat{\mu}_{1}(N)|+|\hat{\mu}_{1}(N)-\hat{\mu}^{0}_{i}(N)| \\
& \leq 2\beta(1) + 2\beta(1,\eta)+d +|\hat{\mu}_{1}(N)-\mu_{1}|+|\mu_{1}-\mu_{i}|+ |\mu_{i}-\hat{\mu}^{0}_{i}(N)| \\
&\leq |\mu_{1}-\mu_{i}|+2\beta(1) + 4\beta(1,\eta)+d
\end{aligned}
$$  
The attack cost is bounded by 
$$
\sum_{\left\{s: s \le t, I_{s}=i\right\}} \alpha_{s} = \alpha_{i}N_{i}(t) \leq N_{i}(t)(|\Delta_{1i}|+2\beta(1) +4\beta(1,\eta)+d)
$$
Therefore, we have with probability $\mathbb{P}(E_{\eta} \cap E_{2}) \geq 1 - \eta - \frac{2(\Delta^{0}_{1K}-\beta(1))^{7}}{7(9\sigma^{2}\log T)^{\frac{7}{2}}}$ 
and the following,
$$ 
\begin{aligned}
\sum_{i=1}^{T}\alpha_{s} =& \sum_{i=1}^{N}\sum_{s: I_{s}=i}^{t} \alpha_{s} \\
\leq& 2(\beta(1)+\beta(1, \eta))(1+\frac{9\sigma^{2}\log T}{(\Delta^{0}_{1K}-\beta(1))^{2}}) \\&+
(\frac{9\sigma^{2}\log T}{d^2}+1)\sum_{i \neq 1,K}(|\Delta_{1i}|+2\beta(1)+4\beta(1,\eta)+d) \\ 
\leq &
\left[\frac{18(\beta(1)+\beta(1, \eta))\sigma^{2}}{(\Delta^{0}_{1K}-\beta(1))^{2}} + \frac{9N\sigma^{2}(2\beta(1)+4\beta(1,\eta)+d)}{d^2}\right]\log T  \\&+ \left(\frac{9\sigma^{2}\log T}{d^2}+1\right)\sum_{i \neq 1,K}\Delta_{1i} + 4N\beta(1,\eta) + 2N\beta(1) + 2\beta(1) + dN
\end{aligned}
$$
Now consider the number of times the target arm will be pulled. By Eq (\ref{eq:lemma UCB1 attack 1}) and (\ref{eq:lemma UCb1 attack 2}), with probability at least $1 - \eta - \frac{2(\Delta^{0}_{1K}-\beta(1))^{7}}{7(9\sigma^{2}\log T)^{\frac{7}{2}}}$ the learner will pull the target arm $K$ at least 
$$
\begin{aligned}
    T-\sum_{i=1}^{N}N_{i}(T) &\geq T-\frac{9\sigma^{2}\log T}{(\Delta^{0}_{1K}-\beta(1))^{2}}-\frac{9N\sigma^{2}\log T}{d^2} -(N-1) \\ &= T-\left(\frac{9\sigma^{2}}{(\Delta^{0}_{1K}-\beta(1))^2}+\frac{9N\sigma^{2}}{d^2}\right)\log T -(N-1)
\end{aligned}
$$rounds. This completes the proof.
\end{proof}

\subsection{Proof of Theorem \ref{theorem: epsilon attack}}
\begin{proof}
Without loss of 
    generality, assume in round $t$ the learner chooses to exploit. Suppose event $E_{\eta}$ holds. Then for $i \neq 1$, we have 
$$
\begin{aligned}
 &\hat{\mu}_{1}(t-1) - \hat{\mu}_{i}(t-1) \\
&\geq (\hat{\mu}_{1}(N) -\beta(1)-\beta(N_{1}(t-1)))- (\hat{\mu}_{i}(N)+\beta(1,\eta)+\beta(N_{i}(t-1)),\eta))\\
&>\hat{\mu}_{1}(N) - \hat{\mu}_{i}(N)-2\beta(1,\eta) - 2\beta(1)\\
&\geq d
\end{aligned}
$$
which means the learner will not choose arm $i \neq 1 \text{ or } K$ when round $t$ is an exploitation round.

For $i = 1$, if $I_{t} = i$ we assume that the last time arm $1$ was pulled was $t^{'}$ and $N_{1}(t^{'}) \geq 1$. According to the $\epsilon$-greedy algorithm, we have 
\begin{align*}
     \hat{\mu}_{1}(N)-\beta(1)-\beta(N_{1}(t^{'})) = \hat{\mu}_{1}(t-1) &> \hat{\mu}_{K}(t-1) \\
\Rightarrow  
\mu_{K} - \hat{\mu}_{K}(t-1) &> \beta(N_{1}(t^{'})) + \Delta \\
\Rightarrow 
\mu_{K} - \hat{\mu}_{K}(t-1) &> \Delta
\end{align*}

Define $\overline{N}_{i}(t)$ as the number of arm $i$ pulls in the exploitation rounds before time $t$.
$$
\begin{aligned}
    \mathbb{P}(\overline{N}_{1}(T) > t_1 + (t_2-t_1) + 0) \leq  \mathbb{P}(N_1 > t_1) + \mathbb{P}(N_2 > t_2 - t_1) + \mathbb{P}(N_3 > 0) = \mathbb{P}(N_3 > 0)
\end{aligned}
$$
where $N_1, N_2$ and $N_3$ are the number of arm pulls in the exploitation rounds before $t_1$, during $[t_1 + 1, t_2]$, and after $t_2$, accordingly. \\ 
$$
\begin{aligned}
\mathbb{P}(N_3 > 0) &\leq \mathbb{P}(\exists t> t_2, \mu_K - \hat{\mu}_K(t-1) > \Delta) \\
\leq& \mathbb{P}(\exists t> t_2, \mu_K - \hat{\mu}_K(t-1) > \Delta \mid N_{K}(t_2) \geq \frac{2\sigma^2}{\Delta^2}\log\frac{4T}{\eta})  \\ &+\mathbb{P}(N_{K}(t_2) < \frac{2\sigma^2}{\Delta^2}\log\frac{4T}{\eta} \mid N^{R}_{K}(t_1) \geq \frac{2\sigma^2}{\Delta^2}\log (1+\frac{\Delta^2}{\sigma^2}), E_{exp}) \\
&+\mathbb{P}(N^{R}_{K}(t_1) < \frac{2\sigma^2}{\Delta^2}\log (1+\frac{\Delta^2}{\sigma^2})) + \mathbb{P}(\overline{E_{exp}})
\end{aligned}
$$
where 
$$
\begin{aligned}
E_{exp} = &\left\{{\sum_{i > t_1}^{t_2} Y_i \leq \sum_{i > t_1}^{t_2} \epsilon_i + \sqrt{3\sum_{i > t_1}^{t_2} \epsilon_i\log\frac{4}{\eta}}} \right\}
\end{aligned}
$$ and $Y_t$ is the Bernoulli random variable indicate whether round $t$ is for exploration.

By the proof of Lemma \ref{lemma: epsilon unattackable} (denote $\frac{2\sigma^2}{\Delta^2}\log (1+\frac{\Delta^2}{\sigma^2})$ as $b$, then $b < 2$) we have for $t > CNe^{\frac{b}{6C}-\frac{1}{2}}$ 
$$
\mathbb{P}\left(N^{R}_{K}(t) < b\right) \leq e^{\frac{5b}{9}}\left(\frac{CN}{te^{\frac{1}{2}}}\right)^\frac{2C}{3}
$$ 
Therefore, by setting $t_1 = \max\{CNe^{\frac{b}{6C}-\frac{1}{2}}, CNe^{\frac{5b}{6C}-\frac{3}{2C}\log\frac{\eta}{4} - \frac{1}{2}}, 5CN\}$, we have 
$\mathbb{P}\left(N^{R}_{K}(t_1) < b\right) < \frac{\eta}{4}$. 

Next, notice that $\mathbb{P}(E_{exp}) \geq 1-\frac{\eta}{4} \Rightarrow \mathbb{P}(\overline{E}_{exp}) \leq \frac{\eta}{4}$. And for $(Y_{t_1 + 1}, \cdots, Y_{t_2}) \in E_{exp}$ we have 
$$
\sum_{i > t_1}^{t_2} Y_i \leq \sum_{i > t_1}^{t_2} \epsilon_i + \sqrt{3\sum_{i > t_1}^{t_2} \epsilon_i\log\frac{4}{\eta}} \leq \frac{5}{2}\sum_{i > t_1}^{t_2} \epsilon_i + \log\frac{4}{\eta} \leq \frac{5CN}{2}\log(\frac{t_2}{t_1}) + \log\frac{4}{\eta}
$$
Then the number of exploitation rounds in time interval $[t_1 + 1, t_2]$ (denote as $N_{exp}$) is larger than
$t_2 - t_1 - \frac{5CN}{2}\log(\frac{t_2}{t_1}) - \log\frac{4}{\eta} \geq t_2 - t_1 - \frac{5CN}{2}(\frac{t_2}{t_1 }-1) - \log\frac{4}{\eta} \geq \frac{1}{2}(t_2-t_1) - \log\frac{4}{\eta}$.
If $t$ is an exploitation round, define $W_t$ as the Bernoulli random variable suggesting at round $t$ arm $1$ was chosen. Notice that
\begin{equation}
 \mathbb{P}\left(\mu_{K} - \hat{\mu}_{K}^{0}(t-1) > \Delta \bigg| N_{K}(t-1) \geq \frac{2\sigma^2}{\Delta^2}\log (1 + \frac{\Delta^2}{\sigma^2}))\right) \leq \frac{1}{1 + \frac{\Delta^2}{\sigma^2}}
\end{equation}
If $N_{K}(t-1) \geq \frac{2\sigma^2}{\Delta^2}\log (1 + \frac{\Delta^2}{\sigma^2})$ then $\mathbb{P}(W_t = 1) \leq \frac{1}{1 + \frac{\Delta^2}{\sigma^2}}$. Denote this probability as $\omega_t$. Then we have with probability at least $1-\frac{\eta}{4}$
$$
\sum_{\substack{t_2 \geq t > t_1\\ t \text{ is exploitation round}}}W_t \leq \sum_{\substack{t_2 \geq t > t_1\\ t \text{ is exploitation round}}}\omega_t + \sqrt{3\sum_{\substack{t_2 \geq t > t_1\\ t \text{ is exploitation round}}}\omega_t\log\frac{4}{\eta}} $$ 
As a result,
$$
\begin{aligned}
 N_{K}(t_2) \geq \sum_{\substack{t_2 \geq t > t_1\\ t \text{ is exploitation round}}}(1-w_t) &\geq N_{exp}-\sum_{\substack{t_2 \geq t > t_1\\ t \text{ is exploitation round}}}\omega_t-\sqrt{3\sum_{\substack{t_2 \geq t > t_1\\ t \text{ is exploitation round}}}\omega_t\log\frac{4}{\eta}} \\ &\geq (1-\frac{1}{1 + \frac{\Delta^2}{\sigma^2}})N_{exp} - \sqrt{\frac{3N_{exp}}{1 + \frac{\Delta^2}{\sigma^2}}\log\frac{4}{\eta}}
\end{aligned}
$$
Then it is clear that
\begin{align*}
&N_{K}(t_2) \geq \frac{2\sigma^2}{\Delta^2}\log\frac{4T}{\eta}  \\
\Leftarrow~~ &
(1-\frac{1}{1 + \frac{\Delta^2}{\sigma^2}})N_{exp} - \sqrt{\frac{3N_{exp}}{1 + \frac{\Delta^2}{\sigma^2}}\log\frac{4}{\eta}} \geq \frac{2\sigma^2}{\Delta^2}\log\frac{4T}{\eta}  \\
\Leftarrow~~ &
N_{exp} \geq 2(1 + \frac{\sigma^2}{\Delta^2})(\frac{3\sigma^2}{4\Delta^2}\log \frac{4}{\eta} + \frac{2\sigma^2}{\Delta^2}\log\frac{4T}{\eta}) \\
\Leftarrow~~ &
t_2 \geq 4(1 + \frac{\sigma^2}{\Delta^2})(\frac{3\sigma^2}{4\Delta^2}\log \frac{4}{\eta} + \frac{2\sigma^2}{\Delta^2}\log\frac{4T}{\eta}) + t_1 + 2\log\frac{4}{\eta}
\end{align*}

Let $t_2 = 4(1 + \frac{\sigma^2}{\Delta^2})(\frac{3\sigma^2}{4\Delta^2}\log \frac{4}{\eta} + \frac{2\sigma^2}{\Delta^2}\log\frac{4T}{\eta}) + 2\log\frac{4}{\eta} + t_1$, we have $\mathbb{P}(N_{K}(t_2) < \frac{2\sigma^2}{\Delta^2}\log\frac{4T}{\eta} \mid N^{R}_{K}(t_1) \geq \frac{2\sigma^2}{\Delta^2}\log (1+\frac{\Delta^2}{\sigma^2}), E_{exp}) < \frac{\eta}{4}$ 

Finally, according to Lemma \ref{lemma: hoeffding}, we have $\forall t \geq t_2$
\begin{equation}
\mathbb{P}\left(\mu_{K} - \hat{\mu}_{K}^{0}(t-1) \geq \Delta \bigg| N_{K}(t_2) \geq \frac{2\sigma^2}{\Delta^2}\log\frac{4T}{\eta} \right) \leq \frac{\eta}{4T}
\end{equation}
\begin{equation}
\mathbb{P}(\exists t\geq t_2, \mu_K - \hat{\mu}_K(t-1) > \Delta \mid N_{K}(t_2) \geq \frac{2\sigma^2}{\Delta^2}\log\frac{4T}{\eta}) \leq \frac{\eta}{4}
\end{equation}
Combine the above inequalities we have 
$$
\mathbb{P}(N_3 > 0 ) \leq \eta
$$
$$
\Rightarrow \mathbb{P}(\overline{N}_{1}(T) > t_2) \leq \eta
$$

Suppose both $E_{\eta}$ and $\overline{N}_{1}(T) > t_2$ holds, we have with probability at least $1-\eta$
$$
N_{i}(T) < \sum_{t=1}^T \frac{\epsilon_t}{N}+\sqrt{3 \sum_{t=1}^T \frac{\epsilon_t}{N} \log \frac{N}{\eta}} \leq C + C\log\frac{T}{CN} + \sqrt{3(C + C\log\frac{T}{CN})\log \frac{N}{\eta}}
$$
$$
N_{1}(T)<t_{2} + \sum_{t = 1}^T \frac{\epsilon_t}{N}+\sqrt{3 \sum_{t = 1}^T \frac{\epsilon_t}{N} \log \frac{N}{\eta}}
\leq t_{2} + C + C\log\frac{T}{CN} + \sqrt{3(C + C\log\frac{T}{CN})\log \frac{N}{\eta}}$$
$$
\begin{aligned}
    N_{K}(T) &> T-t_{2} - (N-1)/N\sum_{t = 1}^T\epsilon_t-(N-1)\sqrt{3 \sum_{t = 1}^T \epsilon_t\log \frac{N}{\eta}} \\ &\geq T - t_{2} - (N-1)\left(C + C\log\frac{T}{CN} + \sqrt{3(C + C\log\frac{T}{CN})\log \frac{1}{\eta}}\right)
\end{aligned}
$$
Next, we bound the attack cost. From the proof of Lemma \ref{lemma: epsilon unattackable} we know that the cumulative attack on arm $1$ can be bound by 
$$
\sum_{s: I_{s}=1}^{t} \alpha_{s} =(\hat{\mu}_{1}^{0}(t)-\hat{\mu}_{1}(N)+\beta(1)+ \beta\left(N_{1}(t)\right)) N_{1}(t) \leq 2(\beta(1)+\beta(1,\eta))N_{1}(t)
$$
And 
$$
\sum_{s: I_{s}=i}^{t} \alpha_{s} = N_{i}(T)\alpha_{i} \leq N_{i}(t)(|\Delta_{1i}|+2\beta(1)+4\beta(1,\eta)+d)
$$
Then with probability at least $1 - 3\eta$, 
$$
\begin{aligned}
\sum_{i=1}^{T}\alpha_{s} = \sum_{i=1}^{N}\sum_{s: I_{s}=i}^{t} \alpha_{s} &\leq 2(\beta(1) + \beta(1, \eta))N_{1}(T) + \sum_{i \neq 1,K}N_{i}(T)(|\Delta_{1i}|+2\beta(1)+4\beta(1, \eta)+d) \\ 
&\leq
2(\beta(1) + \beta(1, \eta))t_{2} + \left(C + C\log\frac{T}{CN} + \sqrt{3(C + C\log\frac{T}{CN})\log \frac{N}{\eta}}\right)\\&(4N\beta(1, \eta) + 2N\beta(1) + \sum_{i \neq 1,K}|\Delta_{1i}| + dN)
\end{aligned}
$$
This completes the proof. 
\end{proof}

\subsection{Proof of Lemma \ref{lemma: construction}}
\begin{proof}
We design our general bandit algorithm and corresponding attack method as follows. Let $\mathcal{T}$ be a set of integers that we will decide later and $\mathcal{T}_{t}=\{1,\cdots,t\}\cap\mathcal{T}$, which is a collection of time steps. Let $\{1,2\} \in \mathcal{T}$. Define the learner’s observation history (or interaction sequence) before time $t$ as $\mathcal{H}^{o}_{t} =$ $\left\{(I_{s}, r_{I_{s}})\right\}_{s=1}^{t}$, which represents all information
available for the learner at time $t$.

\begin{figure}[htb]
\centering 
\includegraphics[width=1\textwidth]{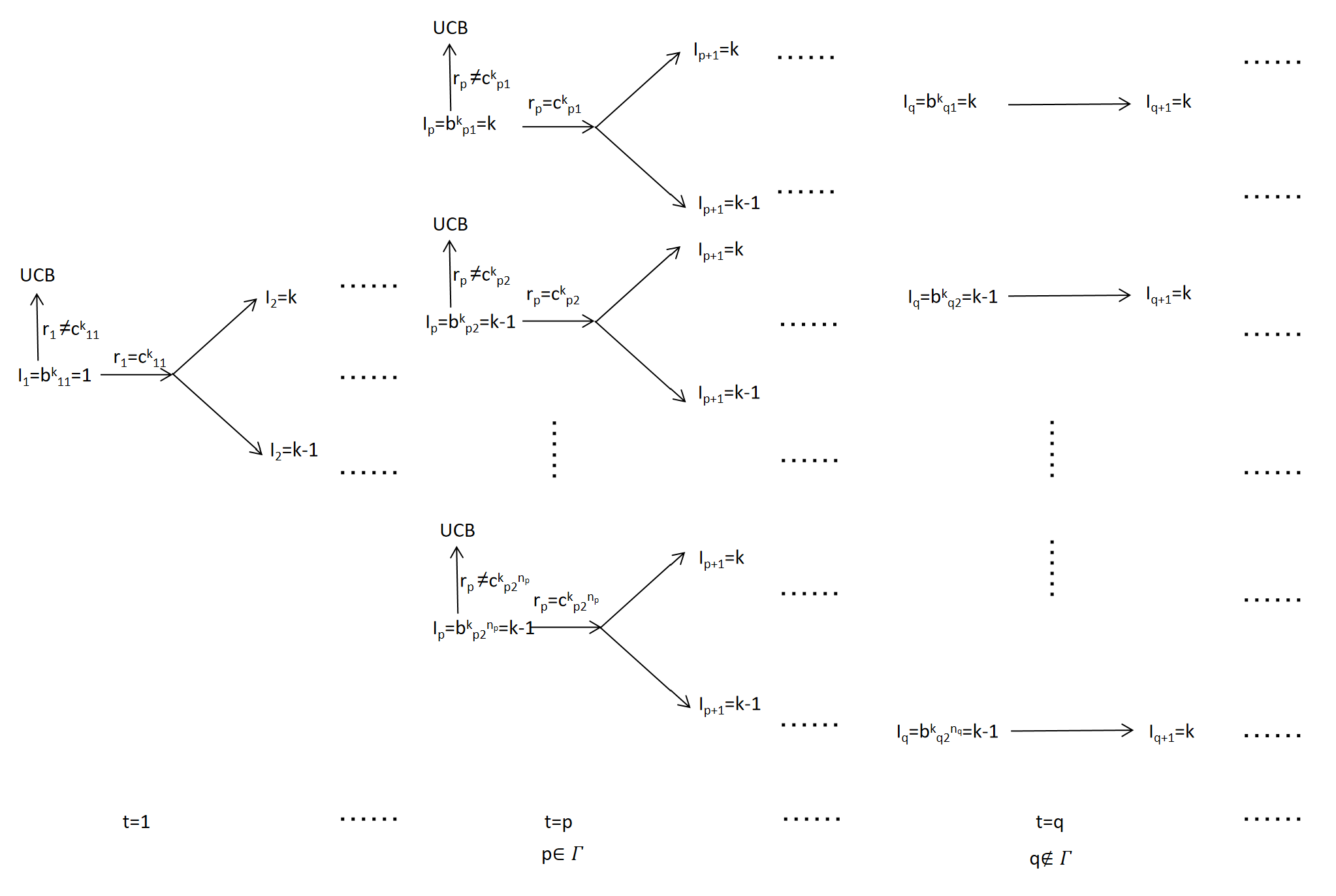}
\caption{Schematic of the algorithm.
\begin{itshape}
We visualize our algorithm. Once the algorithm starts, the learner will choose arm $1$ in the first step. If the interaction sequence up to time $t-1$ $\mathcal{H}^{o}_{t-1}$ is a trigger sequence, the interaction sequence $\mathcal{H}^{o}_{t}$ and whether the time $t$ belongs to $\mathcal{T}$ will determine how the learner will make the next decision. In this case, if  $t=1,p \in \mathcal{T}$ the learner takes a "special action" only if $\mathcal{H}_{t}$ is also a trigger sequence. Otherwise, if we only have $\mathcal{H}^{o}_{t-1}$ as a trigger sequence, the learner will start applying the UCB1 algorithm. If $t=q \notin \mathcal{T}$ the learner will only choose to play arm $k$ no matter what $r_{t}$ is. 
\end{itshape}
}
\end{figure}

Let's start by explaining the basic idea behind our construction. For an effective learning algorithm (e.g., UCB1 or $\epsilon$-greedy, and in this proof we choose UCB1), we add several ``special actions'' to its action set at each time. Each ``special action'' is attached to an arm $k \in [N]$, which belongs to two categories: if the time $t \in \mathcal{T}$, the added special action requires the learner to pull arm $k$ or arm $k-1$ with the same probability $\frac{1}{2}$; and if the time $t \notin \mathcal{T}$, the added special action requires the learner to pull the arm $k$ directly. At each time $t$, the learner will only take one of the special actions based on a pre-specified condition. When there is no adversarial attack, the expectation of the number of times these conditions are satisfied is $o(T)$, which means that in $T-o(T)$ rounds the learner takes action following the original effective algorithm UCB1, and therefore the learning algorithm remains effective after introducing these special actions. At the same time, these conditions can be satisfied by the attacker via modifying the rewards of the pulled arms, which triggers the learner to take a special action related to the target arm $K$ in each round. And at least $T-|\mathcal{T}|$ rounds the learner takes the special action of pulling the target arm $K$ directly. Therefore, when $| \mathcal {T} | = o (T)$, the learner will be manipulated to pull the target arm $T-o(T)$ times.

The key in the construction then becomes how to design conditions that trigger the learner to take special actions. We construct a set $P_{k}$ for $\forall k \in [N]$. When the interaction sequences $\mathcal{H}^{o}_{t}$ consisting of $\{(r_s, I_s)\}_{1 \leq s \leq t}$ at time $t$   belongs to the set $P_{k}$, the learner takes the corresponding special action related to arm $k$ at time $t$. By properly constructing all the $P_{k}, k \in [N]$,  the attacker can modify the reward so as to manipulate the learner's next action. Thus, the attacker can manipulate the learner's interaction sequence $\mathcal{H}^{o}_{t}$ so that $\mathcal{H}^{o}_{t}$ always belongs to the set $P_{K}$, and thus the learner always takes the special actions related to target arm $K$. 

\paragraph{Construction of $P_{k}$.} Next, $\forall k \in [N]$ we give the construction of the set $P_{k}$. We first define the sequence $\{c_{tj}^{k}\} ,\{b_{tj}^{k}\} (k \in [N], t \geq 1,  j \in [2^{t-1}]) \text{ and } \{ n_{t}\}(t \geq 1)$ that we will use later. For sequence $\{c_{tj}^{k}\}$, we set $c_{11}^{k}, \forall k\in [N]$ to $N$ arbitrarily chosen but different real numbers, and when $t>1$ and $j \in [2^{t-1}]$ we set $c_{tj}^{k} = c_{11}^{k}+\frac{\alpha }{j^{2}}$, where $\alpha=\frac{1}{\pi^{2}}\sqrt{72 \sigma^{2} \log \frac{\pi^{2} N}{3}}$. 
For sequence $\{b_{tj}^{k}\}$, we first set $b_{11}^{k}=1$. For 
 $t \geq 2 \text{ and } t-1 \in \mathcal{T}$, we set $b_{tj}^{k} = k~ (\text{if } j \text{ is an odd integer}) \text{ or } k-1~ (\text{if } j \text{ is an even integer})$. When $k$=1, we set $b_{tj}^{k}=N$. When $ t \geq 2 \text{ and } t-1 \notin \mathcal{T}$ we set $b_{tj}^{k} = k$. 
 Finally for sequence $\{ n_{t}\}$ we set $n_{1}=1$ and when $ t \geq 1 \text{ we set } n_{t+1} = n_{t}+1~(\text{if } t \in \mathcal{T}) \text{ or } n_{t}~(\text{if } t \notin \mathcal{T})$. 

To help the reader understand our later constructions intuitively, we use the defined sequences $\{c_{tj}^{k}\}$, $\{b_{tj}^{k}\}$ and $\{ n_{t}\}$ to describe a $T$-level ordered tree related to $k$, where each level of this tree represents a time $t$ and each node at level $t$ represents the combination of the arm chosen by the learner and the reward received $(r_t, I_t)$ (if $t \in \mathcal{T}$) or only the arm chosen by the learner $I_{t}$ (if $t \notin \mathcal{T}$). 
Each node at level $t$ has exactly two sub-nodes when $t \in \mathcal{T}$, otherwise they only have one sub-node. It is easy to see that the number of nodes in the $t$th layer is exactly $2^{n_t}$. For each $t$, we assign the $j$th node of the $t$th layer  value $(r_t,I_t)= (c^{k}_{tj}, b^{k}_{tj})~(\text{if } t \in \mathcal{T}) \text{ or the value } I_t = b^{k}_{tj}(\text{if } t \notin \mathcal{T})$. A path $p$ in this tree consists of nodes $n_1,\cdots,n_s$ is also an interaction sequence containing values in $n_1,\cdots,n_s$. If path $p$ passes node $n = (r, I)$ in the $t$th layer, then for the reward and pulled arm at time $t$ in interaction sequence $p$ we have $(r_t, I_t) =(r, I)$. If path $p$ passes node $n = I$ in the $t$th layer, then for the reward and pulled arm at time $t$ in interaction sequence $p$ we have $(r_t, I_t) = (r, I)$ where $r$ can be any real value. 

Now we can define $P_{k}$ concretely. A sequence of positive integers $\{j_{s}\}_{1 \leq s \leq t}$ is called a \textit{structure sequence}, if $\forall 1 \leq s \leq t-1 \text{ we have } j_{1}=1, j_{s+1}=j_{s}~(\text{if } s \notin \mathcal{T}), \text{ or } j_{s+1} \in \{2j_{s},2j_{s}-1\}~(\text{if } s \in \mathcal{T})$.
An interaction sequence $\mathcal{H}^{o}_{t}$ up to time $t$  is called a $\textit{trigger sequence}$ concerning $k$ at time $t$, if there is an integer $k$ and a structure sequence $\{j_{s}\}_{1 \leq s \leq t}$ such that for $\forall 1 \leq s \leq t$ the sequences $\{r_s\}$ and $\{I_{s}\}$ in $\mathcal{H}^{o}_{t}$ satisfy $r_{s}=c^{k}_{sj_{s}}(\text{if } s \in \mathcal{T}) \text{ and } I_{s} = b_{sj_{s}}^{k}$. We call the structure sequence $\{j_{s}\}_{1 \leq s \leq t}$ the \textit{structure} of the trigger sequence $\mathcal{H}^{o}_{t}$. 
The trigger sequence we define here is a path in the $T$-level tree from the first level to the $t$th level, and its structure $\{j_{s}\}_{1 \leq s \leq t}$ is a sequence of the positions of the nodes at each level of the path. Obviously, if $\mathcal{H}^{o}_{t}$ is a trigger sequence then $\mathcal{H}_{t-1}$ is also a trigger sequence. Finally, we define $P_{k}$ to be the set of all trigger sequences concerning arm $k$.

\paragraph{The learning algorithm.} Now we are equipped to present the learner's algorithm. For $t=1$, the learner will choose arm $1$ and get the reward $r_1$. For $t = 2$, if there is a $k$ such that $r_{1}=c_{11}^{k}$, then 
the learner will pull arm $k$ or arm $k-1~(\text{if } k=1, \text{it's arm } N)$  with probability $\frac{1}{2}$. If there is not a satisfying $k$, the learner resets the history information it has so far (including $r_1$ and $I_1$) and starts to apply the UCB1 algorithm after time $t=1$. 

For $t>1$ and $t\in \mathcal{T}$, suppose there is an integer $k$ and the interaction sequence $\mathcal{H}^{o}_{t-1}$ at time $t-1$  is a trigger sequence concerning $k$ and $t-1$. If the interaction sequence $\mathcal{H}^{o}_{t}$ at time $t$ is also a trigger sequence concerning $k$ and $t$, $i.e. \mathcal{H}^{o}_{t}\in P_{k}$, then the learner will pull arm $k$ or arm $k-1(\text{if } k=1, \text{it's arm } N)$  with probability $\frac{1}{2}$. In any case, if we set $j_{t + 1} = 2 j_{t}-1 ~(\text{if } I_{t + 1} = k)$ or $2 j_{t} ~(\text{if } I_{t + 1} = k - 1) $, then we can see that $I_{t + 1} = b^{k}_{(t+1)j_{t + 1}} $, and $\mathcal{H}^{o}_{t}\cup\{(c^{k}_{(t+1)j_{t+1}},b^{k}_{(t+1)j_{t + 1}})\}$ is also a trigger sequence concerning $k$ and $t+1$. But if $\mathcal{H}^{o}_{t}$ is not a trigger sequence concerning $k$ and $t$, then the learner ignores all the history information it has so far and starts to apply the UCB1 algorithm after time $t$. 
That is, in this case, the learner takes a ``special action'' and gets corresponding $I_{t+1}$ only if $\mathcal{H}^{o}_{t}$ is a trigger sequence. And as a further consequence of taking this ``special action'', $\exists j_{t+1}$, once $r_{t+1} = c^{k}_{(t+1)j_{t+1}}$, $\mathcal{H}^{o}_{t+1} = \mathcal{H}^{o}_{t}\cup\{(r_{t+1}, I_{t+1})\}$ is also a trigger sequence $\in P_{k}$. Otherwise, if we only have $\mathcal{H}^{o}_{t-1}$ as a trigger sequence, the learner will start applying the UCB1 algorithm. 

For $t>1$ and $t \notin \mathcal{T}$, suppose there is an integer $k$ and the interaction sequence $\mathcal{H}^{o}_{t-1}$ at time $t-1$ is a trigger sequence concerning $k$ and $t-1$. In this case, the learner will choose arm $k$ at time $t+1$. Further, if we set $j_{t+1} = j_{t}$ then we have $I_{t+1}=k=b^{k}_{(t+1)j_{t+1}}$, and $\mathcal{H}^{o}_{t}\cup\{(r_{t+1},b^{k}_{(t+1)j_{t + 1}})\}$ is also a trigger sequence concerning $k$ and $t+1$ for any reward $r_{t+1}$. That is, in this case, the learner takes a ``special action'' and gets corresponding $I_{t+1}$ if $\mathcal{H}^{o}_{t-1}$ is a trigger sequence. And as a further consequence of taking this ``special action'', $\mathcal{H}^{o}_{t+1} = \mathcal{H}^{o}_{t}\cup\{(r_{t+1},I_{t+1})\}$ is also a trigger sequence $\in P_{k}$ for any reward $r_{t+1}$. 

We can prove that our described algorithm is well-defined, i.e., the algorithm has a valid output under all circumstances it might encounter. For any time $t>1$, given any integer $k$ there is a maximum time $t^{'} \leq t$ (if such $t^{'}>1$ does not exist we set it to $1$) such that the interaction sequence $\mathcal{H}^{o}_{t^{'}}$ up to time $t^{'}$ is a trigger sequence concerning $k$ at time $t^{'}$ and its structure is $\{j_{s}\}_{1 \leq s \leq t^{'}}$. If $\forall 1 \leq k \leq N$ the corresponding $t^{'}=1$, from the learning algorithm we know the learner will ignore the previous information and start to apply UCB1 after time $1$. 
Hence, the algorithm has a valid output at time $t$. In other cases the corresponding to $t^{'}>1$, $\exists k$ such that $r_{1}=c^{k}_{11}$ holds and there is only one integer $k$ making its corresponding $t^{'}$ is not $1$. If its corresponding $t^{'} = t$ or $t-1$, which means the interaction sequence $\mathcal{H}^{o}_{t}$ at time $t$ is a trigger sequence concerning $k$ and $t$, then the algorithm still has a valid output at time $t$. 
Now suppose its corresponding $t^{'} < t-1$. It is easy to see that $t^{'} \in \mathcal{T}$, otherwise in time $t^{'}+1$ the learner will choose arm $k$ and if we define $j_{t^{'}+1} = j_{t^{'}}$ we have $I_{t^{'}+1}=k=b^{k}_{(t^{'}+1)j_{t^{'}+1}}$. Therefore, $\mathcal{H}^{o}_{t^{'}+1}$ is also a trigger sequence which contradicts the definition of $t^{'}$. Therefore, it must be true that $t^{'} \in \mathcal{T}$ and $\mathcal{H}^{o}_{t^{'}+1}$ is not a trigger sequence, according to the learning algorithm we know the learner will ignore the previous information and play UCB1 after time $t^{'}$. Again, the algorithm has a valid output at time $t$. In summary, we have proved that the algorithm we construct is well-defined.

\paragraph{The attack method.} Now, given any target arm $K$, we describe the attack method against our proposed general bandit algorithm. For $t=1$, the attacker replaces $r_{1}^{0}$ by $r_{1}=c_{11}^{K}$. For $t>1$ and $t \notin \mathcal{T}$, the attacker will not attack.

For $t>1$ and $t \in \mathcal{T}$, suppose there is an integer $K$ and the interaction sequence $\mathcal{H}^{o}_{t-1}$ at time $t-1$ is a trigger sequence concerning $K$ at time $t$, and $\{j_{s}\}_{1 \leq s \leq t-1}$ is its structure. From the learning algorithm, we know that the learner will take "special action" in time $t$ and get the pulled arm $I_{t}$. As a consequence $\exists j_{t}$ such that $\mathcal{H}^{o}_{t} = \mathcal{H}^{o}_{t-1}\cup\{(c^{k}_{tj_{t}},I_{t})\}$ is also a trigger sequence concerning $k$ and $t$. Then the attacker replaces $r_{t}^{0}$ by $r_{t}=c^{K}_{tj_{t}}$. But if such $K$ or trigger sequence $\mathcal{H}^{o}_{t-1}$ does not exist, the attacker will not attack at time $t$. In other words, the attacker only attacks when time $t \in \mathcal{T}$ and $\mathcal{H}^{o}_{t-1}$ is a trigger sequence, and will change the reward at time $t$ to make $\mathcal{H}^{o}_{t}$ also a trigger sequence. 

Thus, for $t \in \mathcal{T}$, once the condition for triggering the "special action" at time $t$ is satisfied (that is, $\mathcal{H}^{o}_{t-1} \in P_{k}$), the attacker can then modify the reward value of the arm pulled in round $t$ such that the condition for triggering the "special action" at time $t+1$ is satisfied (that is, $\mathcal{H}^{o}_{t} \in P_{k}$). At the same time, when $t \notin \mathcal{T}$and $\mathcal{H}^{o}_{t-1} \in P_{k}$, we know by the learning algorithm that there must be $\mathcal{H}^{o}_{t} \in P_{k}$ at this time. Then as long as the attack method is followed from the first round, the learner will always take the "special action". This makes our attack effective as well. We will give a more formal proof of this later.

Now, let's set $\mathcal{T}=\{2^{s}, s \geq 0\}$ then $|\mathcal{T}_{t}| \leq 1+\log_{2} t$, which fills the final blanks in the learner's algorithm and the attacker's method. Given any environment $(F_{1},\cdots,F_{N})$, we prove that for any sub-optimal arm $i$ we have $E(N_{i}(T)) = o(T)$. 

\paragraph{The learning algorithm is effective.} We first prove that the learning algorithm is effective. For any $t$, we will bound the probability that the learner takes exactly $t$ special actions. Without loss of generality, we assume that the target arm is arm $K$. According to our algorithm, for any given interaction sequence $\mathcal{H}^{o}_{T}$ at time $T$ we can find a time after which the learner chooses to apply the UCB1 algorithm (if the learner does not apply the UCB1 algorithm the entire time, then it's $T$) and we use $t_{0}$ to denote that time for $\mathcal{H}^{o}_{T}$. Consider any given interaction sequence  $\mathcal{H}^{o}_{T}$, because of the conditions under which we take the ``special action'' in the algorithm 
we know that $t_{0} \in \mathcal{T}$, $\mathcal{H}^{o}_{t_{0}-1}$ is a trigger sequence but $\mathcal{H}^{o}_{t_{0}}$ is not a trigger sequence. In other words, there is a structure sequence $\{j_{s}\}_{1 \leq s \leq t_{0}}$ such that $\forall 1\leq t \leq t_{0}-1$ we have $r_{t}=c^{K}_{tj_{t}}$ (if $t \in \mathcal{T}$) and $I_{t}=b^{K}_{tj_{t}}$, and we also have $I_{t_{0}}=b^{K}_{t_{0}j_{t_{0}}}$ and $\mathcal{H}^{o}_{t_{0}-1}\cup\{(c^{K}_{t_{0}j_{t_{0}}}, b^{K}_{t_{0}j_{t_{0}}})\}$ is a trigger sequence but $r_{t_{0}} \neq c^{K}_{t_{0}j_{t_{0}}}$. We use $\mathcal{B}(\mathcal{H}^{o}_{T})$ to denote the structure sequence $\{j_{s}\}_{1 \leq s \leq t_{0}}$. 

According to our algorithm design, it is not hard to verify that 
$$
\begin{aligned}
\mathbb{P}(\{ \mathcal{B}(\mathcal{H}^{o}_{T})=\{j_{s}\}_{1 \leq s \leq t_{0}} \}) &= \mathbb{P}(\{I_{1}=b^{K}_{1j_{1}},\cdots ;I_{t_{0}} = b^{K}_{t_{0}j_{t_{0}}};r_{t_{0}} \neq c^{K}_{t_{0}j_{t_{0}}} \text{ and } \forall s \in \mathcal{T}_{t_{0}-1}\enspace r_{s}=c^{K}_{sj_{s}}\}) \\
&=(\frac{1}{2})^{|\mathcal{T}_{t_{0}}|-1}\mathbb{P}(r_{t_{0}} \neq c^{K}_{t_{0}j_{t_{0}}} \mid I_{t_{0}}=b^{K}_{t_{0}j_{t_{0}}}) \prod_{s \in \mathcal{T}_{t_{0}-1}}\mathbb{P}( r_{s} = c^{K}_{sj_{s}} \mid I_{s}=b^{K}_{sj_{s}}) \\
&\leq (\frac{1}{2})^{|\mathcal{T}_{t_{0}}|-1} \mathbb{P}( r_{\frac{t_{0}}{2}} = c^{K}_{\frac{t_{0}}{2}j_{\frac{t_{0}}{2}}} \mid I_{\frac{t_{0}}{2}}=b^{K}_{\frac{t_{0}}{2} j_{\frac{t_{0}}{2}}})
\end{aligned}
$$
We use $N^{'}_{i}(T)$ to denote the number of times arm $i$ is pulled up to round $T$, when the learner chooses to apply the UCB1 procedure at the beginning of the game. Then for $\mathcal{H}^{o}_{T}$ such that $\mathcal{B}(\mathcal{H}^{o}_{T})=\{j_{s}\}_{1 \leq s \leq t_{0}}$ we have 
$$
N_{i}(T) \leq t_{0}+N^{'}_{i}(T-t_{0}) \leq t_{0}+N^{'}_{i}(T)
$$
Also, notice that for $t>2$ and $t \in \mathcal{T}$
$$
\begin{aligned}
\mathbb{P}(\{t_{0} = t\}) \leq& \sum_{\{j_{s}\}_{1 \leq s \leq t} \text{is a structure sequence}}\mathbb{P}(\{\mathcal{B}(\mathcal{H}^{o}_{T})=\{j_{s}\}_{1 \leq s \leq t}\}) \\
\leq& (\frac{1}{2})^{|\mathcal{T}_{t}|-1}\sum_{\{j_{s}\}_{1 \leq s \leq t}\text{is a structure sequence}}\mathbb{P}( r_{\frac{t}{2}} = c^{K}_{\frac{t}{2}j_{\frac{t}{2}}} \mid I_{\frac{t}{2}}=b^{K}_{\frac{t}{2} j_{\frac{t}{2}}})\\
\leq& (\frac{1}{2})^{|\mathcal{T}_{t}|-1}\sum_{\{j_{s}\}_{1 \leq s \leq t}\text{is a structure sequence}, b^{K}_{\frac{t}{2} j_{\frac{t}{2}}} = K-1}\mathbb{P}( r_{\frac{t}{2}} = c^{K}_{\frac{t}{2}j_{\frac{t}{2}}} \mid I_{\frac{t}{2}}=K-1)  \\&+(\frac{1}{2})^{|\mathcal{T}_{t}|-1}\sum_{\{j_{s}\}_{1 \leq s \leq t}\text{is a structure sequence}, b^{K}_{\frac{t}{2} j_{\frac{t}{2}}}=K}\mathbb{P}( r_{\frac{t}{2}} = c^{K}_{\frac{t}{2}j_{\frac{t}{2}}} \mid I_{\frac{t}{2}}=K) \\
\leq& (\frac{1}{2})^{|\mathcal{T}_{t}|-2}\sum_{\{j_{s}\}_{1 \leq s \leq \frac{t}{2}}\text{is a structure sequence}, b^{K}_{\frac{t}{2} j_{\frac{t}{2}}} = K-1}\mathbb{P}( r_{\frac{t}{2}} = c^{K}_{\frac{t}{2}j_{\frac{t}{2}}} \mid I_{\frac{t}{2}}=K-1)  \\&+(\frac{1}{2})^{|\mathcal{T}_{t}|-2}\sum_{\{j_{s}\}_{1 \leq s \leq \frac{t}{2}}\text{is a structure sequence}, b^{K}_{\frac{t}{2} j_{\frac{t}{2}}}=K}\mathbb{P}( r_{\frac{t}{2}} = c^{K}_{\frac{t}{2}j_{\frac{t}{2}}} \mid I_{\frac{t}{2}}=K) \\
\end{aligned}
$$
Notice that when $t>1$ for any structure sequence $\{j_{s}\}_{1 \leq s \leq t} \neq \{j^{'}_{s}\}_{1 \leq s \leq t}$ we have $j_{t} \neq j^{'}_{t}$. Therefore we have 
$$
\begin{aligned}
&(\frac{1}{2})^{|\mathcal{T}_{t}|-2}\sum_{\{j_{s}\}_{1 \leq s \leq \frac{t}{2}}\text{is a structure sequence}, b^{K}_{\frac{t}{2} j_{\frac{t}{2}}} = K-1}\mathbb{P}( r_{\frac{t}{2}} = c^{K}_{\frac{t}{2}j_{\frac{t}{2}}} \mid I_{\frac{t}{2}}=K-1)  \\
&+(\frac{1}{2})^{|\mathcal{T}_{t}|-2}\sum_{\{j_{s}\}_{1 \leq s \leq \frac{t}{2}}\text{is a structure sequence}, b^{K}_{\frac{t}{2} j_{\frac{t}{2}}}=K}\mathbb{P}( r_{\frac{t}{2}} = c^{K}_{\frac{t}{2}j_{\frac{t}{2}}} \mid I_{\frac{t}{2}}=K) \\
= &
(\frac{1}{2})^{|\mathcal{T}_{t}|-2}\mathbb{P}(\bigcup_{\{j_{s}\}_{1 \leq s \leq \frac{t}{2}}\text{is a structure sequence}, b^{K}_{\frac{t}{2} j_{\frac{t}{2}}} = K-1}\{r_{\frac{t}{2}} = c^{K}_{\frac{t}{2}j_{\frac{t}{2}}} \}\mid I_{\frac{t}{2}}=K-1)  \\
&+(\frac{1}{2})^{|\mathcal{T}_{t}|-2}\mathbb{P}(\bigcup_{\{j_{s}\}_{1 \leq s \leq \frac{t}{2}}\text{is a structure sequence}, b^{K}_{\frac{t}{2} j_{\frac{t}{2}}} = K-1}\{r_{\frac{t}{2}} = c^{K}_{\frac{t}{2}j_{\frac{t}{2}}} \}\mid I_{\frac{t}{2}}=K) \\
\leq& (\frac{1}{2})^{|\mathcal{T}_{t}|-2} + (\frac{1}{2})^{|\mathcal{T}_{t}|-2} \\
=& (\frac{1}{2})^{|\mathcal{T}_{t}|-3}\\
\leq& (\frac{1}{2})^{\log_{2} t -2} = \frac{4}{t}
\end{aligned}
$$

As a result, for $\forall t > 2$ and $t \in \mathcal{T}$,
$$
\begin{aligned}
\mathbb{P}(\{t_{0} = t\}) \leq \frac{4}{t}
\end{aligned}
$$
Then for any sub-optimal arm $i$ we can bound $E(N_{i}(T))$ as follows:
$$
\begin{aligned}
E(N_{i}(T)) &=\sum_{t \in \mathcal{T}_{T}}\mathbb{P}(\{t_{0} = t\})E(N_{i}(T) | \{t_{0} = t\}) \\
&\leq \sum_{t \in \mathcal{T}_{T}, t>2}(t+E(N^{'}_{i}(T)))\frac{4}{t} + 3 + 2E(N^{'}_{i}(T))\\
&\leq 6E(N^{'}_{i}(T))+4|\mathcal{T}_{T}| + 3\\
&\leq 6E(N^{'}_{i}(T))+4(1+\log_{2} T) +3\\
&= \Theta(\log T)
\end{aligned}
$$

Finally, given any environment $(F_{1},\cdots, F_{N})$, for any target arm $K \in \{1,\cdots, N\}$, we prove that with probability at least $1-\eta$, the attacker can force the learner choose the target arm for $T-O(\log T)$ times with a cumulative attack cost at most $O(\log T \sqrt{\log (\frac{1}{\eta})})$. And we prove the attack method is stealthy.

\paragraph{The attack method will always succeed.} Now we give formal proof to show that under the attack, the learner will always take "special action". Preliminary explanations have been mentioned at the end of the section on constructing the attack methods. Suppose the target arm is arm $K$. By induction, we only need to prove that $\forall t>1$ there exists a trigger sequence concerning $K$ at time $t$. 

First of all, because $r_{1}=r'=c^{K}_{11}$, the learner will choose to play arm $K$ or arm $K-1$ with equal probability at time $2$. Let $k=K$ and $\{j_{1},j_{2}\}=\{1,1\}~(\text{if } I_{2}=K) \text{ or } \{1,2\}~(\text{if } I_{2}=K-1)$, we have $j_{1}=1,j_{2}\in\{2j_{1},2j_{1}-1\},r_{1}=c^{K}_{11},I_{1}=1=b^{K}_{1j_{1}}$ and $I_{2}=b^{K}_{2j_{2}}$, so $\{j_{s}\}_{1 \leq s \leq 2}$ is the trigger sequence concerning $K$ at time $t=2$. So the proposition holds for $t=2$.

Now suppose it holds for $t=\Tilde{t}$. Then we can find the trigger sequence $\mathcal{H}^{o}_{\Tilde{t}}$ concerning $K$ at time $\Tilde{t}$ and its structure is
$\{j_{s}\}_{1 \leq s \leq \Tilde{t}}$. If $\Tilde{t} \notin \mathcal{T}$, the learner will play arm $K$. Let $j_{\Tilde{t}+1}=j_{\Tilde{t}}$, then we have $I_{\Tilde{t}+1}=K=b^{K}_{(\Tilde{t}+1)j_{\Tilde{t}+1}}$. So $\mathcal{H}^{o}_{t}\cup\{b^{K}_{(\Tilde{t}+1)j_{\Tilde{t} + 1}}\}$ is also a trigger sequence concerning $K$ at time $\Tilde{t}+1$. If $\Tilde{t} \in \mathcal{T}$, from the attack method we know $r_{\Tilde{t}}=r^{'}_{\Tilde{t}}=c_{\Tilde{t}}j_{\Tilde{t}}$. So from the algorithm, we know that the learner will choose to play arm $K$ or arm $K-1$ with equal probability. If we define $j_{\Tilde{t} + 1} = 2 j_{\Tilde{t}}-1 (\text{if } I_{\Tilde{t} + 1} = K)$ or $2 j_{\Tilde{t}}  (\text{if } I_{\Tilde{t} + 1} = K - 1) $, it means $I_{\Tilde{t} + 1} = b^{K}_{(\Tilde{t}+1)j_{\Tilde{t} + 1}} $, and $\mathcal{H}^{o}_{\Tilde{t}}\cup\{(c^{K}_{(\Tilde{t}+1)j_{\Tilde{t}+1}},b^{K}_{(\Tilde{t}+1)j_{\Tilde{t} + 1}})\}$ is also a trigger sequence concerning $K$ and $\Tilde{t}+1$. Therefore the proposition holds for $t=\Tilde{t}+1$.

Consequently, by the principle of mathematical induction, the proposition holds. Hence, the learner will choose to play arm $K$ when $t-1 \notin \mathcal{T}$, which lead to $N_{K}(T) \geq T-|\mathcal{T}_{T}| \geq T-O(\log T)$. 

\paragraph{The attack method is stealthy.} Next, for $\eta \in (0,1)$ suppose $E_{\eta}$ we defined in Section \ref{section 4} holds. From the above sequence and attack method, we know that for any time $t$ and any non-target arm $i$ if $I_{t} = i$ then $\exists j_{t} \in \{1,\cdots,2^{t-1}\}$ such that the post-attack reward $r_{t} = c^{k}_{tj_{t}}$. Then $\forall 1 \leq t_{1} < t_{2} \leq T $, from the definition of $c^{k}_{tj_{t}}$ we know that  
$$
\begin{aligned}
|\hat{\mu}_{i}(t_{1})-\hat{\mu}_{i}(t_{2})|&=\left|N_{i}(t_{1})^{-1} \sum_{\left\{s: s \le t_1, I_{s}=i\right\}} \frac{\alpha}{s^{2}}-N_{i}(t_{2})^{-1} \sum_{\left\{s: s \le t_2, I_{s}=i\right\}} \frac{\alpha}{s^{2}}\right|\\
&\leq \alpha(N_{i}\left(t_{1})^{-1}\sum_{s=1}^{N_{i}(t_{1})} \frac{1}{s^{2}}+N_{i}(t_{2})^{-1}\sum_{s=1}^{N_{i}(t_{2})} \frac{1}{s^{2}}\right)\\
&< \frac{\alpha \pi^{2}}{6}(N_{i}(t_{1})^{-1}+N_{i}(t_{2})^{-1})\\
&\leq \beta(N_{i}(t_{1}))+\beta(N_{i}(t_{2}))
\end{aligned}
$$
The last inequality holds because $\forall n \geq 1$ we have $\frac{\alpha \pi^{2}}{6}=\sqrt{2 \sigma^{2} \log \frac{\pi^{2} N }{3}} < \sqrt{2 \sigma^{2} \log \frac{\pi^{2} N }{3 \delta}}\leq \sqrt{2 \sigma^{2}n \log \frac{\pi^{2} N n^{2}}{3 \delta}} = n \beta(n)$. 
As a result, the attack will not be detected in this case. For target arm $K$, there will be no attack. From the above results, we have proved that the attack will not be detected with a probability higher than $\eta$.

Finally, suppose $E_{\eta}$ holds, because the attacker only attacks when $t \in \mathcal{T}$, the cumulative attack cost is no more than $(\max_{i,k}|\mu_{i}-c^{k}_{11}|+\beta(1)+\beta(1,\eta))|\mathcal{T}_{T}|=O(\log T \sqrt{\log (\frac{1}{\eta})})$, with probability at least $1-\eta$.
\end{proof}

\subsection{Proof of Theorem \ref{theorem: EARR}}
\begin{proof}
Given any $\eta \in (0,1)$ and environment $(F_{1},\cdots,F_{N})$, we set $M(T)=\max \left\{  \frac{2 C_{K}}{\Delta^{0}_{1K}-2\beta(1)} , \frac{16 \sigma^{2} \ln T}{(\Delta^{0}_{1K}-2\beta(1))^{2}}\right\} $.
Notice that because the attack in the history of pulls is not detected, $\forall 1 \leq j \leq N,  \bigcap_{t=1}^{T}(\hat{\mu}_{j}(N_{j}(t))-\beta\left(N_{j}(t)\right),\hat{\mu}_{j}(N_{j}(t))+\beta\left(N_{j}(t)\right)) \neq \emptyset$, therefore $\forall 1 \leq j \leq N$ we can find a $\mu_{j}^{'} \in \bigcap_{t=1}^{T}(\hat{\mu}_{j}(N_{j}(t))-\beta\left(N_{j}(t)\right),\hat{\mu}_{j}(N_{j}(t))+\beta\left(N_{j}(t)\right))$.  
And therefore
$$
\begin{aligned}
&\mathbb{P}(\mu^{'}_{K} > \mu^{'}_{1}-\nu \Delta | N_{K}(T)\geq M(T) ) \\  
\leq &
\mathbb{P}(\hat{\mu}_{K}(T)+\beta(N_{K}(T)) > \hat{\mu}_{1}(N)-\beta(1)-\nu \Delta | N_{K}(T)\geq M(T) ) \\
\leq & \mathbb{P}(\hat{\mu}^{0}_{K}(T)+\beta(1)+\frac{\Delta}{2}>\hat{\mu}_{1}(N)-\beta(1)-\nu \Delta | N_{K}(T) \geq M(T)) \\
= &\mathbb{P}(\hat{\mu}^{0}_{K}(T)-\mu_{K} > (\frac{1}{2}-\nu)\Delta_{g} | N_{K}(T)\geq M(T)) \\
\leq &\frac{1}{T^{8(\frac{1}{2}-\nu)^2}}
\end{aligned}
$$
The last inequality is based on Lemma \ref{lemma: hoeffding} and the second last inequality is because $\hat{\mu}_{K}(t-1)-\hat{\mu}_{K}^{0}(t-1) = \frac{C_{K}}{N_{K}(t-1)} = \frac{C_{K}}{M(T)} \leq \frac{\Delta}{2} $. 
For $\forall 1 \leq j \leq N$, we can regard $\mu_{j}^{'}$ as the mean reward of arm $j$ and in this case, there is no attack on the reward (as the attacker does not attack for the first pull of each arm). Therefore, if $N_{K}(T)\geq M(T)$ we know that  $\mathbb{P}(\mu^{'}_{i}-\nu \Delta \geq \mu^{'}_{K}) \geq 1-\frac{1}{T^{8(\frac{1}{2}-\nu)^2}}$. \\
Notice that $\forall j \text{ and } 1 \leq t \leq T,  \hat{\mu}_{j}(t) \in (\mu_{j}^{'}-\beta\left(N_{j}(t)\right),\mu_{j}^{'}+\beta\left(N_{j}(t)\right)) \subset (\mu_{j}^{'}-l\left(N_{j}(t)\right),\mu_{j}^{'}+l\left(N_{j}(t)\right)) $. Now by applying the definition of the EARR algorithm, we can find a function $g(t,\eta): \frac{g(t,\eta)}{t} \to 0  \text{ as } t \to  \infty$ such that with probability at least $1-\eta$ we have 
$$ N_{K}(T)\max_{1\leq j \leq N}\{\mu_{j}^{'}-\mu_{K}^{'}\} \leq R_{T} \leq g(T,\eta)
$$ 
Because $\max_{1\leq j \leq N}\{\mu_{j}^{'}-\mu_{K}^{'}\}  > \mu^{'}_{1}-\mu^{'}_{K} \geq  \nu \Delta $, we have
$$
N_{K}(T) \leq \frac{g(T,\eta)}{\nu\Delta}
$$
In summary, we finally conclude that with probability at least $1-\eta-\frac{1}{T^{8(\frac{1}{2}-\nu)^2}}$
$$
N_{K}(T) \leq \max \{ \dfrac{g(T,\eta)}{\nu  \Delta},\frac{16 \sigma^{2} \ln T}{\Delta^{2}},\frac{2 C(T)}{\Delta}\}
$$
Then we complete the proof.
\end{proof}

\subsection{Proof of Lemma \ref{lemma: ERR}}
\begin{proof}
We make two adjustments to the algorithm given in the proof of Lemma \ref{lemma: construction}. 

For $t=1$, if there does not exist a $k$ such that $r_{1}=c_{11}^{k}$, then the learner \textbf{keeps} the history information it has so far and chooses to apply the UCB1 algorithm after time $t$.

For $t>1$ and $t\in \mathcal{T}$, suppose there is a integer $k$ and the trigger sequence with concerning $k$ at time $t$ which has structure $\{j_{s}\}_{1 \leq s \leq t}$. If $r_{t} \neq c^{k}_{tj_{t}}$, the learner \textbf{keeps} the history information it has so far and chooses to apply the UCB1 algorithm after time $t$.  

The rest of the algorithm and the attack method are the same as that given in the proof of Lemma \ref{lemma: construction}. Similarly to the proof of Lemma \ref{lemma: construction}, it is easy to verify that 1) our algorithm is still well-defined, in other words, the learning algorithm can always tell the learner what action to take, 2) the attack method is stealthy, and 3) given any environment $(F_{1},\cdots, F_{N})$ for any target arm $K \in \{1,\cdots, N\}$ with probability at least $1-\eta$, the attacker can force the learner to choose the target arm for $T-O(\log T)$ times with a cumulative attack cost at most $O(\log T \sqrt{\log (\frac{1}{\eta})})$.

Now we only need to prove that this algorithm is an ERR algorithm. Given any $\eta \in (0, 1)$ and environment $(F_{1},\cdots,F_{N})$, define $\mu_{i_{*}} = \max_{i} \mu_{i}$ and $\Delta_{i} = \mu_{i_{*}} - \mu_{i}$. For any arm $j$, if $t>(\frac{\pi^{2} N}{3\eta})^{\frac{2}{5}}$ we have $ 3\sigma\sqrt{\frac{\log t}{N_{j}(t)}}>\beta(N_{j}(t),\eta) $. Consider an interaction sequence $\mathcal{H}^{o}_{T}$ such that $\forall i, \forall N \leq t \leq T,  \hat{\mu}_{i}(t) \in (\mu_{i}-\beta\left(N_{i}(t),\eta\right),\mu_{i}+\beta\left(N_{i}(t),\eta\right))
$ and $\mathcal{B}(\mathcal{H}_{T})=\{j_{s}\}_{1 \leq s \leq t_{0}}$. For any sub-optimal arm $i$ suppose $t>\max\{t_{0},(\frac{\pi^{2} N}{3\eta})^{\frac{2}{5}}\}$ and $N_{i}(t-1) \geq \frac{36\sigma^{2}\log T}{\Delta^{2}_{i}}$ we have the following
$$
\begin{aligned}
\hat{\mu}_{i_{*}}(t-1)+3\sigma\sqrt{\frac{\log t}{N_{i_{*}}(t-1)}} 
&> \mu_{i_{*}}-\beta(N_{i_{*}}(t-1),\eta)+3\sigma\sqrt{\frac{\log t}{N_{i_{*}}(t-1)}} \\
&> \mu_{i_{*}}\\
&> \mu_{i}+6\sigma\sqrt{\frac{\log t}{N_{i}(t-1)}} \\
&> \mu_{i}+\beta(N_{i}(t-1),\eta)+3\sigma\sqrt{\frac{\log t}{N_{i}(t-1)}} \\
&> \hat{\mu}_{i}(t-1)+ 3\sigma\sqrt{\frac{\log t}{N_{i}(t-1)}}
\end{aligned}
$$
This means the learner will not play arm $i$ in this case. Therefore we have $N_{i}(T) \leq \max\{\frac{36\sigma^{2}\log T}{\Delta^{2}_{i}},(\frac{\pi^{2} N}{3\eta})^{\frac{2}{5}},t_{0}\} + 1$. 

We further notice that
$$
\begin{aligned}
\mathbb{P}(\{t_{0}>\log T\}) &= \sum_{t \in \mathcal{T}_{T}, t > \log T}\mathbb{P}(\{t_{0} = t\}) \\
&\leq \sum_{t \in \mathcal{T}_{T}, t > \log T}\frac{1}{2t}\\
&= \sum_{ \log_{2}\log T < s \leq \log_{2}T }\frac{1}{2^{s+1}}\\
&\leq \frac{1}{2^{\log_{2}\log T}} 
=\frac{1}{\log T}
\end{aligned}
$$
Therefore, we have $\mathbb{P}(\{t_{0} \leq \log T\}) > 1-\frac{1}{\log T} > 1-\eta$ for $T>e^{\frac{1}{\eta}}$. If $t_{0} \leq \log T$ we have $R_{T} = \sum_{i}^{N}N_{i}(T)\Delta_{i} \leq \sum_{i=1}^{N}(\max\{\frac{36\sigma^{2}\log T}{\Delta_{i}},(\frac{\pi^{2} N }{3\eta})^{\frac{2}{5}}\Delta_{i},\Delta_{i}\log T\}+1)$. Set $l(n,\eta) = \beta(n,\eta)$, $g(t,\eta)=\sum_{i=1}^{N}\max\{\frac{36\sigma^{2}\log t}{\Delta_{i}},(\frac{\pi^{2} N }{3\eta})^{\frac{2}{5}}\Delta_{i},\Delta_{i}\log t\} + N$.
This is because $\eta$ and $\Delta_{i}$ is given and we know $g(t, \eta)$ is well-defined. Then for $T>e^{\frac{1}{\eta}}$ with probability at least $1-\eta$ we have $R_{T} \leq g(T,\eta)$. Hence, this algorithm is an ERR algorithm.
\end{proof}

\section{Supplementary Experimental Results}\label{appendix C}

\begin{figure*}[htbp]
    \centering
    \begin{tabular}{c c}
     \includegraphics[width=5cm]{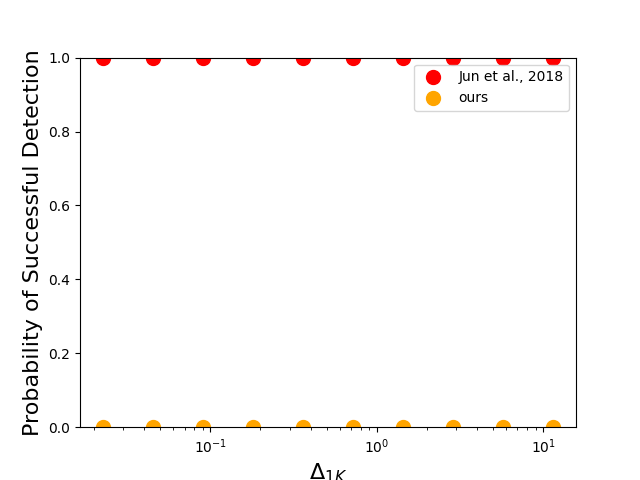} & 
     \includegraphics[width=5cm]{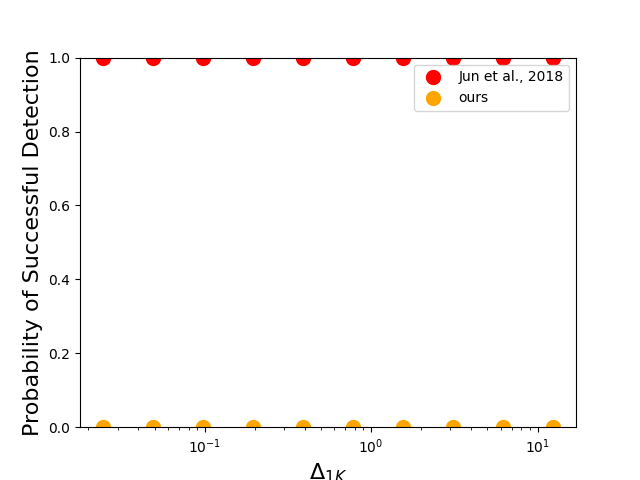}\\
    (a) $N = 10, T = 10000$ &
    (b) $N = 30, T = 20000$
    \end{tabular}
    \caption{Probability of successful detection under \cite{jun2018adversarial}'s attack method when $\epsilon$-greedy is the victim algorithm.}
    \label{fig:p-detect-egreedy}
\end{figure*}

 \begin{figure*}[htbp]
    \centering
    \begin{tabular}{c c}
     \includegraphics[width=5cm]{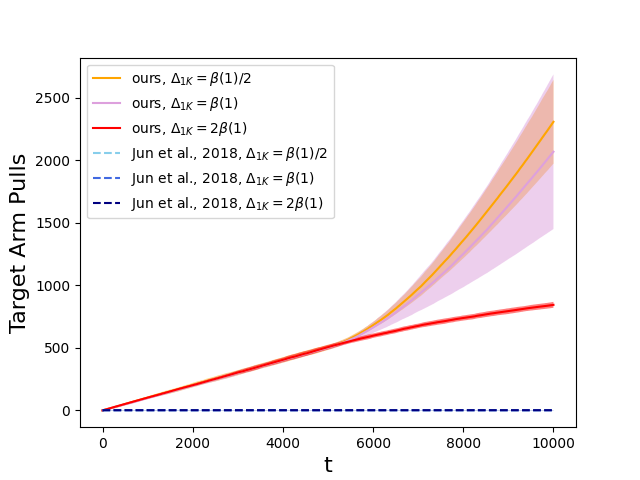} &
     \includegraphics[width=5cm]{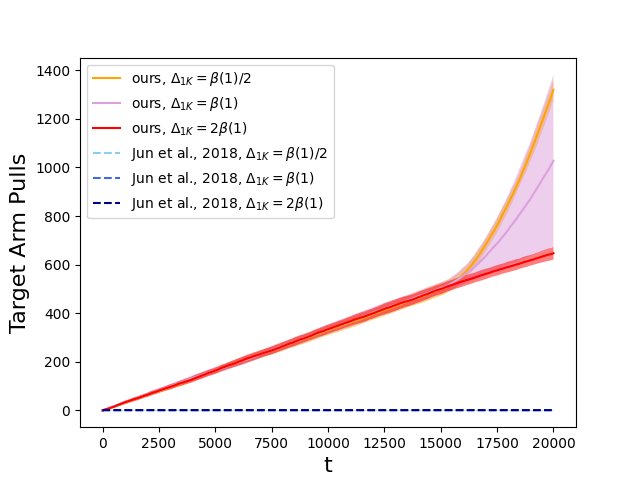} \\
    (a) $N = 10, T = 10000$. &
    (b) $N = 30, T = 20000$.
    \end{tabular}
    \caption{Target arm pulls under different attack methods when $\epsilon$-geedy is the victim algorithm.}
   
\label{fig:armpull-egreedy}
\end{figure*}

In this section, we present supplementary experimental results. 

Figure \ref{fig:p-detect-egreedy} and Figure \ref{fig:armpull-egreedy} show the experiment results for $\epsilon$-greedy which have been discussed in Section \ref{sec:exp}. 

The results shown in Figure \ref{fig:armpull givenreward-UCB1} and \ref{fig:armpull givenreward-egreedy} more clearly demonstrate the influence of random variable $\hat{\mu}_{1}(N)$ on the environment's attackability for UCB1 and $\epsilon$-greedy. Here we consider the case where $\Delta^{0}_{1K} = \hat{\mu}_{1}(N)-\mu_{K}$ deviates from $\beta(1)$ by $v$: $(1-v)\beta(1)$ vs., $(1+v)\beta(1)$. In our experiments, we set $v=0.2$. We can see that for our algorithm, the target arm pulls can increase linearly with $T$, when the value of $\Delta^{0}_{1K}$ is less than $\beta(1)$. But when the value of $\Delta^{0}_{1K}$ is greater than $\beta(1)$, our algorithm fails to achieve the goal. We should note that this is a result of detection against reward poisoning attacks, rather than a problem with the design of our algorithm. These results are exactly predicted by Lemma \ref{lemma: epsilon unattackable} and \ref{lemma: UCB1 unattackable}, and Theorem \ref{theorem: UCB1 attack} and \ref{theorem: epsilon attack}. 

\begin{figure*}[htbp]
    \centering
    \begin{tabular}{c c}
     \includegraphics[width=5cm]{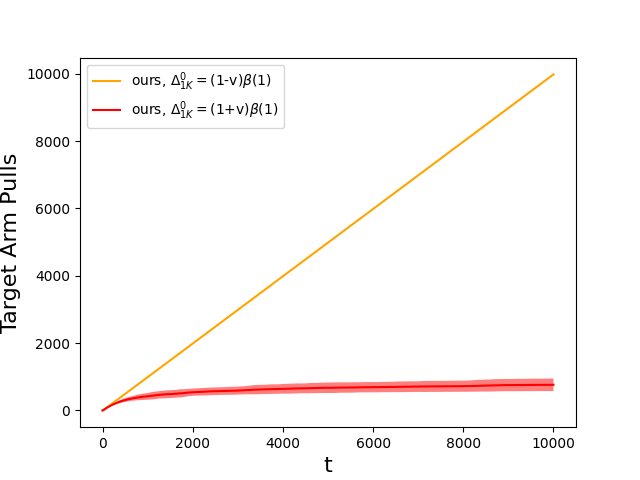} &
     \includegraphics[width=5cm]{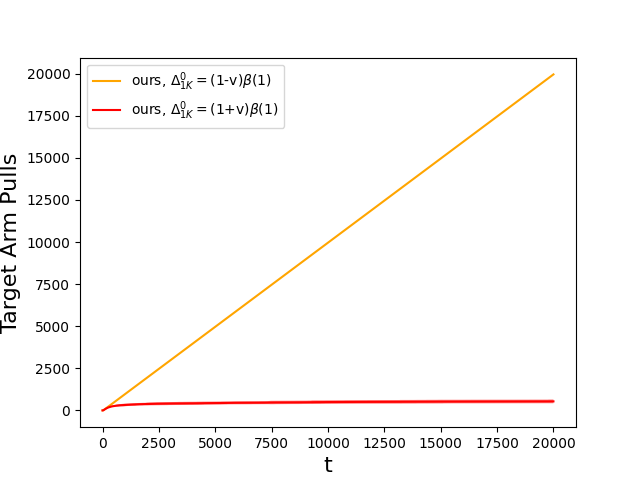} \\
     (a) $N = 10, T = 10000$. &
     (b) $N = 30, T = 20000$.
    \end{tabular}
    \caption{Target arm pulls under different conditions when UCB1 is the victim algorithm.}
    
    \label{fig:armpull givenreward-UCB1}
\end{figure*}

\begin{figure*}[htbp]
    \centering
    \begin{tabular}{c c}
     \includegraphics[width=5cm]{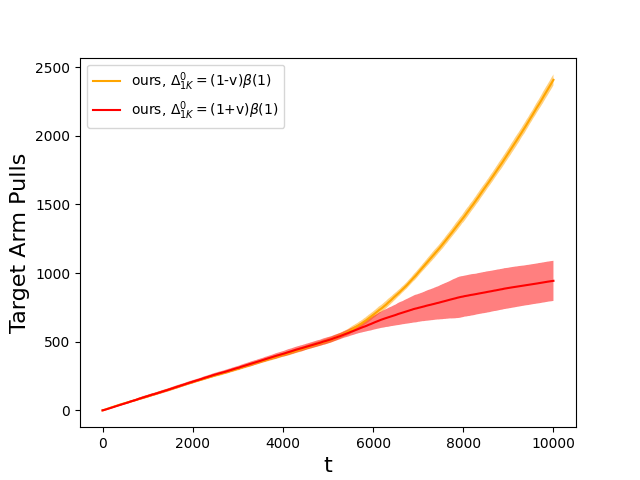} &
     \includegraphics[width=5cm]{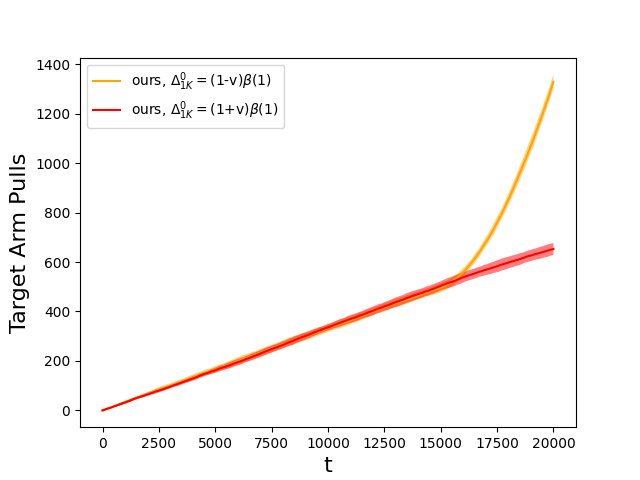} \\
     (a) $N = 10, T = 10000$. &
     (b) $N = 30, T = 20000$.
    \end{tabular}
    \caption{Target arm pulls under different conditions when $\epsilon$-greedy is the victim algorithm.}\label{fig:armpull givenreward-egreedy}
\end{figure*}

It is also interesting to investigate the detection time under the two attack methods. We can see that the attack on the UCB1 algorithm will be quickly and successfully detected after all arms have been played once. In addition, we can see that for the $\epsilon$-greedy algorithm, the distribution of attack detection time is a little more dispersed, but mainly concentrated at the moment after all arms have been played once. The reason why successful detection takes place so early in the baseline is that it pulls down the empirical average reward of the non-target arm by a large amount, which makes this attack very easy to be detected. For $\epsilon$-greedy, because instead of choosing the best-performing arm, the choice of arm is random when $T$ is smaller than $500N$, the magnitude of the attack is relatively smaller, and thus the time for the attack to be detected will be relatively more scattered.

\begin{figure*}[htbp]
    \centering
    \begin{tabular}{c c c c}
    \begin{small}(N, T) = (10,10000) \end{small}&
     \includegraphics[width=3cm]{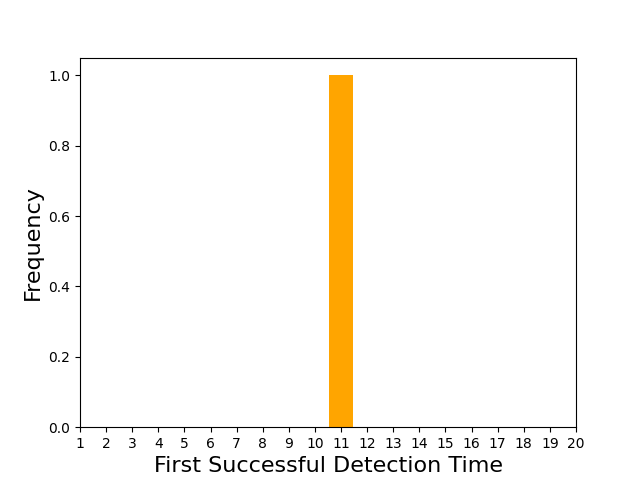} &
     \includegraphics[width=3cm]{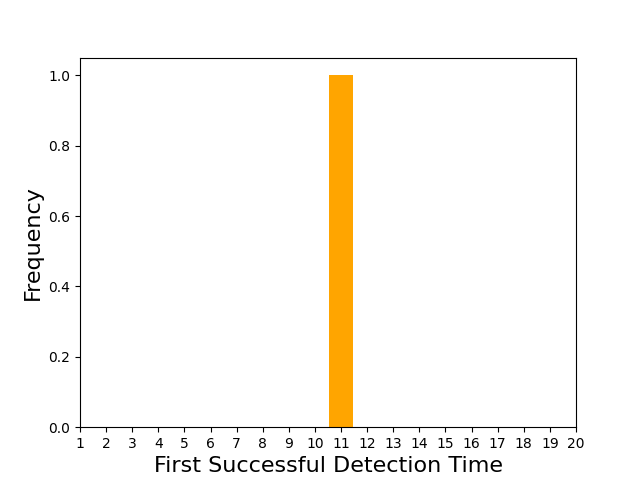} &
     \includegraphics[width=3cm]{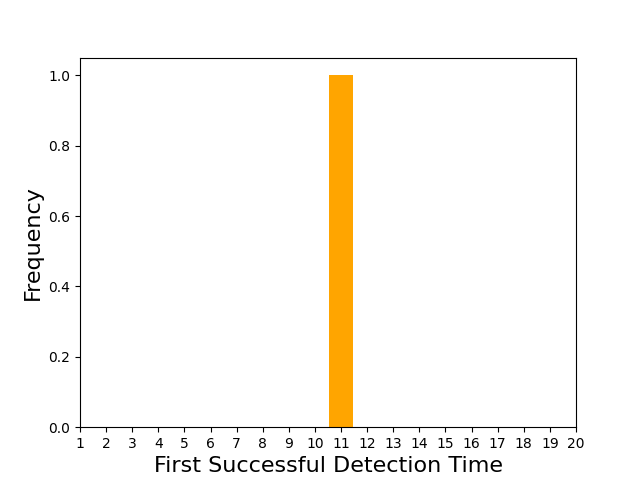}
     \\

     \begin{small}(N, T) = (30,20000) \end{small}&
     \includegraphics[width=3cm]{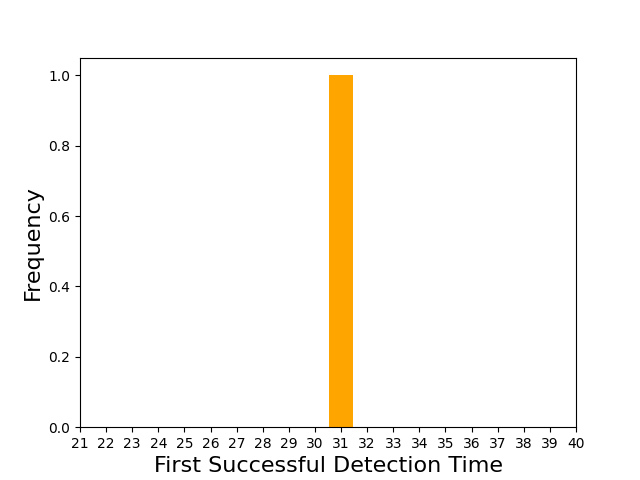} &
     \includegraphics[width=3cm]{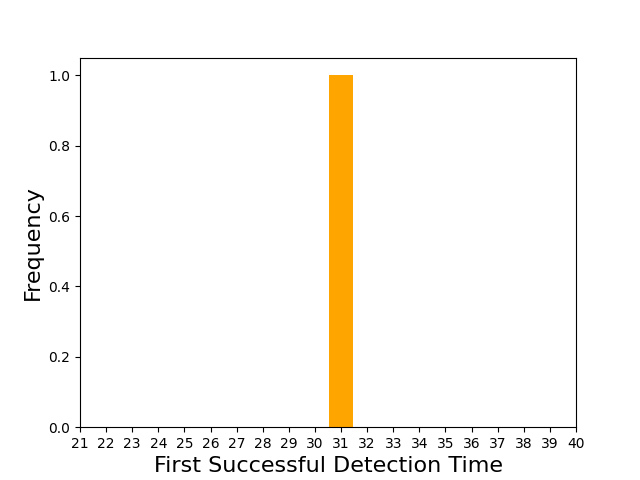} &
     \includegraphics[width=3cm]{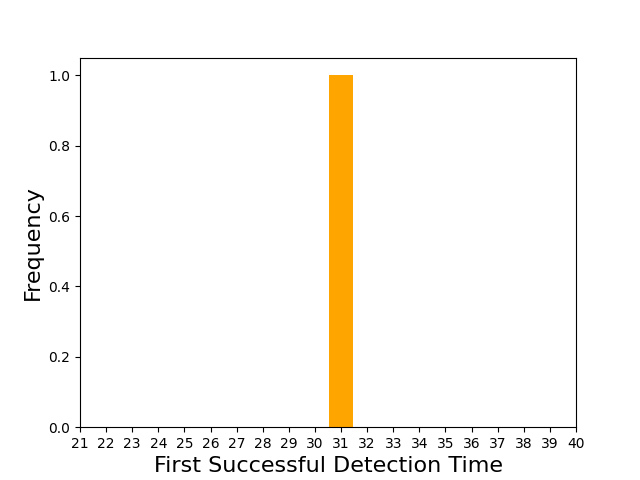}
     \\
     &
     (a) $\Delta_{1K}=\beta(1)/2$ &
     (b) $\Delta_{1K}=\beta(1)$ & 
     (c) $\Delta_{1K}=2\beta(1)$
    \end{tabular}
    \caption{The distribution of the first successful detection time under different $\Delta_{1K}$  and victim algorithm UCB1. }
    \label{fig:detect-time-ucb1}
\end{figure*}

\begin{figure*}[htbp]
     \centering
     \begin{tabular}{c c c c}
    \begin{small}(N, T) = (10,10000) \end{small}&
     \includegraphics[width=3cm]{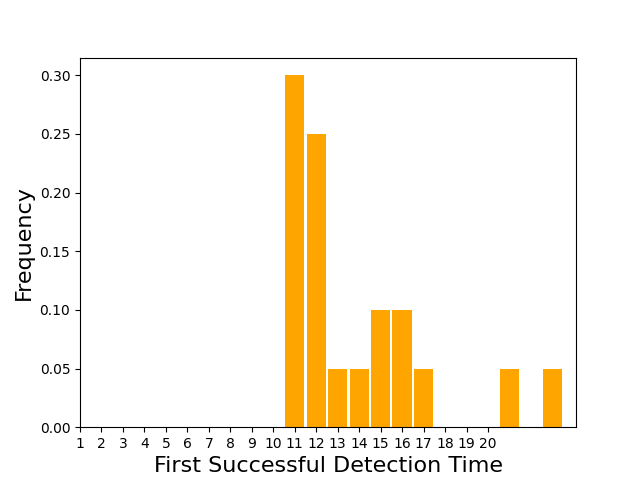} &
     \includegraphics[width=3cm]{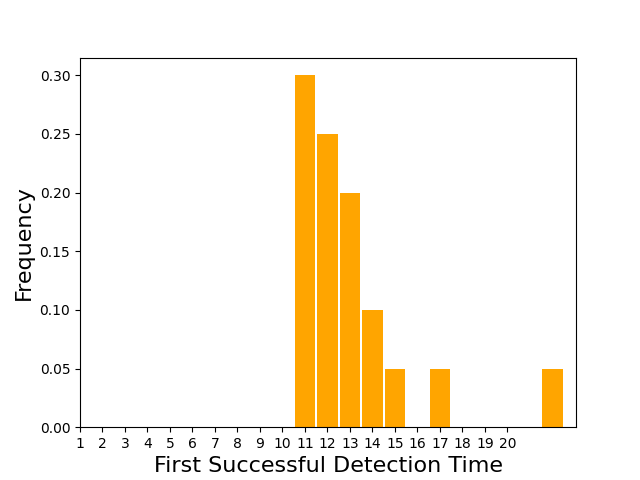} &
     \includegraphics[width=3cm]{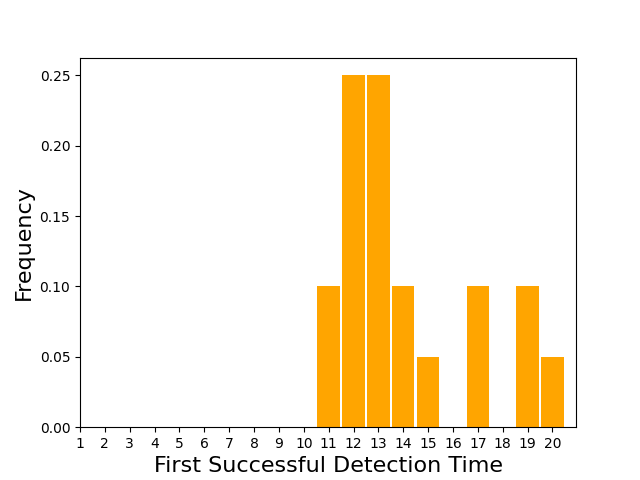} \\
    
     \begin{small}(N, T) = (30,20000) \end{small}&
    
    \includegraphics[width=3cm]{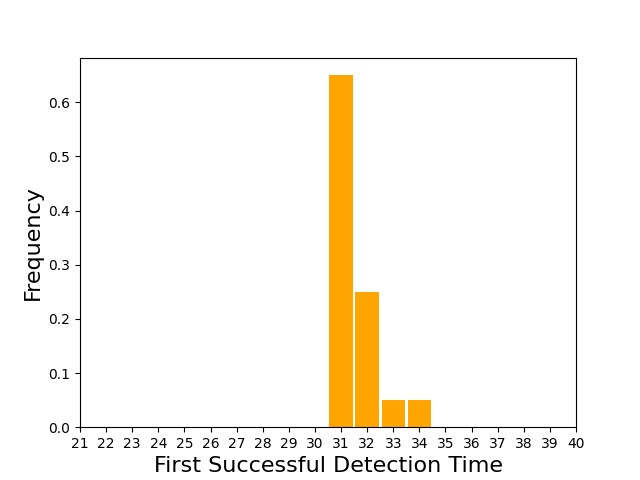} &
     \includegraphics[width=3cm]{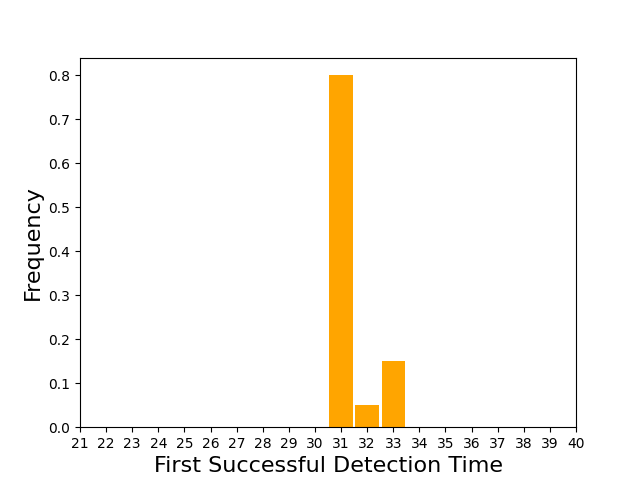} &
     \includegraphics[width=3cm]{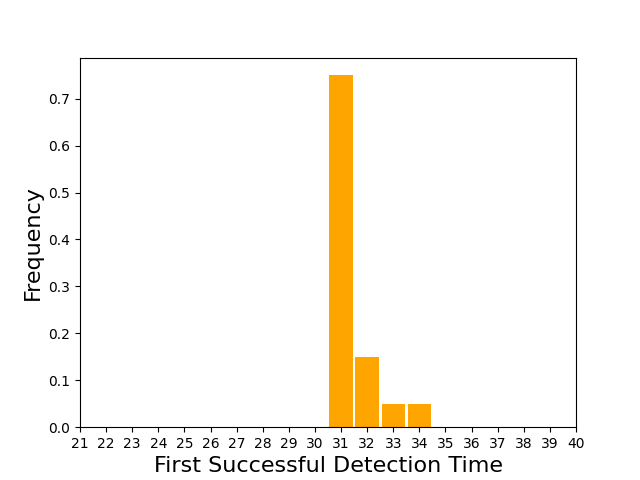} \\
     &
     (a) $\Delta_{1K}=\beta(1)/2$ & 
     (b) $\Delta_{1K}=\beta(1)$ & 
     (c) $\Delta_{1K}=2\beta(1)$
    \end{tabular}
    \caption{The distribution of the first successful detection time under different $\Delta_{1K}$ and victim algorithm $\epsilon$-greedy.}
    \label{fig:detect-time-eps}
    \vspace{-3mm}
\end{figure*}


\end{document}